%% file: icml-cam.tex
\documentclass{article}

\usepackage{microtype}
\usepackage{graphicx}
\usepackage{subcaption}
\usepackage{booktabs} % for professional tables

\usepackage{hyperref}

\usepackage[accepted]{icml2026}

\usepackage{amsmath}
\usepackage{amssymb}
\usepackage{mathtools}
\usepackage{amsthm}

\usepackage[capitalize,noabbrev]{cleveref}

\usepackage{xcolor}
\usepackage{multirow}
\usepackage{colortbl}
\usepackage{float}
\usepackage{placeins} % provides \FloatBarrier to keep floats after section titles
\usepackage{tcolorbox}
\usepackage{enumitem}

\theoremstyle{plain}
\newtheorem{theorem}{Theorem}[section]
\newtheorem{proposition}[theorem]{Proposition}
\newtheorem{lemma}[theorem]{Lemma}
\newtheorem{corollary}[theorem]{Corollary}
\theoremstyle{definition}

\newtheorem{assumption}[theorem]{Assumption}
\theoremstyle{remark}
\newtheorem{remark}[theorem]{Remark}

\usepackage[textsize=tiny]{todonotes}

\icmltitlerunning{A Kinetic Energy Perspective of Flow Matching }

\begin{document}

\twocolumn[
  \icmltitle{A Kinetic Energy Perspective of Flow Matching }
%earlier version: Kinetic Path Energy: A Physics-Inspired Diagnostic for Flow-Based Generation

  % It is OKAY to include author information, even for blind submissions: the
  % style file will automatically remove it for you unless you've provided
  % the [accepted] option to the icml2026 package.

  % List of affiliations: The first argument should be a (short) identifier you
  % will use later to specify author affiliations Academic affiliations
  % should list Department, University, City, Region, Country Industry
  % affiliations should list Company, City, Region, Country

  % You can specify symbols, otherwise they are numbered in order. Ideally, you
  % should not use this facility. Affiliations will be numbered in order of
  % appearance and this is the preferred way.
  \icmlsetsymbol{corr}{*}
  \begin{icmlauthorlist}
    \icmlauthor{Ziyun Li}{corr,kth}
    \icmlauthor{Huancheng Hu}{hpi}
    \icmlauthor{Soon Hoe Lim}{kth,nordita}
    \icmlauthor{Xuyu Li}{tcd}
    \icmlauthor{Fei Gao}{xidian}
    \icmlauthor{Enmao Diao}{dreamsoul}
    \icmlauthor{Zezhen Ding}{hkust}
    \icmlauthor{Michalis Vazirgiannis}{mbzuai,poly}
    \icmlauthor{Henrik Bostr\"{o}m}{kth}
  \end{icmlauthorlist}

  \icmlaffiliation{kth}{KTH Royal Institute of Technology}
  \icmlaffiliation{nordita}{Nordita, Nordic Institute for Theoretical Physics}
  \icmlaffiliation{hpi}{Hasso Plattner Institute, University of Potsdam}
  \icmlaffiliation{tcd}{Trinity College Dublin}
  \icmlaffiliation{xidian}{Hangzhou Institute of Technology, Xidian University}
  \icmlaffiliation{dreamsoul}{DreamSoul}
  \icmlaffiliation{hkust}{The Hong Kong University of Science and Technology}
  \icmlaffiliation{mbzuai}{Mohamed bin Zayed University of Artificial Intelligence}
  \icmlaffiliation{poly}{\'{E}cole Polytechnique}

  \icmlcorrespondingauthor{Ziyun Li}{ziyli@kth.se, liziyun2014@gmail.com}

  % You may provide any keywords that you find helpful for describing your
  % paper; these are used to populate the "keywords" metadata in the PDF but
  % will not be shown in the document
  \icmlkeywords{Machine Learning, ICML}

  \vskip 0.3in
]

% this must go after the closing bracket ] following \twocolumn[ ...

% This command actually creates the footnote in the first column listing the
% affiliations and the copyright notice. The command takes one argument, which
% is text to display at the start of the footnote. The \icmlEqualContribution
% command is standard text for equal contribution. Remove it (just {}) if you
% do not need this facility.

% Use ONE of the following lines. DO NOT remove the command.
% If you have no special notice, KEEP empty braces:
\printAffiliationsAndNotice{\textsuperscript{*}Corresponding author.}
% Or, if applicable, use the standard equal contribution text:
% \printAffiliationsAndNotice{\icmlEqualContribution}

\begin{abstract}

% submitted version
Flow-based generative models can be viewed through a physics lens: sampling transports a particle from noise to data by integrating a learned velocity field, and each sample corresponds to a trajectory with its own dynamical effort. Motivated by classical mechanics, we introduce \emph{Kinetic Path Energy (KPE)}, an action-like, per-sample diagnostic that measures the accumulated kinetic effort along an ordinary differential equation (ODE) trajectory.
Empirically, KPE exhibits two robust correspondences: (\textit{i}) higher KPE predicts stronger semantic fidelity; (\textit{ii}) high-KPE trajectories land in sparse representation regions.
We further provide theoretical guarantees linking trajectory energy to data sparsity.
Paradoxically, this correlation is non-monotonic. At sufficiently high energy, generation can degenerate into memorization.
Leveraging the closed-form formula of empirical flow matching, we show that extreme energies drive trajectories toward near-copies of training examples.
This yields a \emph{Goldilocks principle} and motivates \emph{Kinetic Trajectory Shaping (KTS)}, a training-free two-phase inference strategy that boosts early motion and enforces a late-time soft landing, reducing memorization and improving generation quality across benchmark tasks.

\end{abstract}

\input{sec/Introduction-new.tex}
\input{sec/related-work}

\input{sec/kpe.tex}
\input{sec/two-findings-revised-cam.tex}
\input{sec/memorization-revised-OPTIMIZED-cam.tex}

\input{sec/method-revised-cam.tex}
\input{sec/conclusion.tex}

% \newpage
\input{sec/impact.tex}

\input{sec/ack.tex}

% \section*{Acknowledgements}

% \textbf{Do not} include acknowledgements in the initial version of the paper
% submitted for blind review.

% If a paper is accepted, the final camera-ready version can (and usually should)
% include acknowledgements.  Such acknowledgements should be placed at the end of
% the section, in an unnumbered section that does not count towards the paper
% page limit. Typically, this will include thanks to reviewers who gave useful
% comments, to colleagues who contributed to the ideas, and to funding agencies
% and corporate sponsors that provided financial support.

% In the unusual situation where you want a paper to appear in the
% references without citing it in the main text, use \nocite
% \nocite{langley00}

% Force all figures to be placed before bibliography
% \clearpage

\bibliography{main}
\bibliographystyle{icml2026}

%%%%%%%%%%%%%%%%%%%%%%%%%%%%%%%%%%%%%%%%%%%%%%%%%%%%%%%%%%%%%%%%%%%%%%%%%%%%%%%
%%%%%%%%%%%%%%%%%%%%%%%%%%%%%%%%%%%%%%%%%%%%%%%%%%%%%%%%%%%%%%%%%%%%%%%%%%%%%%%
% APPENDIX
%%%%%%%%%%%%%%%%%%%%%%%%%%%%%%%%%%%%%%%%%%%%%%%%%%%%%%%%%%%%%%%%%%%%%%%%%%%%%%%
%%%%%%%%%%%%%%%%%%%%%%%%%%%%%%%%%%%%%%%%%%%%%%%%%%%%%%%%%%%%%%%%%%%%%%%%%%%%%%%
\newpage
\appendix

\onecolumn
\section*{Appendix}

%This appendix is organized as follows. In Section \ref{app:related-work}, we discuss related work and position our paper within them.

\input{sec/related-work-appendix}

\input{sec/appendix-KPE-density-restructured.tex}

\input{sec/appendix3.tex}
\input{sec/appendix-energy-paradox-proof.tex}
\input{sec/appendix-experiments-cam.tex}

\input{sec/appendix-kts-stability-cam.tex}
\input{sec/appendix-figures.tex}
\input{sec/appendix-vis}

\end{document}

%% file: sec/Introduction-new.tex
\section{Introduction}
\label{sec:introduction}

Flow-based generative models synthesize data by integrating a learned velocity field $v_\theta$, which defines a bridge that transports samples from a base (noise) distribution to the data distribution~\citep{lipman2022flow,liu2022flow,song2020score}. Yet we still lack tools to understand \emph{why individual samples differ in quality}.
Standard metrics like FID~\citep{heusel2017gans} are fundamentally trajectory-blind; they aggregate global statistics but overlook the dynamics of individual paths~\citep{jayasumana2024rethinking}.
Sampling, however, can be viewed as navigation in a time-varying flow, where each sample is a particle continuously steered toward the data manifold \citep{chen2018neural, song2020score}.
In physics, the accumulation of kinetic energy along a path (the action) is a definitive measure of dynamical effort~\citep{goldstein1950classical, benamou2000computational}. An analogous per-sample quantity is readily available during flow-based sampling~\citep{finlay2020train, tong2024improving}, yet its connection to generation quality remains unexplored. 
This raises the question: \emph{Does the kinetic effort expended reveal the intrinsic properties of the generated sample?}

% P2: KPE as formalization of the physics intuition
To address this, we formalize \emph{Kinetic Path Energy (KPE)} as the time integral of the squared velocity along a sample's trajectory $(x(t))_{t \in [0,1]}$, and is defined as 
    $E := \frac{1}{2} \int_0^1 \|v_\theta(x(t),t)\|^2 \, dt.$
KPE provides a zero-overhead diagnostic of individual transport efficiency. It is computed directly during ODE-based sampling, transforming complex flow dynamics into a scalar ``sampling cost'' that enables granular analysis of individual generation paths.

% P3: Two findings - energy-quality-density triangle
At a high level, KPE exhibits two key correspondences with the generated data (\S\ref{sec:two-findings}).
\emph{(i) Energy as a Proxy for Semantic Fidelity.}
Higher-energy trajectories produce samples with sharper, class-specific features
(\autoref{fig:energy_semantic_visualization}; \S\ref{subsec:finding1}): synthesizing precise semantic structure demands sustained high velocity, and hence greater accumulated energy.
\emph{(ii) Energy as a Proxy for Local Sparsity.} High-energy trajectories terminate in locally sparse regions of representation space, i.e., neighborhoods with few training samples (\S\ref{subsec:finding2}). Under a \emph{posterior dominance regime}\footnote{Posterior dominance: for each $(z,t)$, there exists a dominant component $i^*$ with posterior weight $\lambda_{i^*}(z,t)\ge 1-\varepsilon$ for some $\varepsilon\in(0,1/2)$, where $\lambda_i(z,t)\propto p_t(z\mid x^{(i)})$.}, instantaneous squared speed is affinely bounded by the negative log-density of the bridge mixture $\hat p_t(z)$ (Theorem~\ref{prop:kpe-density}).
Together, these results establish KPE as a \emph{dual indicator} of semantic fidelity and local sparsity, a path-level diagnostic inaccessible to endpoint-only metrics.

% P4: Energy paradox - EFM memorization
A natural follow-up question arises: \emph{Does pushing energy higher always improve generation?} 
Paradoxically, the closed-form empirical flow matching (EFM) solution achieves $1.3\times$--$3.9\times$ higher peak power than neural velocity fields, yet produces near-exact training replicas (98\% memorization on CelebA; \S\ref{sec:memorization}).
We show that this failure is structural: the EFM velocity field contains a singular component that drives energy blow-up and forces trajectories to collide with discrete training atoms (Proposition~\ref{prop:efm_maximum_energy}).
In short, energy is not a monotone knob: at the extreme, energy spikes drive \emph{memorization}, not better generation.

% P5: Goldilocks Principle - three regimes
These results suggest a \emph{Goldilocks principle}: generation quality benefits from \emph{moderate, well-timed} kinetic effort, whereas insufficient energy leads to trapping in dense regions, and excessive late-time energy induces terminal blow-up and memorization.
Guided by this principle, we propose \emph{Kinetic Trajectory Shaping (KTS)}, a training-free inference strategy with phase-specific velocity modulation (\S\ref{sec:kts}). In the early phase ($t < 0.6$), \emph{Kinetic Launch} boosts velocity to raise KPE and pushes samples toward sparse, semantically rich regions. In the late phase ($t \ge 0.6$), \emph{Kinetic Soft Landing} dampens velocity to suppress terminal singularities and prevents memorization. Experiments on CelebA demonstrate that KTS reduces memorization by 16\% (from 37.3\% to 31.2\%) while improving generation quality (FID 14.35 vs.\ 16.68 baseline).

% P7: Contributions
We summarize our main contributions as follows:
\begin{itemize}[leftmargin=*,itemsep=2pt]
    \vspace{-1em}
    \item We propose \textit{Kinetic Path Energy (KPE)}, a per-sample, path-level diagnostic that quantifies the kinetic effort accumulated along a generation trajectory.
    \vspace{-0.5em}
    \item We show empirically that KPE tracks both semantic fidelity and local sparsity in representation space, and we formalize the latter via an energy--density relation on the bridge mixture: $\big\|\hat u^*(z,t)\big\|^2 \asymp -\log \hat p_t(z)$ under posterior dominance (Theorem~\ref{prop:kpe-density}).
    \vspace{-0.5em}
    \item We uncover an \textit{energy paradox} in the regression-optimal EFM due to a structural $1/(1{-}t)$ terminal singularity that drives memorization (Proposition~\ref{prop:efm_maximum_energy}). We then address this with a phase-aware remedy called \textit{Kinetic Trajectory Shaping (KTS)}, and evaluate its effectiveness in reducing memorization and improving sample quality across benchmark tasks.
\end{itemize}

%% file: sec/related-work.tex
\section{Related Work}
\label{sec:related-work}

Flow matching learns a time-dependent velocity field whose ODE transport maps a base distribution to the data distribution
\citep{lipman2022flow,liu2022flow,albergo2023stochastic,lipman2024flow}. It can be viewed as a deterministic counterpart to diffusion’s SDE formulations \citep{song2020score}.
Our focus here is on flow matching and its empirical counterpart.
Energy and action functionals are central in optimal transport and kinetic formulations of probability evolution, where probability
paths are characterized via kinetic energy or action minimization
\citep{benamou2000computational,shaul2023kinetic}.
In contrast to optimal or distribution-level analyses, we introduce KPE as a \emph{per-sample, path-level} diagnostic computed along individual flow matching trajectories.

Recent work has studied memorization and generalization in flow matching and closely related  approaches
\citep{gao2024flow,bertrand2025closed,baptista2025memorization,bonnaire2025diffusion, scarvelis2023closed, yoon2023diffusion, pidstrigach2022score}.
We complement these analyses by identifying a trajectory-level mechanism: the regression-optimal empirical flow matching solution
exhibits a terminal velocity singularity that induces excessive late-time kinetic energy and drives memorization.
Several training-free methods modify inference dynamics using classifier or energy-based signals
\citep{ho2022classifier,yu2023freedom,xu2024energy}.
Unlike approaches that modulate scores or endpoint objectives, our \emph{Kinetic Trajectory Shaping} directly controls the
velocity field over time, enabling phase-specific regulation of kinetic effort within flow matching models. See Appendix \ref{app:related-work} for a more detailed discussion of related work.

%% file: sec/kpe.tex
\section{Kinetic Analogy and Trajectory Energy}
\label{sec:kpe}

% In this section, we formalize the \textit{Kinetic Path Energy (KPE)} as a trajectory-level diagnostic for flow-based models. We first recall conditional flow matching (CFM) and the associated sampling ODE, and then introduce KPE via a kinetic analogy from classical mechanics.

\subsection{Conditional Flow Matching (CFM)}
\label{subsec:cfm_recall}
In CFM, we first construct a target conditional probability path from noise $\epsilon \sim p_0 :=\mathcal N(0,I)$ to data $z\sim p_{\mathrm{data}}$ over the time interval $[0,1]$. We adopt the standard linear interpolation path:
\vspace{-2mm}
\begin{equation}
    x(t) = (1-t)\,\epsilon + t\, z, \quad \epsilon\sim\mathcal N(0,I),
    \label{eq:bridge_path}
\end{equation}
which defines a conditional flow with the vector field $u_t(x|z) = z - \epsilon$ (conditional on the data $z$). 
%Note that, in terms of $x(t)$, this velocity field is equivalent to $u_t(x_t|z) = \frac{z - x(t)}{1-t}$. 
We then learn this velocity field using a neural network $v_\theta(x,t)$ by minimizing the regression loss:
\begin{equation}
    \mathcal{L}(\theta) = \mathbb E_{t,z,\epsilon}[\bigl\|v_\theta(x(t),t) - (z - \epsilon)\bigr\|^2],
    \label{eq:cfm_objective}
\end{equation}
where $t\sim\mathcal{U}[0,1]$. The population optimum recovers the conditional expectation $v^\star(x,t)=\mathbb E[z-\epsilon \mid x(t)=x]$.
% Samples are generated by integrating the ODE:
% \begin{equation}
%     \tfrac{dx}{dt} = v_\theta(x(t),t), \quad t\in[0,1], \quad x(0)\sim p_0.
%     \label{eq:sampling_ode}
% \end{equation}

\subsection{Physical Motivation}

In classical mechanics, the evolution of a physical system is characterized by the \textit{action functional} \citep{goldstein1950classical, feynman1965quantum}:
% \vspace{-2mm}
\begin{equation}
    S[x(\cdot)] = \int_{t_0}^{t_1} L(x(t), \dot{x}(t), t) \, dt,
\end{equation}
defined over the time interval \([t_0, t_1] \), where \(L(x(t), \dot{x}(t), t) =  T(\dot x(t)) - V(x(t))   \) is the Lagrangian, given by the difference between the kinetic energy \(T(\dot x)\)  and the potential energy \(V(x)\). This formulation embodies Hamilton's principle of least action~\citep{goldstein1950classical} and Feynman's path integral framework~\citep{feynman1965quantum}.
For a free particle (i.e., when  \(V(x) = 0\)), the action functional reduces to the kinetic term, $S_{\text{free}} = \int_{t_0}^{t_1} \frac{1}{2} \|\dot{x}(t)\|^2 \, dt.$
This kinetic form is a fundamental example of an action functional in physics. Inspired by similar analogies~\citep{zhang2023path}, we adopt a kinetic framework to define trajectory-level diagnostics to gain insight into the generation process in flow matching.

\subsection{Kinetic Path Energy (KPE)}

We interpret the flow matching sampling process as a particle moving through a velocity field. The sampling trajectory is governed by the learned velocity field $v_\theta(x, t)$ via the ODE \citep{liu2022flow, lipman2022flow, song2020denoising, chen2018neural}:
\vspace{-2mm}
\begin{equation}
    \frac{dx}{dt} = v_\theta(x(t), t), \quad t \in [0, 1], \quad x(0) \sim \mathcal{N}(0, I),
\vspace{-2mm}
\end{equation}
which describes the probability flow~\cite{song2020score} from noise to data distribution. The trajectory represents the path of a particle driven by the velocity field.

Inspired by classical mechanics~\cite{goldstein1950classical}, we define \textit{kinetic path energy} \(E\) as:
\vspace{-2mm}
\begin{equation}
    \label{eq:kpe-def}
    E := \frac{1}{2} \int_0^1 \|v_\theta(x(t), t)\|^2 \, dt,
    \vspace{-2mm}
\end{equation}
where we adopt the convention of unit mass ($m=1$), standard in the free particle action formulation.
%\footnote{Here \(m\) denotes the particle mass in the classical kinetic energy \(\tfrac{1}{2}m\|v\|^2\). Setting \(m=1\) is a unit/scale normalization that simplifies notation without affecting our analysis.}
% \subsection{Interpretation and Theoretical Grounding}
KPE encapsulates the cumulative kinetic cost incurred during  sampling. It quantifies the ``energy'' required to transport a sample from the noise distribution to the data manifold. A higher $E$ signifies that the model employs greater velocity magnitudes on average, reflecting a more energetically demanding generation process. 
% In our context, this energy is not merely a physical curiosity but a proxy for \textit{trajectory complexity} and \textit{semantic depth}.
We stress that \(E\) is a \textit{kinetic-inspired diagnostic}, not literally representing physical energy. 

Practically, KPE is cheap to compute: during ODE sampling we simply accumulate \(\|v_\theta(x(t),t)\|^2\) at each discrete time, adding negligible overhead. In expectation, when the learned flow realizes optimal transport~\citep{tong2024improving,pooladian2023multisample}, KPE coincides with the Benamou-Brenier dynamic formulation~\citep{benamou2000computational} of the \(2\)-Wasserstein distance, grounding our metric in optimal transport theory~\citep{villani2008optimal}.

% \textbf{Grounding in Optimal Transport.} 
% We emphasize that while $E$ is a kinetic-inspired metric, it is deeply rooted in Optimal Transport (OT) theory. Specifically, the expected KPE across the entire distribution matches the \textbf{Benamou--Brenier dynamic formulation} \citep{benamou2000computational} of the 2-Wasserstein transport cost:
% \begin{equation}
%     W_2^2(p_0, p_1) = \inf_{v_t} \mathbb{E} \left[ \int_0^1 \|v_t(x_t)\|^2 dt \right].
% \end{equation}
% When the learned flow $v_\theta$ realizes the optimal transport \citep{tong2024improving, pooladian2023multisample}, the KPE represents the minimal energy path. Deviations from this minimal energy baseline, which we observe in high-fidelity generation—reveal the model's struggle to balance the "straight" OT paths with the intricate, high-energy requirements of sparse semantic regions.

% \textbf{Efficiency.} This diagnostic is computationally trivial: it only requires recording the squared velocity field magnitudes during standard ODE integration, incurring negligible overhead while providing a sample-specific measure of generation effort.

%% file: sec/two-findings-revised-cam.tex
\section{Two Findings on KPE}
\label{sec:two-findings}

\subsection{KPE  vs. Semantic Strength}
\label{subsec:finding1}

\begin{tcolorbox}[colback=blue!10!white,colframe=black,boxrule=0.9pt,boxsep=2pt,top=3pt,bottom=3pt,left=3pt,right=3pt]
Finding 1: Higher ${E}$ consistently correlates with stronger semantic alignment and discriminability.
\end{tcolorbox}
\vspace{-2mm}

\textbf{Setup and Metrics.} We examine this relationship on ImageNet-256 using pretrained SiT-XL/2~\cite{ma2024sit}, generating 5,000 samples per CFG scale $\omega \in \{1.0, 1.5, 4.0\}$ and partitioning into low/mid/high KPE groups (0--33\%, 33--67\%, 67--100\%). We evaluate using CLIP score (semantic alignment) and CLIP margin (semantic discriminability).
% \textbf{CLIP score} measures semantic alignment as $100 \times$ the maximum cosine similarity between normalized image features and the true-class text features.
% \textbf{CLIP margin} measures semantic discriminability as the gap between the true-class similarity and the best competing-class similarity:
% $
% \text{Margin} = \text{Sim}_{\text{true}} - \max_{c \in \mathcal{C}_{\text{others}}} \text{Sim}(c),
% $
% where $\mathcal{C}_{\text{others}}$ denotes competing classes. Higher margins imply stronger class-specific semantics.
See Appendix~\ref{app:experimental-setup} for details.

% \paragraph{Evaluation Metrics.}
% We quantify semantic quality using CLIP-based metrics~\cite{radford2021learning}: (i) \textbf{CLIP Score} measures semantic alignment as maximum cosine similarity ($\times 100$) between image and true class text embeddings. (ii) \textbf{CLIP Margin} measures discriminability as the difference between true class similarity and maximum competing class similarity. Higher values indicate stronger semantic quality.

% \paragraph{Results.}
% Both metrics increase consistently with KPE across all CFG scales (Figs.~\ref{fig:energy_clip_scatter},\ref{fig:energy_clip_margin_scatter}). At CFG$=1.5$, median CLIP score increases from 23.52 (low) to 25.12 (high), with effect sizes Cohen's $d \in [0.45, 0.65]$ (Table~\ref{tab:semantic_quality_combined}, all $p < 0.001$).

\textbf{Results.}
\autoref{fig:energy_semantic_visualization} provides a qualitative comparison using paired samples from the same class (top row: higher KPE; bottom row: lower KPE).
Higher-energy samples exhibit clearer, more class-specific semantic cues; see Appendix~\ref{app:appendix-vis} for additional visualizations.
\autoref{fig:energy_clip_scatter} and \autoref{fig:energy_clip_margin_scatter} show  that both CLIP score and CLIP margin increase with ${E}$ across different CFG settings.
For instance, at CFG=1.5, the median CLIP score increases from 23.52  to 25.12  and the median CLIP margin rises from 7.05 to 10.02 as the KPE increases.
Additionally, \autoref{tab:semantic_quality_combined} shows that the difference between low-energy and high-energy groups is statistically significant for all 6 comparisons ($p < 0.008$).

Conceptually, KPE ${E}$ measures the cumulative kinetic effort along a sampling trajectory. Empirically, within each fixed CFG scale, higher ${E}$ is associated with higher CLIP score and CLIP margin, indicating that ${E}$ captures sample-level semantic variation beyond guidance strength.
\begin{figure}[h]
\vspace{-0.5em}
  \centering
  \includegraphics[width=0.93\linewidth]{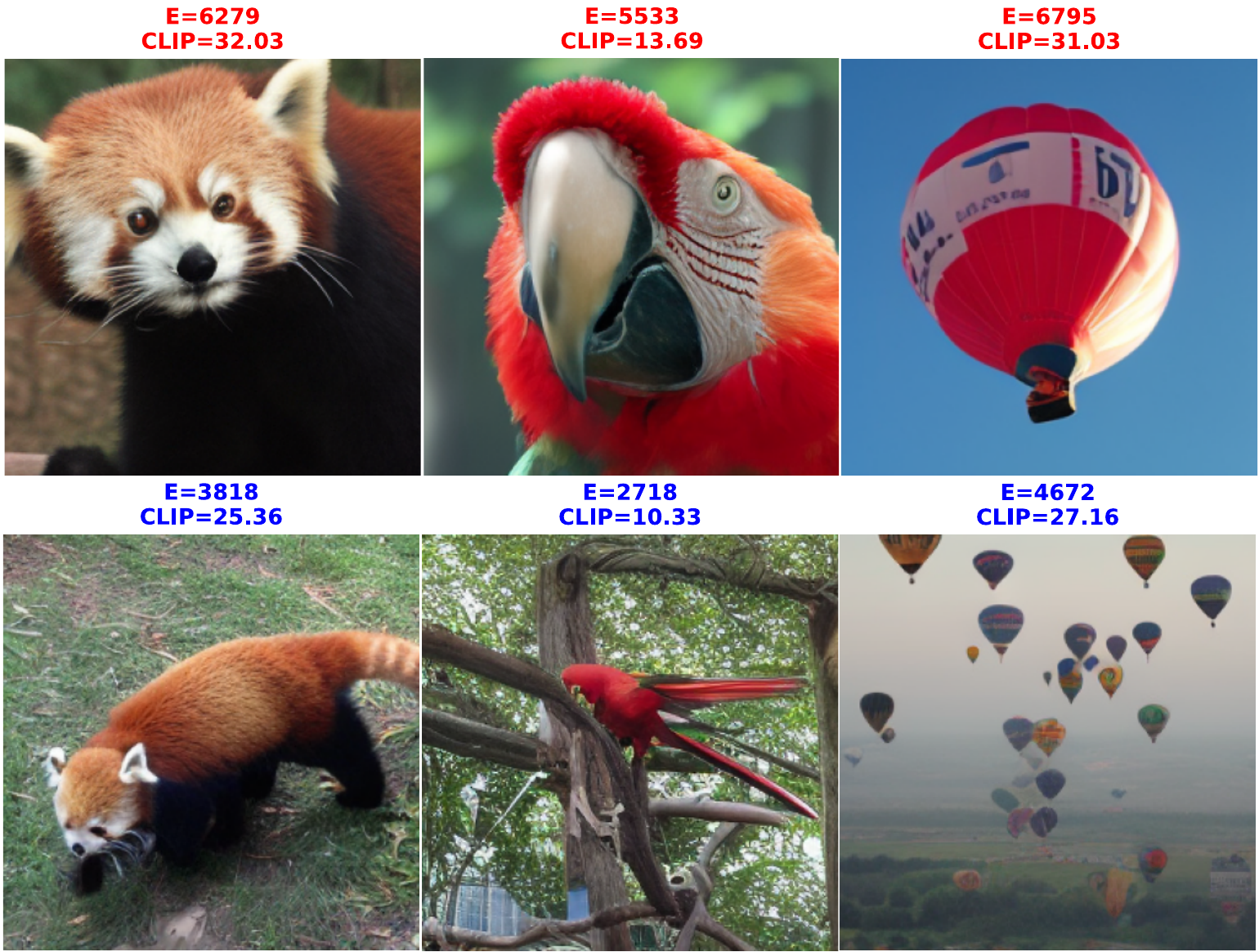}
  \caption{\textbf{Higher-$E$ samples show clearer semantic cues.} Paired samples from the same class on ImageNet-256 (CFG=4.0): top row corresponds to higher KPE, bottom row to lower KPE. Higher-energy samples exhibit more salient class-specific attributes.}
  \label{fig:energy_semantic_visualization}
  \vspace{-1.5em}
\end{figure}

% \paragraph{Interpretation.}
% These results reveal a consistent positive correlation: \emph{higher KPE predicts stronger semantic alignment and discriminability}. Generating semantically rich samples requires greater kinetic effort to reach sparse, high-quality regions rather than settling into generic modes.

\begin{figure}[h]
% \vspace{-2em}
  \centering
  \begin{subfigure}[t]{0.93\linewidth}
    \centering
    \includegraphics[width=\linewidth]{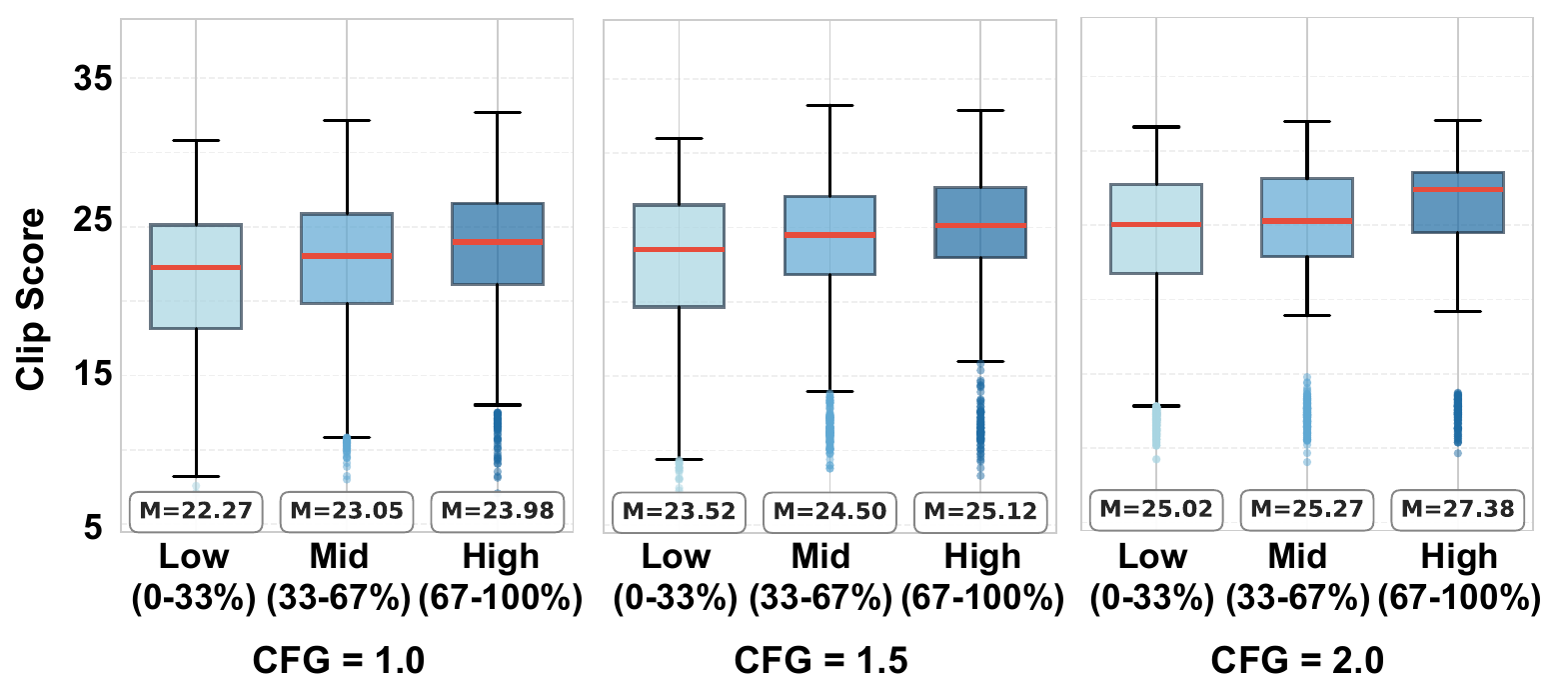}
    \caption{CLIP score}
    \label{fig:energy_clip_scatter}
    % \vspace{-0.5em}
  \end{subfigure}

  % \vspace{0.5em}

  \begin{subfigure}[t]{0.93\linewidth}
    \centering
    \includegraphics[width=\linewidth]{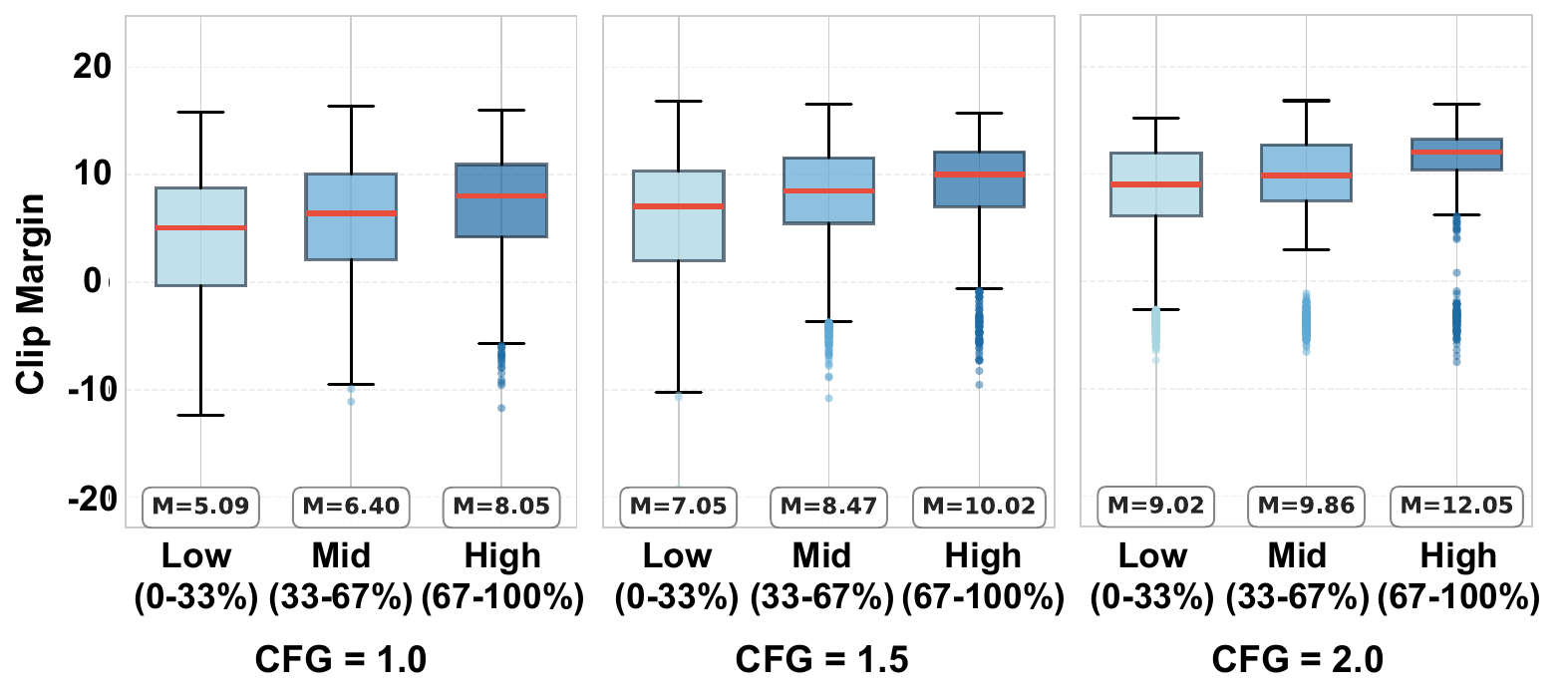}
    \caption{CLIP margin}
    \label{fig:energy_clip_margin_scatter}
  \end{subfigure}
  \caption{\textbf{KPE correlates with semantic strength and discriminability across CFG scales.} Box plots of (a) CLIP score and (b) CLIP margin for low/mid/high KPE (0--33\%, 33--67\%, 67--100\%) at CFG 1.0/1.5/4.0. Both metrics increase with KPE (medians labeled).}
    \vspace{-1em}
\end{figure}

\begin{table}[!t]
  \centering
  \caption{\textbf{Higher KPE improves semantic strength and discriminability across CFG scales.} We compare low-energy (0--33\% KPE) vs. high-energy (67--100\% KPE) groups; all differences remain significant after Bonferroni correction (6 tests; $p < 0.008$).}
  \label{tab:semantic_quality_combined}
  \footnotesize
  \setlength{\tabcolsep}{1.4pt}
  \renewcommand{\arraystretch}{0.9}
  \begin{tabular}{@{}cccccc@{}}
  \toprule
  \textbf{CFG} & \textbf{Metric} & \textbf{Low Energy} & \textbf{High Energy} & $\mathbf{\Delta\mu}$ & \textbf{Cohen's} $d$ \\
  \textbf{Scale} & & $\mu_{\pm\sigma}$ & $\mu_{\pm\sigma}$ & & \\
  \midrule
  \multirow{2}{*}{1.0}
      & CLIP Score  & $21.22_{\pm 5.35}$ & $23.43_{\pm 4.44}$ & $+2.21$ & $0.450$ \\
      & CLIP Margin & $4.15_{\pm 5.99}$ & $7.09_{\pm 5.00}$ & $+2.94$ & $0.534$ \\
  \midrule
  \multirow{2}{*}{1.5}
      & CLIP Score  & $21.87_{\pm 5.99}$ & $24.62_{\pm 4.29}$ & $+2.75$ & $0.527$ \\
      & CLIP Margin & $5.66_{\pm 6.17}$ & $8.93_{\pm 4.54}$ & $+3.27$ & $0.603$ \\
  \midrule
  \multirow{2}{*}{4.0}
      & CLIP Score  & $23.23_{\pm 5.89}$ & $25.87_{\pm 4.39}$ & $+2.64$ & $0.509$ \\
      & CLIP Margin & $7.44_{\pm 5.95}$ & $10.82_{\pm 4.40}$ & $+3.38$ & $0.646$ \\
  \bottomrule
  \end{tabular}

  \vspace{1.5mm}
  \footnotesize
  \textit{Notes:} Values are reported as mean $\pm$ std. Cohen's $d$ is the effect size. Two-sample $t$-tests are Bonferroni-corrected ($\alpha \!=\! 0.05/6 \!\approx\! 0.008$); $n \!\approx\! 1{,}333$ per group.
  \vspace{-1.5em}
\end{table}

\subsection{KPE vs. Local Support}
\label{subsec:finding2}
\begin{tcolorbox}[colback=blue!10!white,colframe=black,boxrule=0.9pt,boxsep=2pt,top=3pt,bottom=3pt,left=3pt,right=3pt]
Finding 2: Higher ${E}$ consistently correlates with lower estimated local support.
\end{tcolorbox}

\textbf{Setup and Metrics.}  We define \emph{local support} at a generated point as the concentration of training samples in a neighborhood of that point, where the neighborhood is computed either in the original data space or in a chosen feature space. We evaluate this inverse association  on (i) three synthetic 2D datasets with explicit density stratification (dense core + sparse ring, multiscale clusters, sandwich) and (ii) benchmark datasets: CIFAR-10 (OT-CFM~\cite{tong2024improving}) and ImageNet-256 (SiT-XL/2~\cite{ma2024sit}). For (ii), we generate 2,000 samples via Euler integration  ($\mathrm{NFE} \in \{10, 50, 150\}$) and estimate local support using 22D descriptors, including RGB statistics, Gabor responses, and edge density. These descriptors are  projected into a 2D space via PCA, where local support is quantified relative to the training set using $k$-NN distances with $k=50$ and Gaussian KDE. We report Spearman's $\rho$ for monotonic correlation and Cliff's $\delta$ (top-20\% vs.\ bottom-20\% KPE samples) for effect size; see Appendix~\ref{app:experimental-setup} for details.

Estimating density of natural images in pixel space is ill-posed. Accordingly, our k-NN/KDE estimates should be interpreted as  \emph{representation-dependent} proxies of local support in the descriptor/PCA space. They are  useful for relative ranking and trend analysis, but should not be interpreted as calibrated estimates of data-manifold density.

\begin{figure*}[t]
  \centering
  \setlength{\tabcolsep}{1.0pt}
  \renewcommand{\arraystretch}{0.85}
  \footnotesize
  \begin{tabular}{@{}ccccc@{}}
    \textbf{Training data} & \textbf{FM generations} & \textbf{KPE vs. Density} & \textbf{Instantaneous Power} & \textbf{Cumulative KPE} \\
    \includegraphics[width=0.19\textwidth]{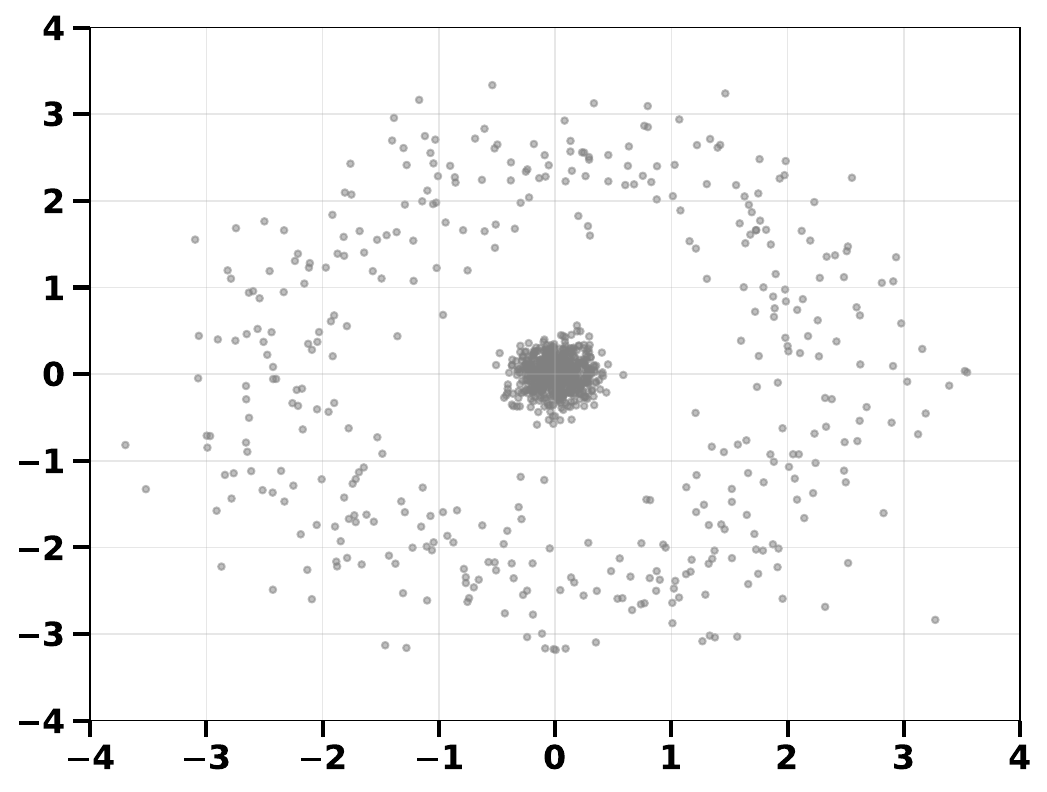}
    & \includegraphics[width=0.19\textwidth]{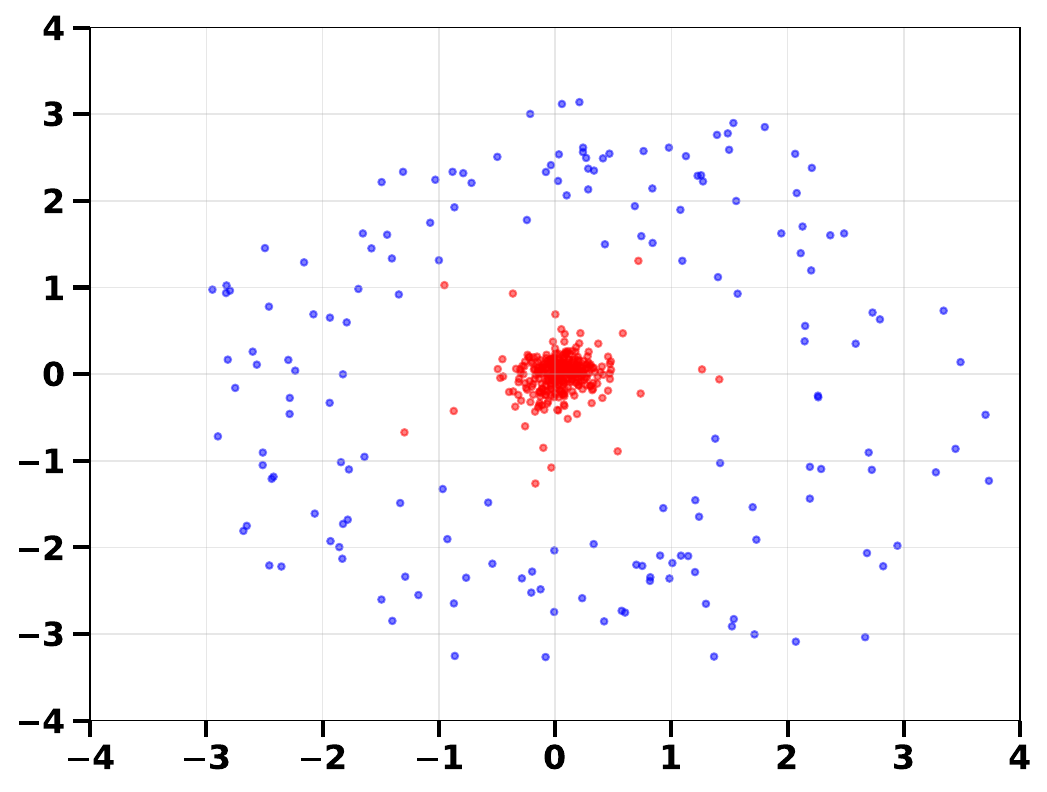}
    & \includegraphics[width=0.19\textwidth]{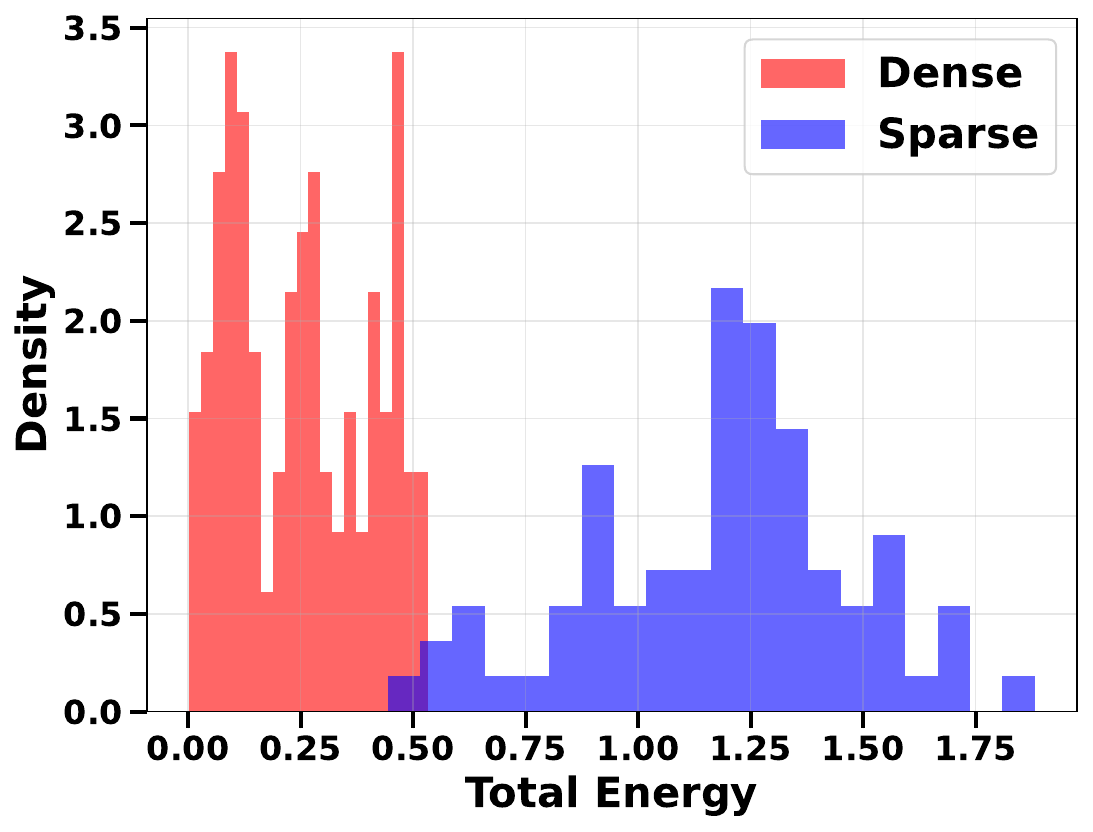}
    & \includegraphics[width=0.19\textwidth]{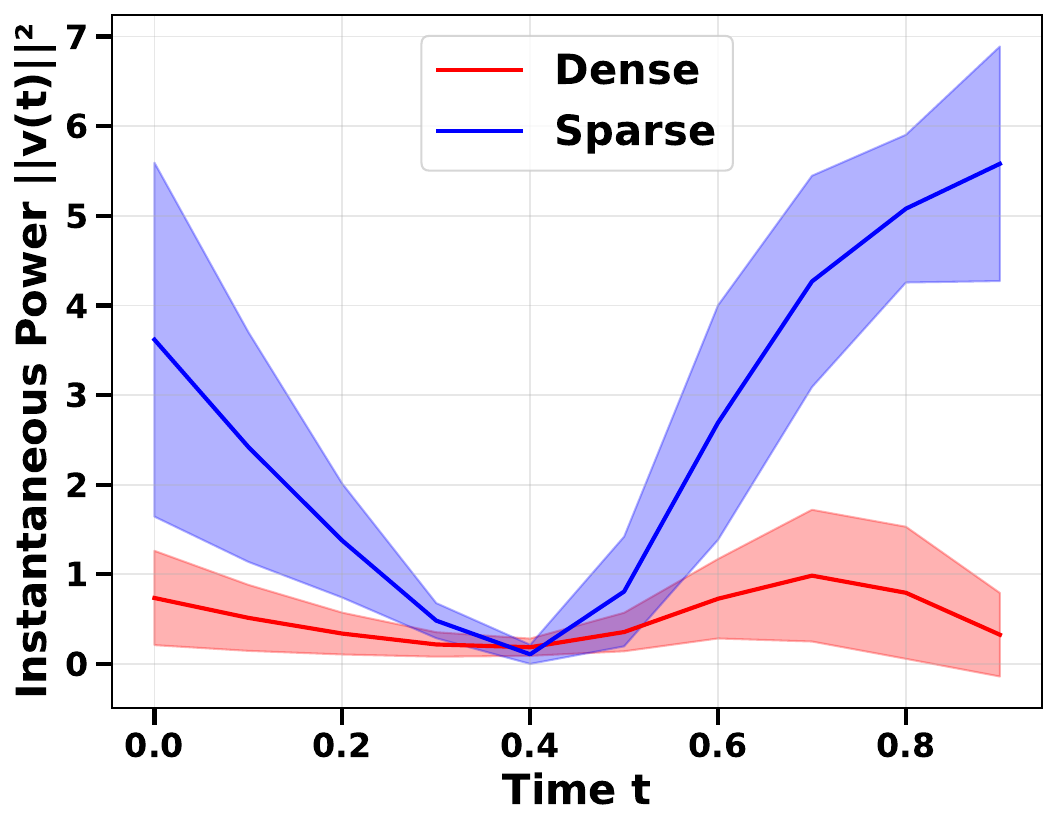}
    & \includegraphics[width=0.19\textwidth]{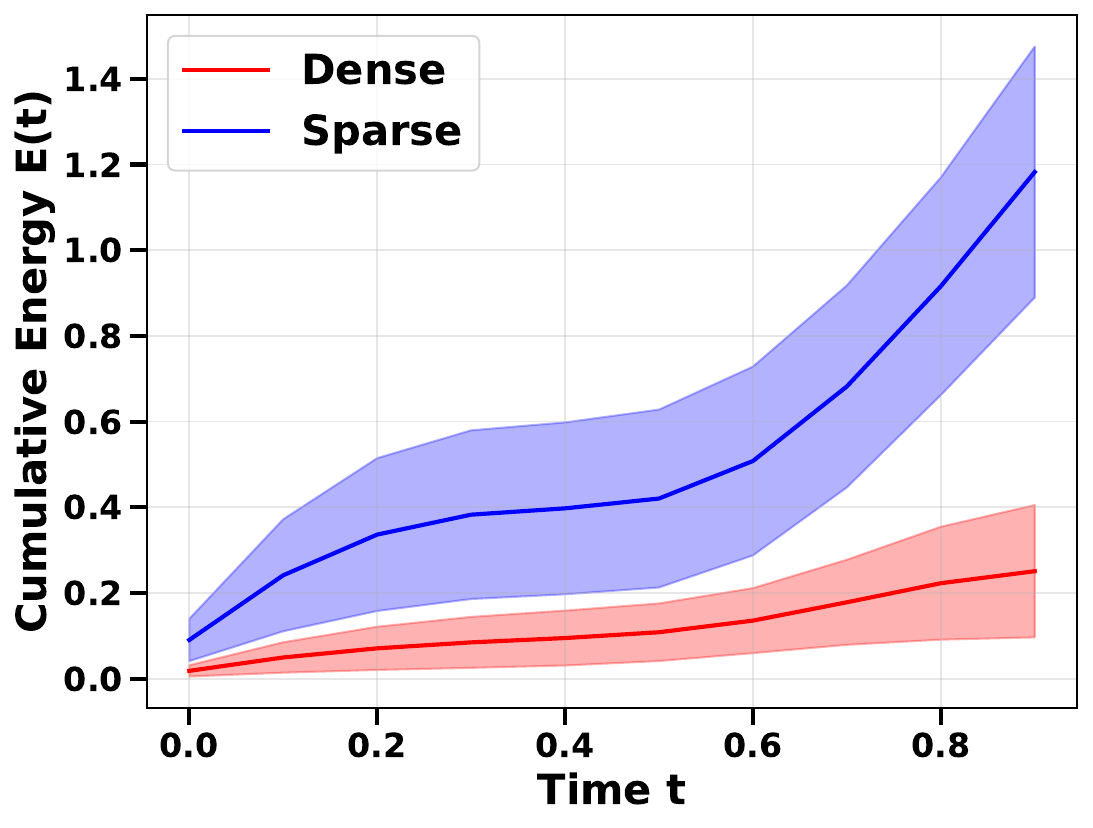}
    \\[-0.6ex]
    \includegraphics[width=0.19\textwidth]{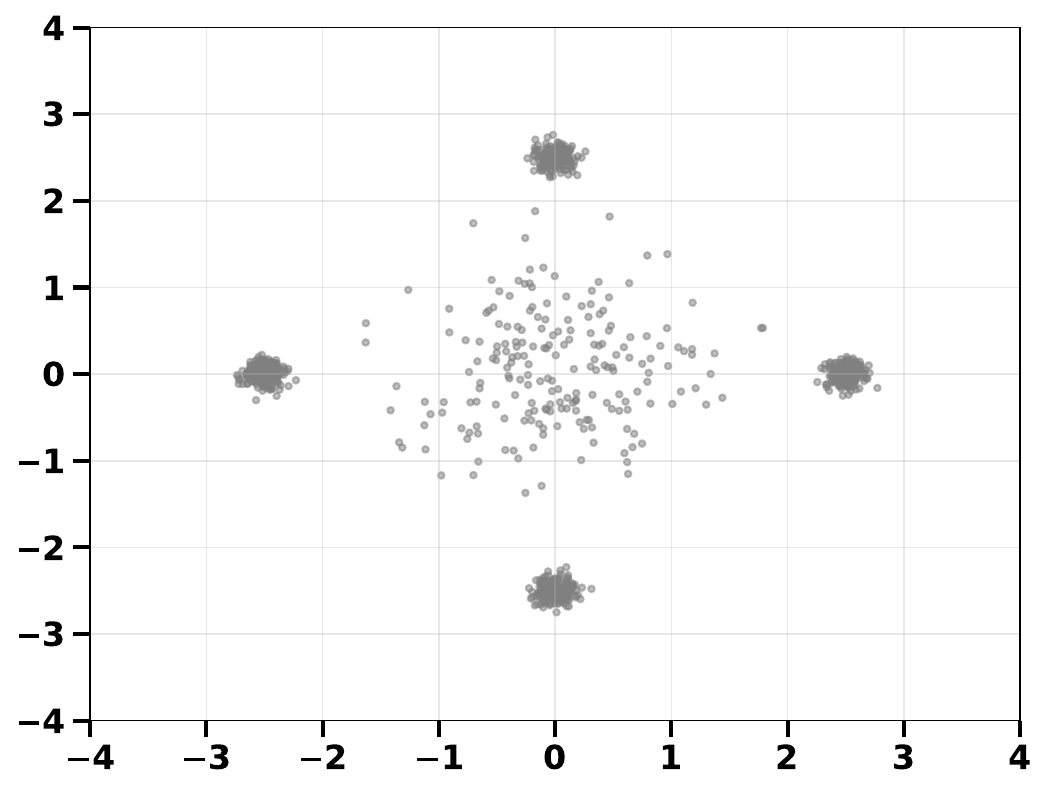}
    & \includegraphics[width=0.19\textwidth]{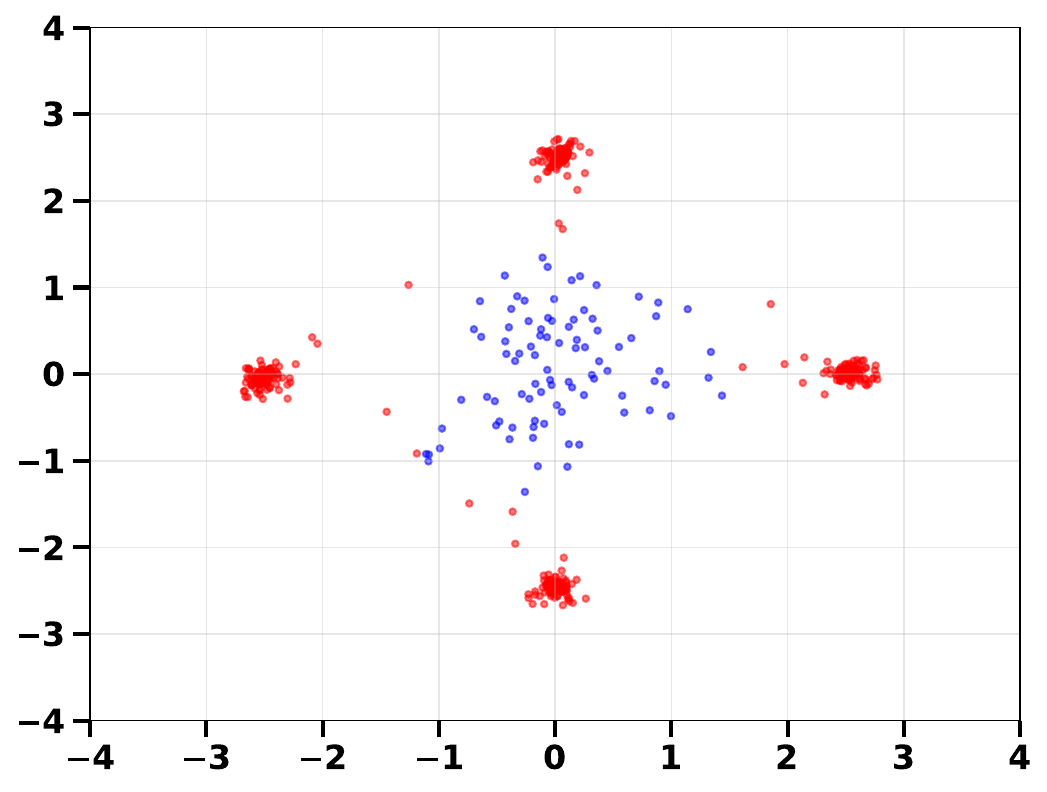}
    & \includegraphics[width=0.19\textwidth]{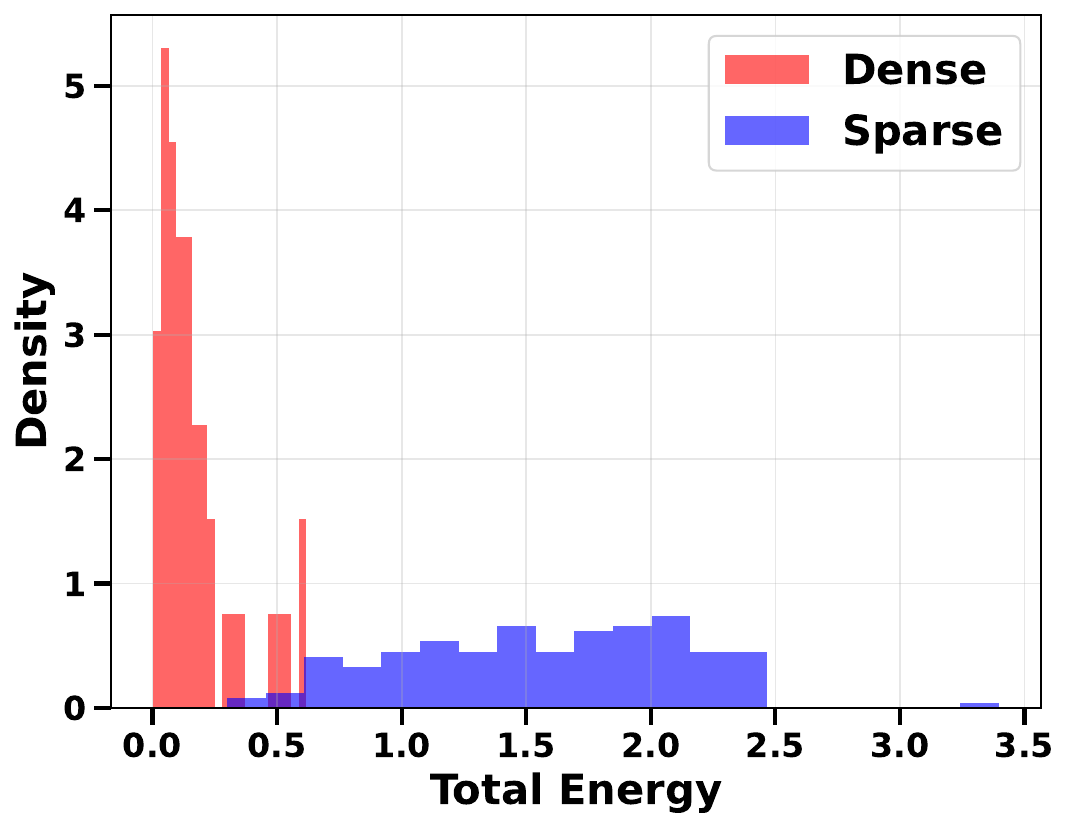}
    & \includegraphics[width=0.19\textwidth]{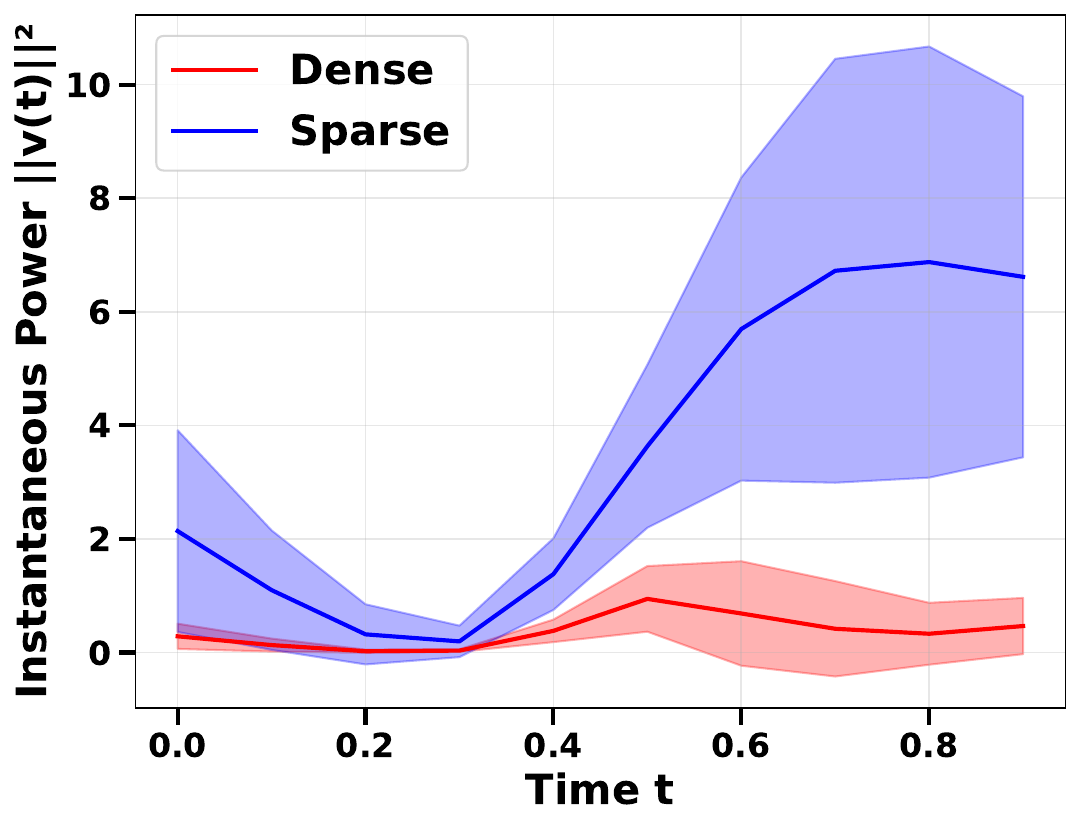}
    & \includegraphics[width=0.19\textwidth]{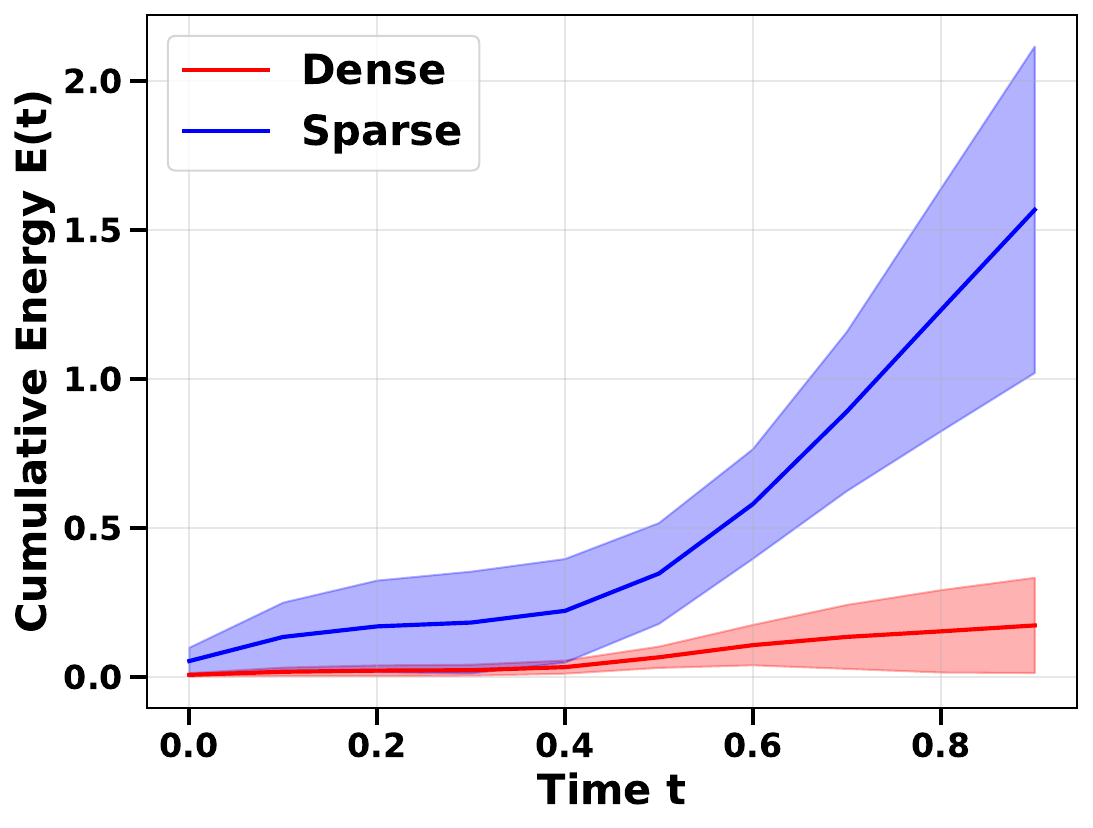}
    \\[-0.6ex]
    \includegraphics[width=0.19\textwidth]{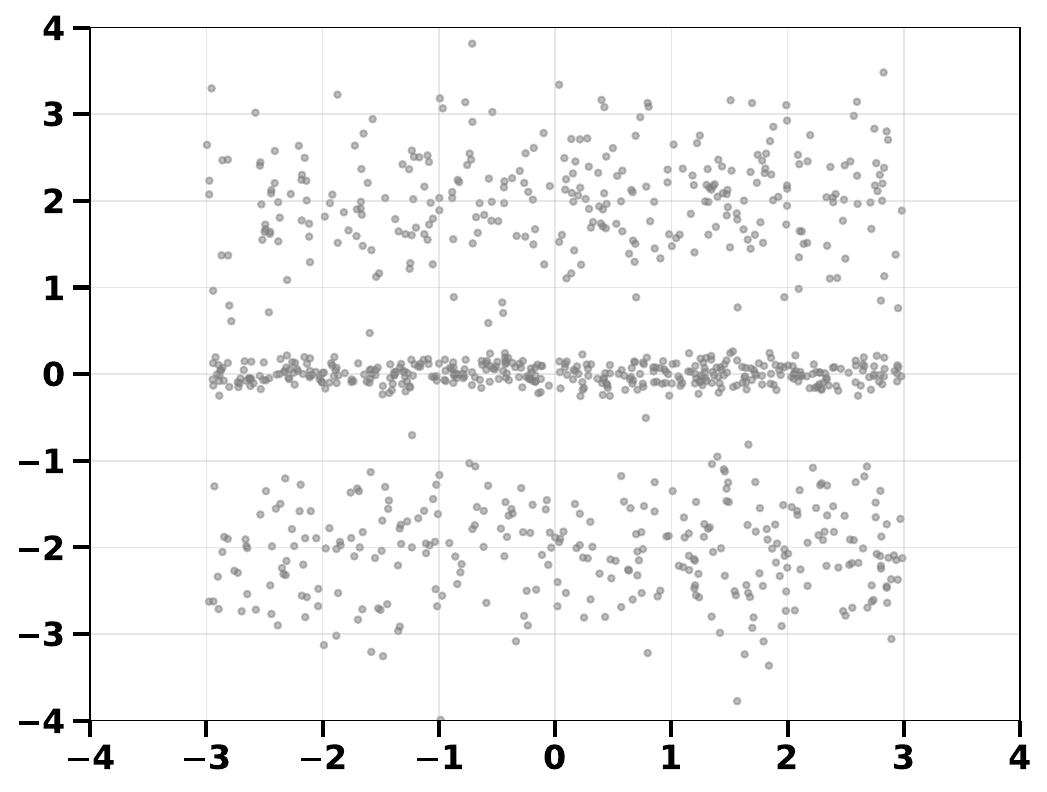}
    & \includegraphics[width=0.19\textwidth]{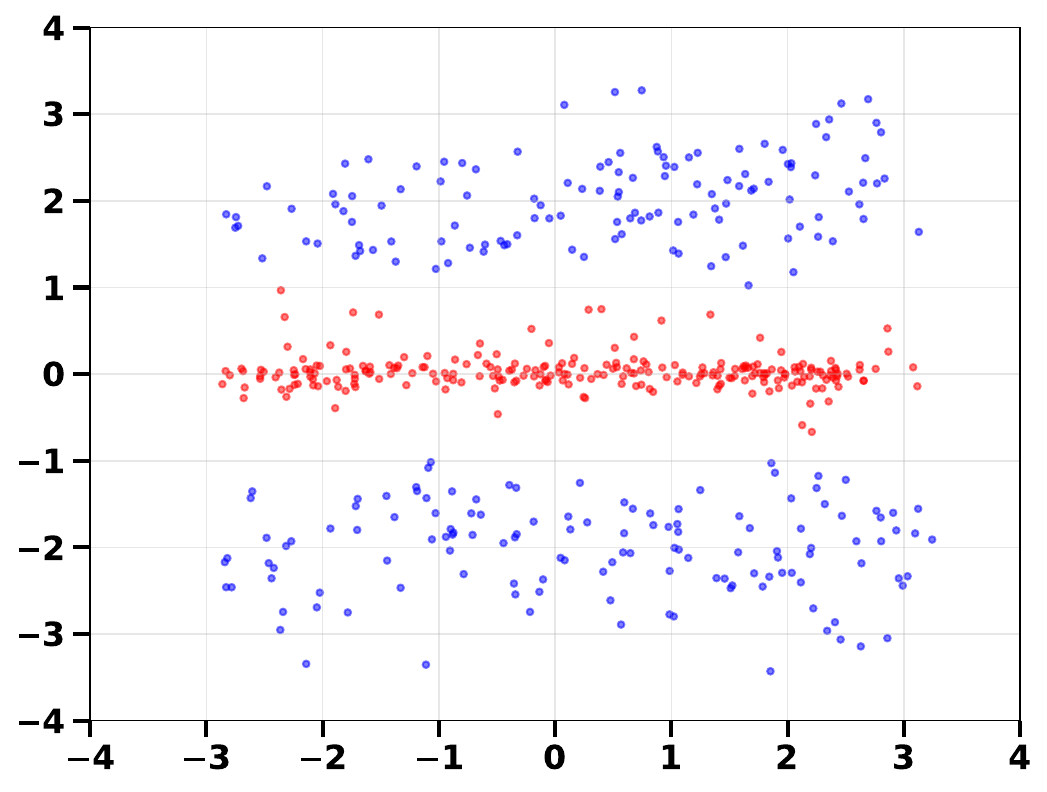}
    & \includegraphics[width=0.19\textwidth]{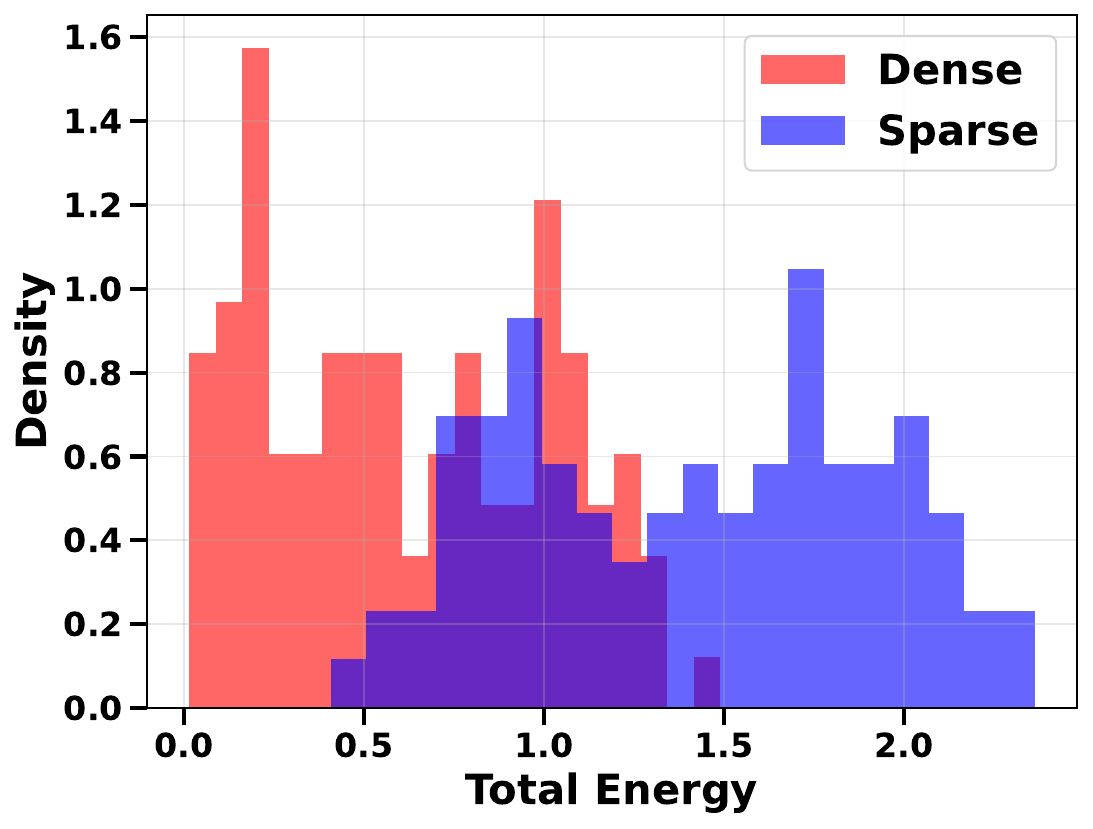}
    & \includegraphics[width=0.19\textwidth]{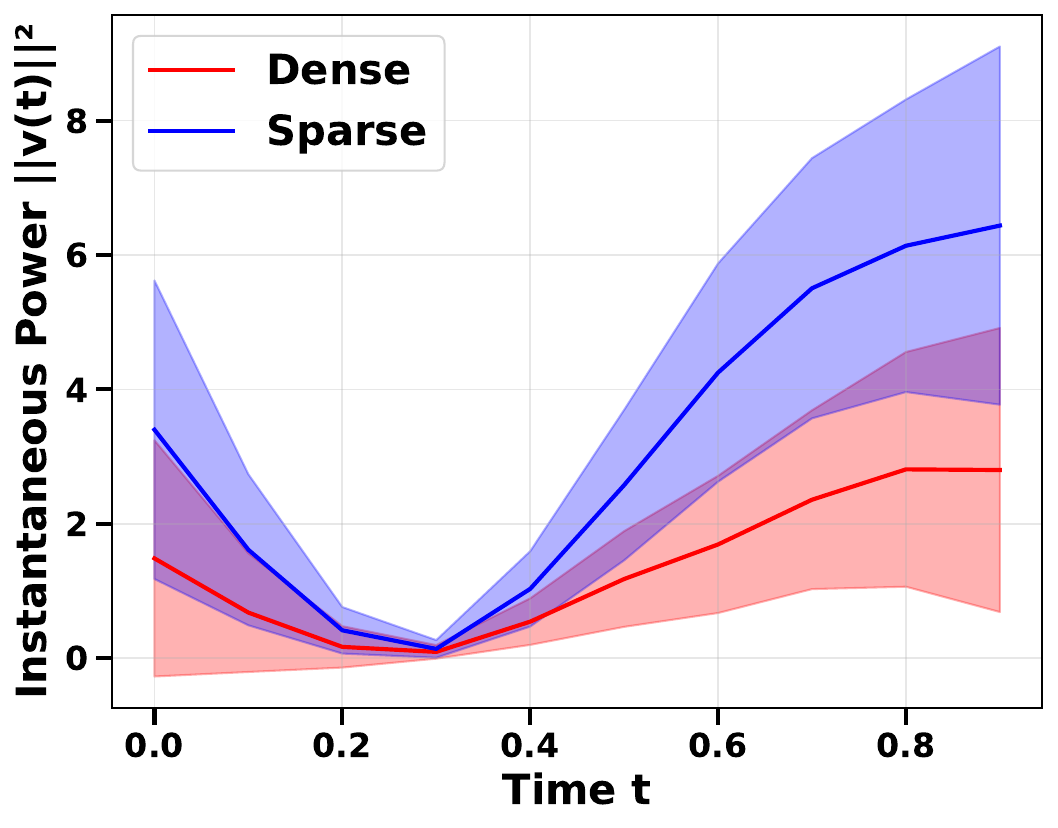}
    & \includegraphics[width=0.19\textwidth]{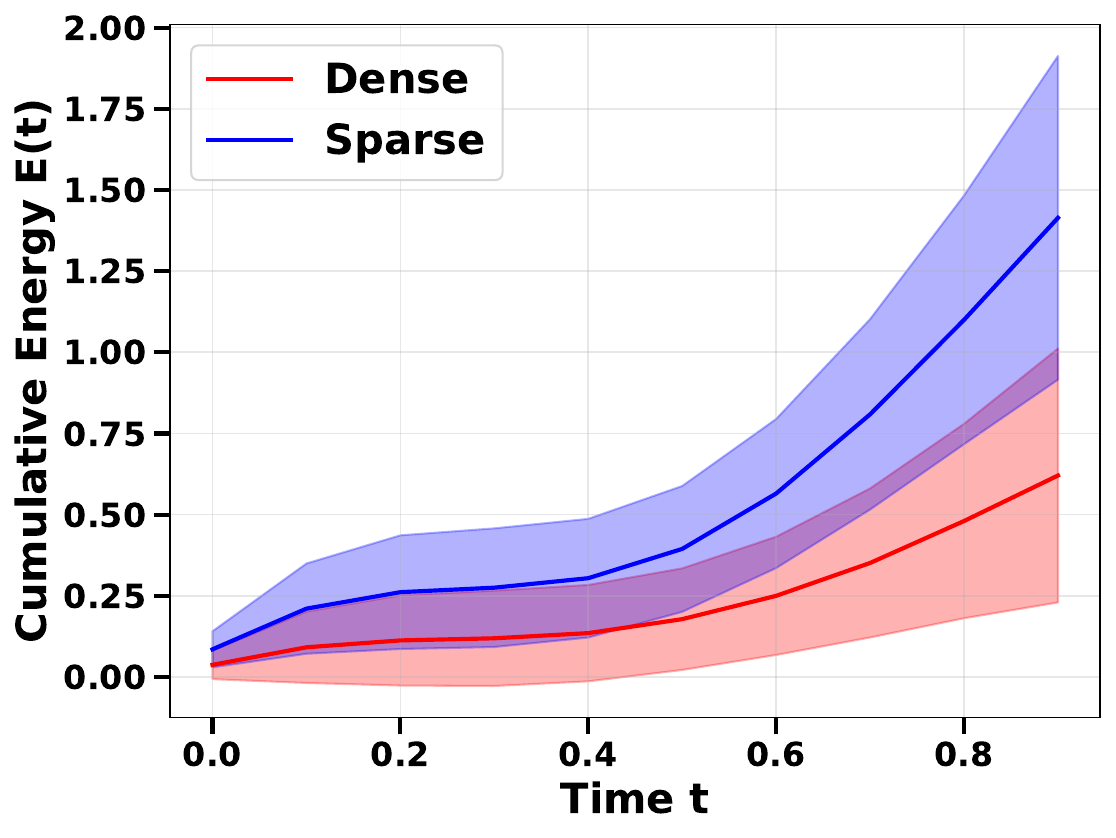}
  \end{tabular}

  \caption{\textbf{Inverse KPE--density relation on 2D synthetic datasets.} Each row corresponds to one distribution (\texttt{dense\_sparse}, \texttt{multiscale\_clusters}, \texttt{sandwich}). Columns (left$\to$right): training data distribution, FM generations, KPE vs.\ density strata, instantaneous power $\|v(t)\|^2$ over time, cumulative KPE. Across datasets, trajectories ending in low-density regions accumulate higher KPE (Mann-Whitney U (MWU) test $p<10^{-3}$); details in Appendix~\ref{app:synthetic-kpe-density-descriptions}.}
  \label{fig:toy_kpe_toygrid}
\end{figure*}

\textbf{Results.}
Figure~\ref{fig:toy_kpe_toygrid} shows a consistent inverse relation on the 2D synthetic data: trajectories whose endpoints fall in low-density regions exhibit higher KPE (KPE vs.\ density strata). This is accompanied by larger/more persistent instantaneous power $\|v(t)\|^2$, leading to faster growth and higher final cumulative KPE.
On CIFAR-10, Figure~\ref{fig:a_density_pair} shows (a) support and KPE surfaces that mirror each other (higher local support aligns with lower KPE), and (b) the top 10\% highest-KPE samples overlaid on the support surface, concentrating in locally sparse regions.

Table~\ref{tab:knn_kde_correlation} shows consistent evidence under both $k$-NN and KDE support estimates: on CIFAR-10, correlations strengthen with larger $\mathrm{NFE}$ ($\rho$: $-0.54\!\to\!-0.61\!\to\!-0.65$, $\delta$: $-0.83\!\to\!-0.89\!\to\!-0.93$ for $k$-NN; similar for KDE), while on ImageNet-256 they remain consistently negative but weaker ($\rho\approx-0.31$ to $-0.42$, $\delta\approx-0.43$ to $-0.58$).
\autoref{fig:energy_density_scatter_knn} plots KPE against training log-support on CIFAR-10 ($\mathrm{NFE}=150$, $n=2{,}000$): Spearman correlations are strongly negative under both $k$-NN and KDE ($\rho=-0.65/-0.64$).
% Table~\ref{tab:feature_space_robustness} further shows that the negative relation persists across four representation choices on ImageNet-256 (descriptors with/without PCA, VAE latents with two PCA budgets), with Spearman $\rho \in [-0.74,-0.38]$ -- stronger under richer representations.
%

Table~\ref{tab:feature_space_robustness} shows that the negative KPE--support relation is robust to the choice of representation. On ImageNet-256, the correlation already appears with low-dimensional handcrafted descriptors after PCA ($\rho=-0.38$), becomes substantially stronger when using the full 22D descriptor space ($\rho=-0.67$), and is strongest in VAE latent spaces ($\rho=-0.72$ for 10D PCA and $-0.74$ for 22D PCA).

\begin{figure}[h]
  %\vspace{-2.2mm}
  \centering
  \begin{subfigure}[t]{0.69\linewidth}
    \centering
    \includegraphics[width=\linewidth,clip,trim=0cm 0.2cm 0cm 1.0cm]{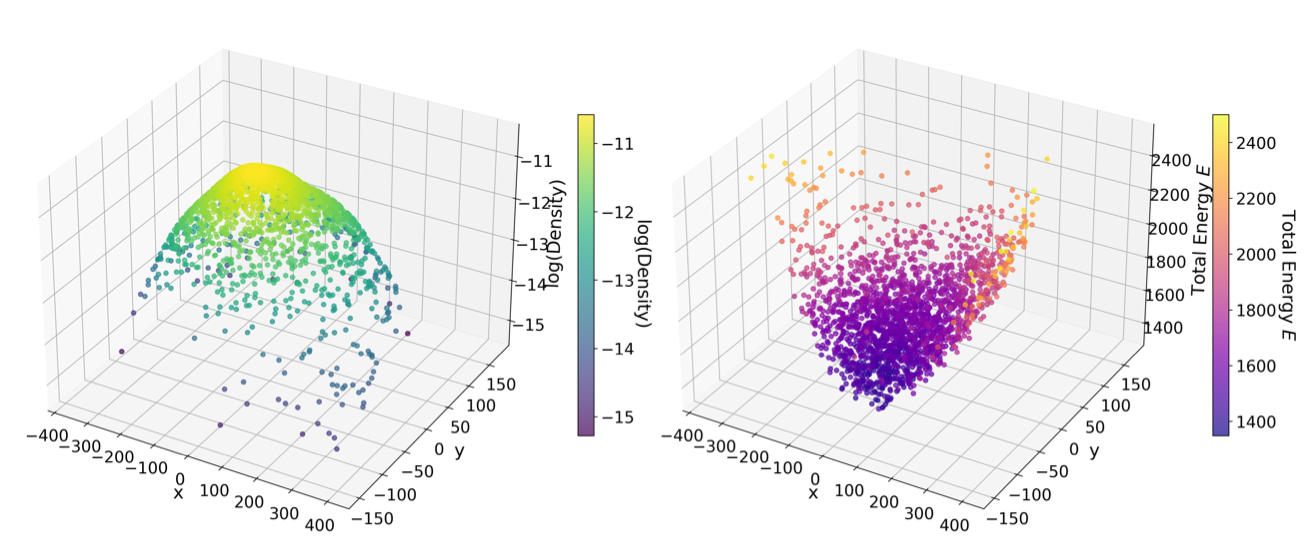}
    \caption{KPE vs. density KDE surface}
    \label{fig:a_density_3d}
  \end{subfigure}\hfill
  \begin{subfigure}[t]{0.31\linewidth}
    \centering
    \includegraphics[width=\linewidth,clip,trim=0cm 0.2cm 0cm 1.0cm]{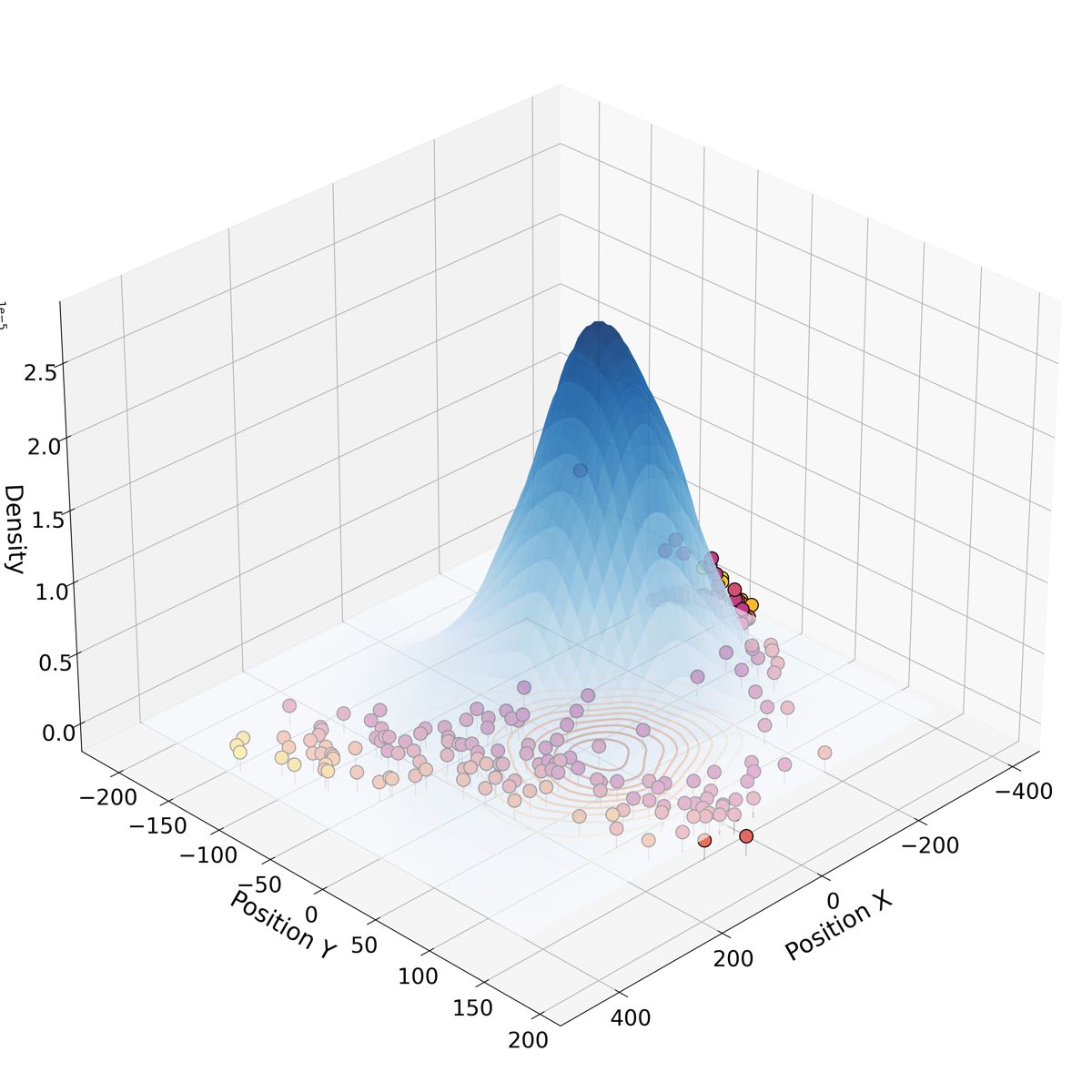}
    \caption{High-energy region highlights}
    \label{fig:high_energy_region}
  \end{subfigure}
  \caption{\textbf{High-KPE samples lie in locally sparse regions.} (a) On CIFAR-10 at $\mathrm{NFE}=150$, the $\log(\text{support})$ surface (left) is anti-aligned with KPE (right): higher local support corresponds to lower energy. (b) The top 10\% KPE samples (overlaid) cluster in locally sparse areas, consistent with Theorem~\ref{prop:kpe-density}.}
  \label{fig:a_density_pair}
    % \vspace{-2em}
\end{figure}

\begin{table}[!t]
 % \vspace{-0.5em}
  \centering
  \footnotesize
  \caption{\textbf{KPE is negatively correlated with the estimated local support of the training set.} Spearman $\rho$ and Cliff's $\delta$ on CIFAR-10 and ImageNet-256 using $k$-NN and KDE support estimates for $\mathrm{NFE}\in\{10,50,150\}$. The negative correlation strengthens with larger $\mathrm{NFE}$ on CIFAR-10 and remains weaker but consistently valid on ImageNet-256.}
  \setlength{\tabcolsep}{3pt}
  \renewcommand{\arraystretch}{1.0}
  \begin{tabular}{l@{\hspace{2pt}}l@{\hspace{4pt}}ccc@{\hspace{4pt}}ccc@{}}
    \toprule
    & & \multicolumn{3}{c@{}}{CIFAR-10 ($\mathrm{NFE}$)} & \multicolumn{3}{c@{}}{ImageNet-256 ($\mathrm{NFE}$)} \\
    \cmidrule(lr){3-5} \cmidrule(lr){6-8}
    Metric & & $10$ & $50$ & $150$ & $10$ & $50$ & $150$ \\
    \midrule
    $\rho~\downarrow$
    & \multirow{2}{*}{\rule{0pt}{1.2em}\textnormal{k-NN}}
    & $-0.54$ & $-0.61$ & $-0.65$ & $-0.38$ & $-0.42$ & $-0.38$ \\
    $\delta~\downarrow$
    &
    & $-0.83$ & $-0.89$ & $-0.93$ & $-0.55$ & $-0.58$ & $-0.55$ \\
    \midrule
    $\rho~\downarrow$
    & \multirow{2}{*}{\rule{0pt}{1.2em}\textnormal{KDE}}
    & $-0.54$ & $-0.61$ & $-0.64$ & $-0.31$ & $-0.33$ & $-0.31$ \\
    $\delta~\downarrow$
    &
    & $-0.82$ & $-0.88$ & $-0.92$ & $-0.43$ & $-0.47$ & $-0.43$ \\
    \bottomrule
  \end{tabular}
  \label{tab:knn_kde_correlation}
  \vspace{-1.5em}
\end{table}
\begin{table}[!h]
  \centering
  \footnotesize
  \caption{\textbf{The negative KPE--support correlation is robust across feature spaces.} ImageNet-256, $\mathrm{NFE}=10$, CFG$=1.5$, $n=4{,}000$ samples, $k$-NN with $k=50$. All four feature spaces yield negative Pearson $r$ and Spearman $\rho$, confirming that the relation is not specific to one descriptor.}
  \label{tab:feature_space_robustness}
  \setlength{\tabcolsep}{6pt}
  \renewcommand{\arraystretch}{1.0}
  \begin{tabular*}{\columnwidth}{@{\extracolsep{\fill}}lccc@{}}
  \toprule
  Feature space & Dim & Pearson $r$ & Spearman $\rho$ \\
  \midrule
  Descriptors+PCA & $22 \to 2$  & $-0.34$ & $-0.38$ \\
  Descriptors     & $22$        & $-0.66$ & $-0.67$ \\
  VAE + PCA       & $4096 \to 10$ & $-0.69$ & $-0.72$ \\
  VAE + PCA       & $4096 \to 22$ & $-0.72$ & $-0.74$ \\
  \bottomrule
  \end{tabular*}
  \vspace{-1em}
\end{table}

\begin{figure}[!h]
\vspace{-2mm}
  \centering
  \begin{minipage}[t]{0.48\linewidth}
    \centering
    \includegraphics[width=\linewidth]{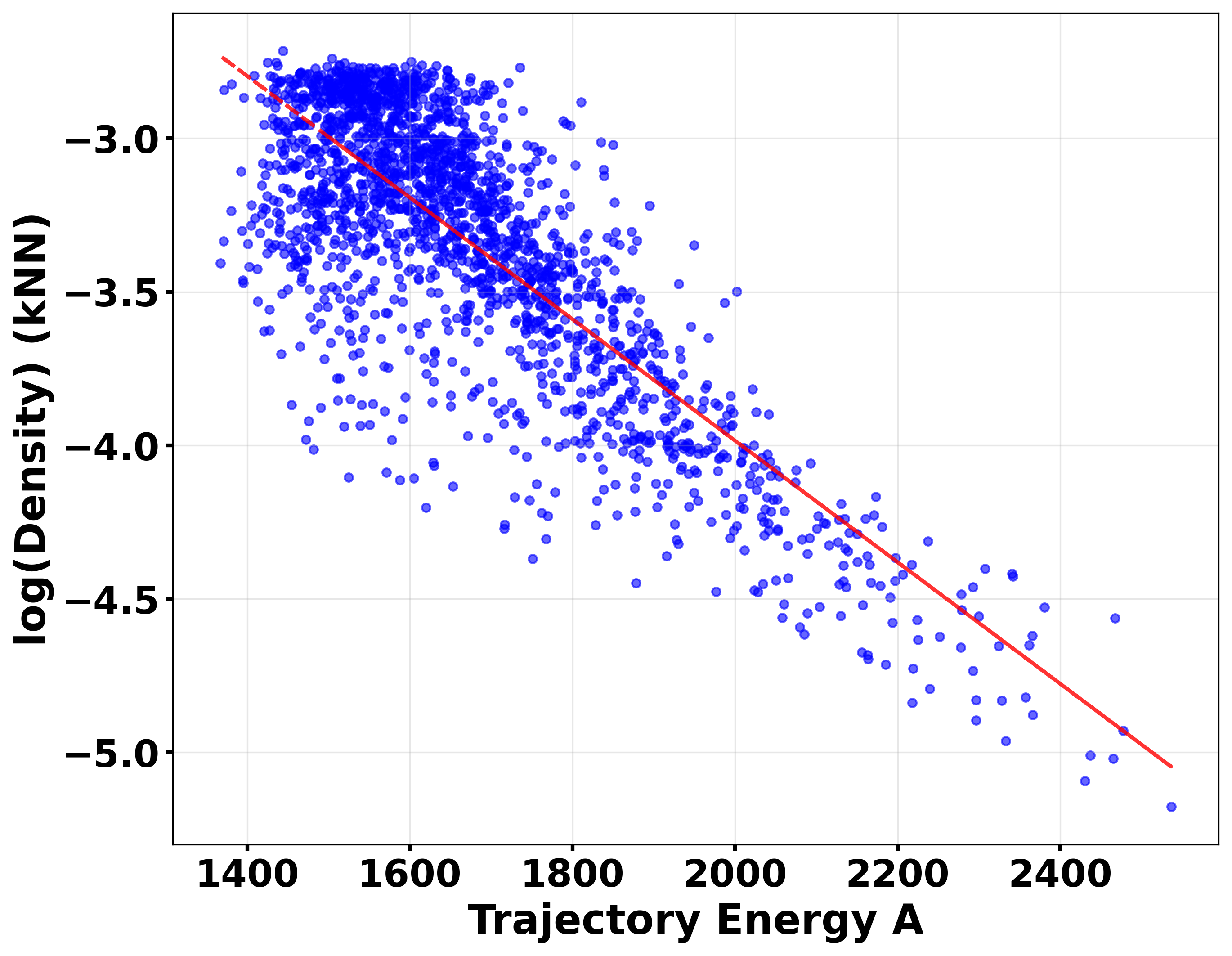}
  \end{minipage}
  \hfill
  \begin{minipage}[t]{0.48\linewidth}
    \centering
    \includegraphics[width=\linewidth]{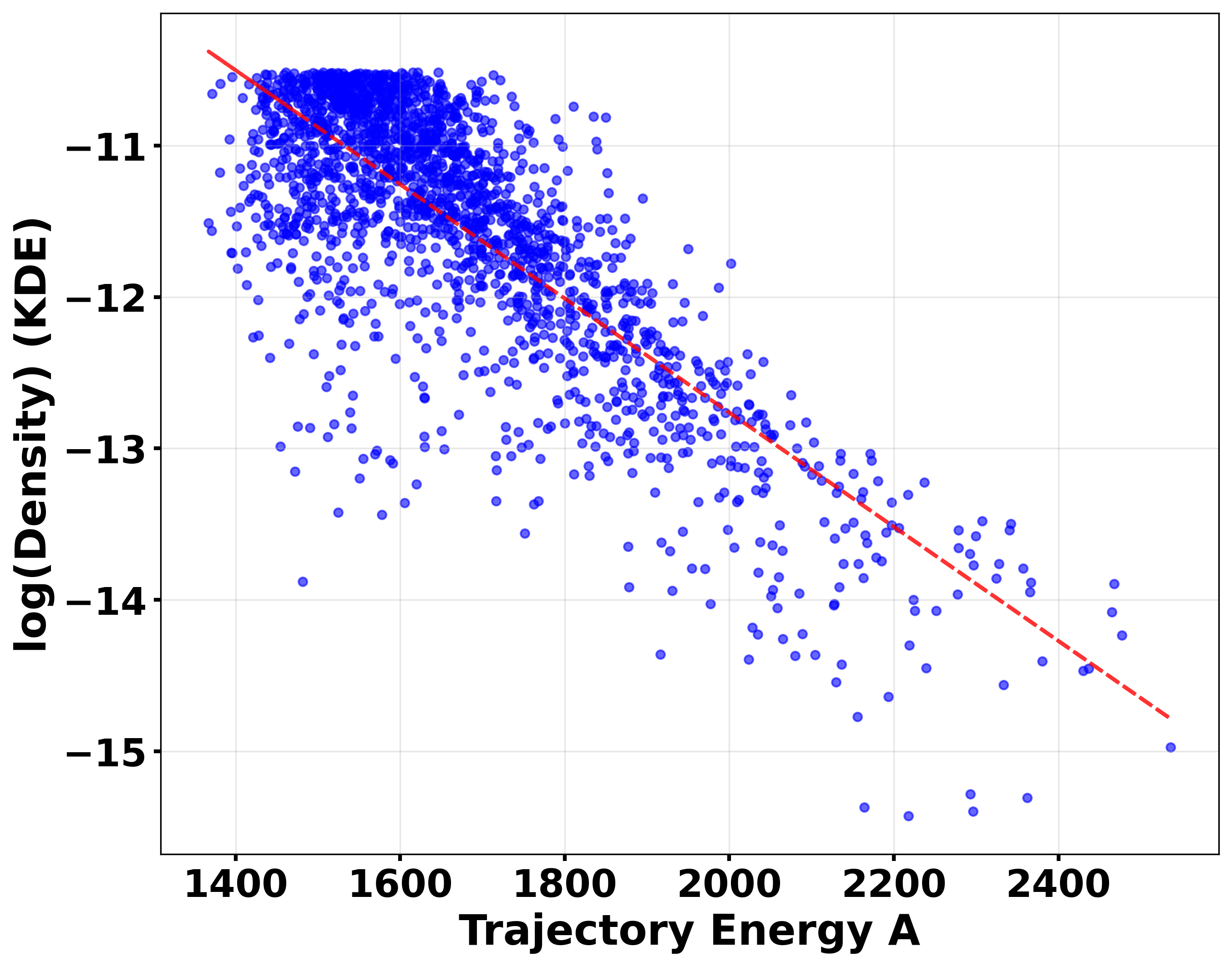}
  \end{minipage}
  \caption{\textbf{Strong negative correlation between KPE and local training support.} Scatter plot of KPE versus training log-support on CIFAR-10 ($\mathrm{NFE}=150$, $n=2{,}000$ samples). Left: $k$-NN; right: KDE. Each point represents one generated sample; red line shows linear regression fit. Spearman correlations are strongly negative (k-NN: $\rho=-0.65$; KDE: $\rho=-0.64$), indicating a strong monotonic inverse relationship.}
  % High-KPE samples ($E > 1.0$) concentrate in low-density regions ($\log\text{-density} < -6$).}
  \label{fig:energy_density_scatter_knn}
  \vspace{-1em}
\end{figure}

\subsection{Theoretical Analysis}
\label{sec:theoretical-analysis}

To better understand the above empirical findings, we analyze neural flow matching through the lens of \emph{empirical flow matching} (EFM) \cite{bertrand2025closed, lim2025hidden}.

Let $z$ denote the latent variable (i.e., the $x(t)$ in \S\ref{sec:kpe}) and let $\{x^{(i)}\}_{i=1}^N \subset \mathbb{R}^d$ be the training data.
For $t\in[0,1]$, the conditional  probability path with a general schedule $\gamma(t)$ is:
\begin{equation}
  % \vspace{-2mm}
\label{eq:conditional-bridge}
    p_t(z \mid x^{(i)}) = \mathcal{N}\big(z; \gamma(t) x^{(i)}, (1-\gamma(t))^2 I_d\big),
\end{equation}
where $\gamma:[0,1]\to[0,1]$ is differentiable, $\gamma(0)=0$, and $\gamma(1)=1$.
Define the mixture density and responsibilities
\begin{equation*}
    \hat{p}_t(z) = \frac{1}{N}\sum_{i=1}^N p_t(z \mid x^{(i)}),
    \quad
    \lambda_i(z,t) := \frac{p_t(z|x^{(i)})}{\sum_{j=1}^N p_t(z|x^{(j)})}
    \vspace{-1mm}
\end{equation*}
with $\lambda_i(z,t)$ indicating the contribution of component $i$ at $(z,t)$.
Let $\hat{u}^*(z,t)$ be the optimal velocity under the EFM regression objective; when $\gamma(t)=t$, it reduces to the closed-form expression in \S\ref{sec:memorization}. Denote $\sigma_t^2 := (1-\gamma(t))^2$.

\begin{lemma}[Score-Based Decomposition]
\label{lem:score-decomposition-main}
The optimal velocity field minimizing the EFM regression objective admits the closed-from representation \textup{(proof in Appendix~\ref{subsec:velocity-score})}:
\begin{equation}
\label{eq:score-decomposition-main}
    \hat u^*(z,t)
    =
    \alpha(t)\,\nabla_z \log \hat p_t(z) + \beta(t)\,z,
\end{equation}
where
$\alpha(t)=\frac{\dot\gamma(t)\sigma_t^2}{\gamma(t)(1-\gamma(t))}$
and
$\beta(t)=\frac{\dot\gamma(t)}{\gamma(t)}$.
\end{lemma}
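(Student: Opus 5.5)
The plan is to compute $\hat u^*$ directly as a posterior-weighted average of conditional velocities, and separately write the mixture score as a posterior-weighted average. The posterior mean $\sum_i \lambda_i x^{(i)}$ is the only data-dependent quantity in both, so eliminating it between the two expressions gives \eqref{eq:score-decomposition-main}. Throughout I work for $t\in(0,1)$ with $\gamma(t)\in(0,1)$, so that $\sigma_t>0$ and the divisions by $\gamma$ and $1-\gamma$ are legitimate. The endpoints are excluded, which is consistent with the stated coefficients being singular there.

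\textbf{Step 1: conditional velocity.} The bridge \eqref{eq:conditional-bridge} is realized by $z=\gamma(t)x^{(i)}+(1-\gamma(t))\epsilon$ with $\epsilon\sim\mathcal N(0,I)$. Differentiating in $t$ at fixed $(x^{(i)},\epsilon)$ gives $\dot\gamma(x^{(i)}-\epsilon)$. Substituting $\epsilon=(z-\gamma x^{(i)})/(1-\gamma)$ expresses this as a field in $z$:
\[
u_t(z\mid x^{(i)})=\frac{\dot\gamma(t)}{1-\gamma(t)}\bigl(x^{(i)}-z\bigr).
\]

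\textbf{Step 2: EFM regression optimum.} The EFM objective is the empirical analogue of \eqref{eq:cfm_objective}, with $x^{(i)}$ drawn uniformly. Its pointwise minimizer is the conditional expectation of the regression target given $z$ and $t$. Under the uniform prior on $i$, the posterior over $i$ given $z$ is exactly $\lambda_i(z,t)$. Hence
\[
\hat u^*(z,t)=\sum_{i=1}^N\lambda_i(z,t)\,u_t(z\mid x^{(i)})=\frac{\dot\gamma}{1-\gamma}\Bigl(\sum_i\lambda_i x^{(i)}-z\Bigr).
\]

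\textbf{Step 3: score of the mixture.} Each component is Gaussian, so $\nabla_z p_t(z\mid x^{(i)})=p_t(z\mid x^{(i)})\,(\gamma x^{(i)}-z)/\sigma_t^2$. Dividing by $\sum_j p_t(z\mid x^{(j)})$ gives
\[
\nabla_z\log\hat p_t(z)=\sum_i\lambda_i\,\frac{\gamma x^{(i)}-z}{\sigma_t^2},
\]
and therefore
\[
\sum_i\lambda_i x^{(i)}=\frac{\sigma_t^2\nabla_z\log\hat p_t(z)+z}{\gamma}.
\]

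\textbf{Step 4: eliminate the posterior mean.} Substituting Step 3 into Step 2 gives
\[
\hat u^*=\frac{\dot\gamma\,\sigma_t^2}{\gamma(1-\gamma)}\nabla_z\log\hat p_t+\frac{\dot\gamma}{1-\gamma}\Bigl(\frac1\gamma-1\Bigr)z.
\]
The second coefficient simplifies to $\dot\gamma/\gamma$, which yields $\alpha(t)$ and $\beta(t)$ as stated.

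The only step needing care is Step 2, namely justifying that the regression optimum is the posterior-weighted average. The argument is the standard $L^2$ projection argument, and I would verify two details. First, the target can be written as a function of $(z,x^{(i)})$ alone, since $\epsilon$ is determined by $z$ and $x^{(i)}$. Second, the posterior over $i$ given $z$ equals $\lambda_i$ because all components carry equal weight $1/N$.
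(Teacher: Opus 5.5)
Your proposal is correct and follows essentially the same route as the paper's proof: derive the conditional velocity $\frac{\dot\gamma}{1-\gamma}(x^{(i)}-z)$, average it with the posterior weights $\lambda_i$, write the mixture score as a $\lambda$-weighted average, and eliminate the data-dependent term. The paper does this last step by rewriting $\sum_i\lambda_i(x^{(i)}-z)$ in terms of $\sum_i\lambda_i(\mu_i(t)-z)=\sigma_t^2\nabla_z\log\hat p_t(z)$, which is algebraically the same as your elimination of the posterior mean.
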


% \noindent
% \textbf{From velocity to energy-density relation.}
% To connect Lemma~\ref{lem:score-decomposition-main}'s velocity decomposition to an energy-density relation, observe that squaring~\eqref{eq:score-decomposition-main} yields
% $\|\hat u^*(z,t)\|^2 \propto \|\nabla_z \log \hat p_t(z)\|^2 + \text{cross-terms}$.
% The key question is: \emph{how does the mixture score $\nabla_z \log \hat p_t$ relate to the mixture density $\hat p_t$?}

% Under posterior dominance ($\lambda_{i^*}(z,t) \ge 1-\varepsilon$), the mixture is \emph{locally dominated by a single Gaussian component}. This has two crucial consequences (proved in Appendix~\ref{app:appendix-KPE-density}):
% \begin{itemize}[leftmargin=1.2em,itemsep=0.5pt]
% \item \textbf{Density has quadratic form}: $-\log\hat{p}_t(z) \approx \frac{1}{2\sigma_t^2}\|z-\mu_{i^*}(t)\|^2$ up to controlled remainder.
% \item \textbf{Score is linear}: $\nabla_z\log\hat{p}_t(z) \approx -\frac{1}{\sigma_t^2}(z-\mu_{i^*}(t))$ up to controlled remainder.
% \end{itemize}
% Substituting the score approximation into~\eqref{eq:score-decomposition-main} and computing $\|\hat u^*\|^2$ yields terms proportional to $\|z-\mu_{i^*}\|^2$, which by the density approximation equals $-\log\hat{p}_t(z)$ up to constants. Careful tracking of all remainders establishes the following quantitative result.

\begin{theorem}[Energy-Density Relation]
\label{prop:kpe-density}

Under the \emph{posterior dominance regime}, at each $(z,t)$ there exists a dominant component $i^*$ such that $\lambda_{i^*}(z,t) \ge 1-\varepsilon$ for some $\varepsilon\in(0,1/2)$, the instantaneous kinetic energy is affinely bounded by the negative log-density \textup{(proof in Appendix~\ref{subsec:energy-density-relation}.)}:
\begin{equation}
\label{eq:inst-energy-neglog-maintext}
    \begin{aligned}
        c_1(t)\big(-\log \hat{p}_t(z)\big) - C_t'
        \;&\le\;
        \big\|\hat{u}^*(z,t)\big\|^2 \\
        &\le\;
        c_2(t)\big(-\log \hat{p}_t(z)\big) + C_t'.
    \end{aligned}
\end{equation}
where the constants satisfy $c_1(t), c_2(t) = \Theta(m(t)^2\sigma_t^2)$ with $m(t) = -\dot\gamma(t)/(1-\gamma(t))$, and $C_t' \in \mathbb{R}$ depends on $\log N$, $-\log(1-\varepsilon)$, and geometric properties of the dominant component.
\end{theorem}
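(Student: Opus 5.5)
The plan is to start from Lemma~\ref{lem:score-decomposition-main} and reduce both the velocity and the log-density, under posterior dominance, to the single quantity $r := \|z-\mu_{i^*}(t)\|^2$ with $\mu_i(t):=\gamma(t)x^{(i)}$; comparing the two then gives the affine bounds.

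\textbf{Step 1 (velocity).} The mixture score is a posterior average of Gaussian scores,
\begin{equation*}
\nabla_z\log\hat p_t(z)=-\frac{1}{\sigma_t^2}\sum_i\lambda_i(z,t)\,\big(z-\mu_i(t)\big),
\end{equation*}
and substituting into Lemma~\ref{lem:score-decomposition-main} gives
\begin{equation*}
\hat u^*=-\frac{\alpha}{\sigma_t^2}\,(z-\bar\mu)+\beta z,\qquad \bar\mu:=\sum_i\lambda_i\mu_i .
\end{equation*}
With $\alpha/\sigma_t^2=\dot\gamma/(\gamma(1-\gamma))$ and $\beta=\dot\gamma/\gamma$, the $z$-coefficient collapses to $-\dot\gamma/(1-\gamma)=m(t)$. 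Hence
\begin{equation*}
\hat u^*(z,t)=m(t)\big(z-\bar x\big),\qquad \bar x:=\bar\mu/\gamma=\sum_i\lambda_i x^{(i)},
\end{equation*}
which is the familiar $(\bar x - z)/(1-t)$ form when $\gamma(t)=t$. I then write $z-\bar x=(z-\mu_{i^*})+(\mu_{i^*}-\bar x)+(1-\gamma)\bar x$, where
\begin{equation*}
\|\mu_{i^*}-\bar\mu\|\le 2\varepsilon\gamma R,\qquad R:=\max_i\|x^{(i)}\|,
\end{equation*}
because $\sum_{i\ne i^*}\lambda_i\le\varepsilon$. Using $(a+b)^2\le(1+\eta)a^2+(1+\eta^{-1})b^2$ in both directions gives
\begin{equation*}
(1-\eta)\,m^2\,r - K_t \;\le\; \|\hat u^*\|^2 \;\le\; (1+\eta)\,m^2\,r + K_t ,
\end{equation*}
where $K_t = O\!\big(m^2 R^2\,(1+\eta^{-1})\big)$ absorbs the bounded shift terms.

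\textbf{Step 2 (density).} I bound $\hat p_t$ around the dominant component. From below, $\hat p_t\ge \frac1N p_t(z\mid x^{(i^*)})$. From above, $\lambda_{i^*}\ge1-\varepsilon$ means $p_t(z\mid x^{(i^*)})\ge(1-\varepsilon)\sum_j p_t(z\mid x^{(j)})$, hence $\hat p_t\le \frac{1}{N(1-\varepsilon)}p_t(z\mid x^{(i^*)})$. Taking logs, with $D_t:=\frac d2\log(2\pi\sigma_t^2)$,
\begin{equation*}
\frac{r}{2\sigma_t^2}+D_t-\log N+\log(1-\varepsilon)
\;\le\; -\log\hat p_t(z) \;\le\;
\frac{r}{2\sigma_t^2}+D_t+\log N .
\end{equation*}
So $r = 2\sigma_t^2(-\log\hat p_t) + O\big(\sigma_t^2(\log N-\log(1-\varepsilon)+|D_t|)\big)$.

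\textbf{Step 3 (combine).} Substituting Step 2 into Step 1 gives the claimed bounds with
\begin{equation*}
c_1=2(1-\eta)\,m^2\sigma_t^2,\qquad c_2=2(1+\eta)\,m^2\sigma_t^2,
\end{equation*}
both $\Theta(m^2\sigma_t^2)$ for fixed $\eta$, for example $\eta=1/2$. The constant $C_t'$ collects $K_t$, $m^2\sigma_t^2\log N$, $-m^2\sigma_t^2\log(1-\varepsilon)$, the normalization $D_t$, and the geometric quantity $R$ together with the distance of $\mu_{i^*}$ from the other centers.

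The main obstacle is Step 1. The cross terms between $z-\mu_{i^*}$ and the posterior-weighted shift cannot be dropped; they must be absorbed via the $\eta$-splitting into a multiplicative $(1\pm\eta)$ change of slope plus an additive constant. The slope has to stay strictly positive, which is what keeps $c_1>0$, and the additive constant must be uniform in $z$. That uniformity relies on dominance holding at the given $(z,t)$ and on the data being bounded by $R$. If the intended statement requires $C_t'$ independent of $R$, I would instead measure the remainder by the posterior spread $\sum_i\lambda_i\|x^{(i)}-x^{(i^*)}\|$, which dominance keeps at size $O(\varepsilon)$ times the diameter of the dataset.
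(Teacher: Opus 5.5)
Your proof is correct. It follows the paper's overall skeleton: an affine form of the velocity with a bounded offset under dominance, the same sandwich $\tfrac1N p_t(z\mid x^{(i^*)})\le\hat p_t(z)\le\tfrac{1}{N(1-\varepsilon)}p_t(z\mid x^{(i^*)})$ as in Lemma~\ref{lem:local-gaussian}, and then substitution. Your velocity step, however, is organized differently.

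The paper keeps the score form and writes $\hat u^*=m(t)z+b_t+e_t$, with $b_t=\frac{\alpha}{\sigma_t^2}\mu_{i^*}$ and $e_t=\alpha r_t$ bounded via Lemma~\ref{lem:score-decomposition}. It compares $\|\hat u^*\|^2$ with $m^2\|z\|^2$ through the crude splits $\|a+b+c\|^2\le3(\cdots)$ and $2AB\le\frac12A^2+2B^2$. Only afterwards does it trade $\|z\|^2$ for $\|z-\mu_{i^*}\|^2$, losing another factor of $2$ each way; this is where $c_1=\frac12m^2\sigma_t^2$ and $c_2=12m^2\sigma_t^2$ come from.

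You instead notice that $b_t+e_t=-m\bar x$, i.e.\ $\hat u^*=m(z-\bar x)$. This is already the intermediate formula in the proof of Lemma~\ref{lem:velocity-score-form}, so the score representation is not actually needed. You then center directly at $\mu_{i^*}$ with a tunable Young parameter. That gives slopes $2(1\pm\eta)m^2\sigma_t^2$, e.g.\ $m^2\sigma_t^2$ and $3m^2\sigma_t^2$ at $\eta=\frac12$. These can be pushed toward the ratio $2m^2\sigma_t^2$, which Corollary~\ref{cor:linear-bridge-constants} identifies only at the level of leading quadratic forms, at the cost of additive constants growing like $\eta^{-1}$. The paper's route is more modular and states its constants through $\|\mu_{i^*}\|$ and $\Delta_t$, matching the theorem's wording. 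Your global radius $R$ can be replaced by these quantities with no extra work.

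There are two slips, neither affecting validity. First, as written, $(z-\mu_{i^*})+(\mu_{i^*}-\bar x)+(1-\gamma)\bar x=z-\gamma\bar x$, not $z-\bar x$. The identity you need is $z-\bar x=(z-\mu_{i^*})+(\mu_{i^*}-\bar\mu)-(1-\gamma)\bar x$. Its offset has norm at most $2\varepsilon\gamma R+(1-\gamma)R\le R$, so $K_t$ is as you claim. Second, the lower bound on $-\log\hat p_t(z)$ should carry $+\log N$ rather than $-\log N$; yours is true but needlessly weak.

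On your fallback remark: the offset $(1-\gamma)\bar x$ is not controlled by the posterior spread, so $C_t'$ must still involve $\|x^{(i^*)}\|$. This is unavoidable. At $z=\mu_{i^*}$ one has $\|\hat u^*\|^2\approx\dot\gamma^2\|x^{(i^*)}\|^2$, while $-\log\hat p_t(z)$ there does not depend on the data scale. The paper's $C_\pm(t)$ contain the same dependence, through $m^2\|\mu_{i^*}\|^2$ and $\|b_t\|^2=m^2\|x^{(i^*)}\|^2$. The statement allows it.
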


\begin{remark}[Explicit Constants and Integrated Form]
\label{rem:kpe-density-constants}
The constants in Theorem~\ref{prop:kpe-density} can be chosen explicitly as
$c_1(t) = \tfrac{1}{2}m(t)^2\sigma_t^2$ and $c_2(t) = 12\,m(t)^2\sigma_t^2$
(see Theorem~\ref{thm:energy-density}).

Integrating~\eqref{eq:inst-energy-neglog-maintext} along a trajectory $z_{0\to1} = \{z(t)\}_{t=0}^1$ yields:
\begin{equation}
\label{eq:kpe-density-main}
    E(z_{0\to1})
    = \Theta\left(\int_0^1 \big(-\log \hat{p}_t(z(t))\big)\,dt\right) + O(1).
\end{equation}
This formalizes the inverse relationship: trajectories traversing low-density regions accumulate higher kinetic energy.
\end{remark}

% \noindent\textbf{Intuition.}
% Under posterior dominance, $\hat p_t(z)$ is locally well-approximated by a single Gaussian component, so $-\log\hat p_t(z)$ is (up to bounded remainder) a quadratic form in $\|z-\mu_{i^*}(t)\|$. Meanwhile, $\hat u^*(z,t)$ becomes approximately linear in $z-\mu_{i^*}(t)$, yielding $\|\hat u^*(z,t)\|^2$ that scales with the same quadratic form.

% \noindent\textbf{Proof sketch.}
% Lemma~\ref{lem:score-decomposition-main} reduces $\hat u^*$ to a score term plus a linear drift. Appendix~\ref{app:appendix-KPE-density} then proves a local quadratic expansion of $-\log \hat p_t(z)$ and a two-sided bound for $\|\hat u^*(z,t)\|^2$ under $\lambda_{i^*}(z,t)\ge 1-\varepsilon$, which together imply \eqref{eq:inst-energy-neglog-maintext}.

% \subsection{Discussion}
% \label{sec:findings-synthesis}

% Our findings reveal that semantically informative samples reside on the sparse frontier of the data distribution, requiring higher kinetic effort to reach (Proposition~\ref{prop:kpe-density}). While KPE correlates with both quality (Finding~1) and sparse-region exploration (Finding~2), Section~\ref{sec:memorization} reveals a critical non-monotonicity: extreme energy triggers memorization rather than generalization, motivating energy-aware sampling strategies.

%% file: sec/memorization-revised-OPTIMIZED-cam.tex
\section{When Energy Backfires: Extreme Kinetic Energy Drives Memorization}
\label{sec:memorization}

Having shown that higher KPE correlates with higher sample quality, we ask: what happens if KPE is pushed to the extreme?
Using EFM's closed-form expression for the optimal velocity field, we uncover a paradox: although EFM reaches much higher peak power than neural fields, it produces near-exact copies of training data. \emph{Extreme energy drives memorization, not better generation.}

%Section~\ref{sec:two-findings} showed higher KPE correlates with higher quality. What if we push KPE to the extreme?
%
%Using the EFM's closed-form expression for the optimal velocity field,  we discover a paradox: we observe empirically that EFM reaches 3.9$\times$ higher peak power than neural fields, yet outputs near-exact copies of training data. \emph{Extreme energy drives memorization, not better generation.}

% We first derive EFM (\S\ref{sec:closed_form_baseline}), then prove its extreme energy arises from a structural $1/(1-t)$ singularity (\S\ref{sec:kpe_paradox}), and finally validate experimentally (\S\ref{sec:experiments_memorization}).

\subsection{Closed-Form Formula}
\label{sec:closed_form_baseline}

Let $\hat{p}_{\mathrm{data}}=\frac{1}{N}\sum_{i=1}^N \delta_{x^{(i)}}$ denote the empirical data distribution. Under conditional flow matching with Gaussian bridges $p_t(x\mid x^{(i)})=\mathcal{N}(x; tx^{(i)},(1-t)^2I_d)$, the optimal velocity field that minimizes the empirical regression objective admits the following closed-form expression \citep{bertrand2025closed} (see derivation in Appendix~\ref{app:efm_derivation}):
\vspace{-2mm}
\begin{equation}
% \vspace{-2em}
\label{eq:closed_form_u_hat}
\begin{split}
\hat{u}^\star(x,t)
&=
\sum_{i=1}^N \lambda_i(x,t)\,\frac{x^{(i)}-x}{1-t}, \\
\lambda_i(x,t)
&=
\frac{
\exp\!\Big(-\frac{\|x-tx^{(i)}\|^2}{2(1-t)^2}\Big)
}{
\sum_{j=1}^N
\exp\!\Big(-\frac{\|x-tx^{(j)}\|^2}{2(1-t)^2}\Big)
}.
\end{split}
\vspace{-1em}
\end{equation}
% This is a softmax over training samples, scaled by $1/(1-t)$ (Appendix~\ref{app:efm_derivation}).

\subsection{Why EFM Leads to Extreme Energy}
\label{sec:kpe_paradox}

The $1/(1-t)$ factor in Eq.~\eqref{eq:closed_form_u_hat} can create large terminal-time velocities.
In particular, if a solution trajectory remains a fixed distance away from the training set on a terminal
interval and the softmax weights $\lambda_i(x(t),t)$ concentrate on a unique atom, then
$\|\hat u^\star(x(t),t)\|\gtrsim (1-t)^{-1}$ and the terminal contribution to
$\mathrm{KPE}[x]=\frac12\int_0^1\|\dot x(t)\|^2\,dt$ diverges.

%The $1/(1-t)$ factor in Eq.~\eqref{eq:closed_form_u_hat} creates unbounded terminal-time velocities. Whenever a trajectory $x(t)$ solving $\dot{x}(t)=\hat{u}^\star(x(t),t)$ does not coincide exactly with a training point as $t\to 1$, the magnitude $\|\hat{u}^\star(x(t),t)\|$ diverges, causing the kinetic path energy $\mathrm{KPE}[x] = \frac{1}{2}\int_0^1 \|\dot{x}(t)\|^2 dt$ to blow up.
% This insight is captured in the following results.
% (The proofs of Lemma \ref{lem:terminal_blowup}, and Proposition \ref{prop:efm_maximum_energy} are provided in Appendix~\ref{app:energy_paradox_proof}.)

\begin{lemma}[Informal: Terminal energy blow-up]
  \label{lem:terminal_blowup}
  Consider any trajectory segment $t\in[1-\varepsilon,1)$ on which there is a constant $c>0$ such that
  $\|x(t)-x^{(i)}\| \ge c$ for all training samples $\{x^{(i)}\}_{i=1}^N$ and the terminal posterior concentrates on a unique training atom along the segment. Then
  \begin{equation}
  \int_{1-\varepsilon}^{1}\|\hat{u}^\star(x(t),t)\|^2\,dt
  \;=\;
  +\infty.
  \end{equation}
\end{lemma}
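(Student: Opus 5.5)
The plan is to give a pointwise lower bound $\|\hat u^\star(x(t),t)\|\ge c'/(1-t)$ on the terminal segment and then integrate. By the closed form \eqref{eq:closed_form_u_hat}, $\hat u^\star(x,t)=\frac{1}{1-t}\big(\bar x(t)-x\big)$ with $\bar x(t):=\sum_i\lambda_i(x,t)\,x^{(i)}$, the posterior-weighted barycenter. It therefore suffices to show that $\|\bar x(t)-x(t)\|$ stays bounded away from zero for $t\in[1-\varepsilon,1)$.

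\textbf{Quantifying concentration.} I will read the hypothesis as: there is an index $i^\*=i^\*(t)$ with $\lambda_{i^\*}(x(t),t)\ge 1-\eta(t)$, where $\eta(t)\to 0$, or at least $\eta(t)\le\eta_0$ with $\eta_0$ small relative to $c/D$. Here $D:=\max_{i,j}\|x^{(i)}-x^{(j)}\|$ is the diameter of the training set. Write
\[
\bar x-x=(x^{(i^\*)}-x)+\sum_{j\ne i^\*}\lambda_j\big(x^{(j)}-x^{(i^\*)}\big).
\]
The second term has norm at most $\eta D$. The triangle inequality then gives $\|\bar x-x\|\ge c-\eta D$. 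Shrinking the segment if necessary so that $\eta(t)D\le c/2$, we get $\|\bar x(t)-x(t)\|\ge c/2$. This step requires no boundedness of $x(t)$ itself, only the separation hypothesis $\|x(t)-x^{(i)}\|\ge c$ and the finite diameter $D$.

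\textbf{Integration.} The previous steps give $\|\hat u^\star(x(t),t)\|^2\ge \frac{c^2}{4(1-t)^2}$ on $[1-\varepsilon',1)$. Since $\int_{1-\varepsilon'}^1(1-t)^{-2}\,dt=+\infty$, the integral over $[1-\varepsilon,1)$ diverges, because the integrand is nonnegative on the remaining part. Measurability of $t\mapsto\|\hat u^\star(x(t),t)\|^2$ follows from continuity of $\hat u^\star$ on $\mathbb{R}^d\times[0,1)$ and continuity of the trajectory.

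\textbf{Main obstacle.} The hard part is making ``concentrates on a unique atom'' precise enough to control the residual mass $\eta D$; the separation constant $c$ alone is not enough if the weights stay split between atoms. A split posterior is exactly the case where the barycenter could coincide with $x$. I would state the formal version with an explicit assumption $\sum_{j\ne i^\*}\lambda_j\le c/(2D)$ on the segment, which is what the informal hypothesis should mean. Two remarks support this reading. First, when $N=1$ the bound is immediate. Second, $\lambda$ is a softmax with temperature $(1-t)^2\to0$, so generically (away from equidistant configurations) $\eta(t)\to0$ exponentially fast in $(1-t)^{-2}$, and the assumption holds automatically near $t=1$.
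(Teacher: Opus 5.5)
Your proposal is correct and is essentially the paper's proof. It uses the same decomposition $\sum_i\lambda_i(x^{(i)}-x)=(x^{(i^\star)}-x)+\sum_{j\neq i^\star}\lambda_j(x^{(j)}-x^{(i^\star)})$ and the reverse triangle inequality, which give a gap of at least $c/2$ once the residual mass is small, hence $\|\hat u^\star\|\ge c/(2(1-t))$ and the divergence of $\int(1-t)^{-2}\,dt$. The differences are cosmetic: the paper formalizes concentration through a fixed-index margin condition $s_j(t)-s_{i^\star}(t)\ge m$, which yields $1-\lambda_{i^\star}(t)\le (N-1)e^{-m/(2(1-t)^2)}$ (your closing remark made rigorous), and it bounds $\|x^{(j)}-x^{(i^\star)}\|$ by $2R$ rather than by your diameter $D$.
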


\vspace{+1mm}

\begin{proposition}[Informal: Extreme kinetic energy]
\label{prop:efm_maximum_energy} 
The empirical closed-form velocity field  can exhibit terminal-time kinetic energy blow-up. 
%Moreover, any path that delays closing a late-time gap must incur large  kinetic cost.
\begin{enumerate}[label=(\alph*),leftmargin=1.5em,itemsep=1pt]
\vspace{-2mm}
    \item \textbf{Unbounded terminal energy}: For trajectories that remain separated from all training points as $t\to1$  and along which the terminal posterior $\lambda_i(x(t),t)$ concentrates on a unique training atom, the terminal kinetic energy diverges:
    $\int^{1}_{1-\epsilon} \|\hat{u}^\star(x(t), t)\|^2 dt = +\infty$.
\vspace{-1mm}
    \item \textbf{Minimum terminal cost to close a late-time gap}: Any absolutely continuous trajectory with $x(1)=x^{(i)}$ satisfies
    $\int_t^1 \|\dot x(s)\|^2 ds \ge \|x^{(i)}-x(t)\|_2^2/(1-t)$. %Hence, maintaining a non-vanishing terminal gap forces a blow-up in terminal kinetic cost.
\end{enumerate}
\end{proposition}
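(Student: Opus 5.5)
Part (a) follows from Lemma~\ref{lem:terminal_blowup}, so I would first make that lemma precise and then derive the blow-up; part (b) is a separate, elementary Cauchy--Schwarz argument.

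\textbf{Part (a).} Rewrite Eq.~\eqref{eq:closed_form_u_hat} as
\[
\hat u^\star(x,t)=\frac{\bar x(x,t)-x}{1-t},\qquad \bar x(x,t):=\sum_i\lambda_i(x,t)\,x^{(i)},
\]
so $\hat u^\star$ is a posterior-mean displacement scaled by $(1-t)^{-1}$. The ``concentration on a unique atom'' hypothesis should be formalized as: there are an index $i^*$ and $\eta(t)\to0$ with $\lambda_{i^*}(x(t),t)\ge 1-\eta(t)$ on $[1-\varepsilon,1)$. Then
\[
\|\bar x-x^{(i^*)}\|\le \eta(t)\max_j\|x^{(j)}-x^{(i^*)}\| =: \eta(t)D.
\]
By the triangle inequality and the separation assumption $\|x(t)-x^{(i^*)}\|\ge c$,
\[
\|\bar x-x(t)\|\ge c-\eta(t)D\ge c/2
\]
once $t$ is close enough to $1$; shrink $\varepsilon$ if needed so this holds on the whole segment. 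Hence $\|\hat u^\star(x(t),t)\|^2\ge c^2/(4(1-t)^2)$. Since $\int_{1-\varepsilon}^1(1-t)^{-2}\,dt=+\infty$, the integral diverges, proving Lemma~\ref{lem:terminal_blowup} and therefore (a).

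The main subtlety is making the concentration hypothesis precise. A weaker uniform version, $\lambda_{i^*}\ge 1-\eta_0$ with $\eta_0 D<c/2$, already suffices, and I would state it that way. I would also remark on consistency: along a genuine ODE solution, $x(t)$ tends to stay close to some $t x^{(i)}$, so the hypothesis describes off-manifold trajectories or perturbed or numerically integrated paths. The proposition claims only that blow-up \emph{can} occur, so exhibiting the mechanism under the stated hypotheses is enough. To illustrate it, take a single atom $N=1$: then $\lambda\equiv1$ and any path kept at distance $c$ from $x^{(1)}$ gives velocity of exactly $\|x^{(1)}-x\|/(1-t)\ge c/(1-t)$.

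\textbf{Part (b).} For absolutely continuous $x$ with $x(1)=x^{(i)}$,
\[
x^{(i)}-x(t)=\int_t^1\dot x(s)\,ds.
\]
Cauchy--Schwarz gives
\[
\|x^{(i)}-x(t)\|^2\le (1-t)\int_t^1\|\dot x(s)\|^2\,ds,
\]
which rearranges to the bound. Equality holds for constant velocity, i.e.\ the straight line at speed $\|x^{(i)}-x(t)\|/(1-t)$, which is precisely the form of the EFM field when the posterior is a point mass. It follows that if the gap $g(t)=\|x^{(i)}-x(t)\|$ stays bounded below as $t\to1$, the remaining cost is at least $g(t)^2/(1-t)\to\infty$, which links (b) back to (a).
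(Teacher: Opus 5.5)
Your proposal is correct and matches the paper's proof. For (a) you use the same decomposition $\sum_i\lambda_i(x^{(i)}-x)=(x^{(i^*)}-x)+\sum_{j\ne i^*}\lambda_j(x^{(j)}-x^{(i^*)})$, the reverse triangle inequality giving $\|\hat u^\star\|\ge c/(2(1-t))$, and divergence of $\int(1-t)^{-2}\,dt$; for (b) you use the identical Cauchy--Schwarz step. The only difference is that you assume $\lambda_{i^*}\to 1$ directly (with $D=\max_j\|x^{(j)}-x^{(i^*)}\|$), whereas the paper derives this concentration from a score-margin condition $s_j(t)-s_{i^*}(t)\ge m$ via $1-\lambda_{i^*}\le (N-1)e^{-m/(2(1-t)^2)}$ and uses $2R$ in place of $D$, so your hypothesis is, if anything, slightly weaker.
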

See Appendix~\ref{app:energy_paradox_proof} for detailed statements and proofs.
Together, these results show that the $1/(1-t)$ factor creates a potential terminal singularity. Unless the gap to the selected training atom closes sufficiently fast, the terminal kinetic cost can diverge. Since exact matching of the empirical terminal distribution requires trajectories to terminate at discrete training atoms, EFM imposes a sharp terminal-closure requirement: samples must approach their selected atoms rapidly enough as $t\to1$. If they remain too far away near the terminal time, the $1/(1-t)$ scaling produces a terminal energy spike, which is associated with memorization. 
%EFM must hit a discrete training atom at $t=1$; otherwise Lemma~\ref{lem:terminal_blowup} implies that terminal kinetic energy diverges due to the $1/(1-t)$ scaling (see Appendix~\ref{app:energy_paradox_proof} for proofs of Lemma~\ref{lem:terminal_blowup} and Proposition~\ref{prop:efm_maximum_energy}). Thus exact matching requires a terminal ``impulse'' (an energy spike).

\subsection{Experimental Validation}
\label{sec:experiments_memorization}

We compare  \textit{EFM}  with \textit{Vanilla FM} (the baseline where the velocity field is learned by a neural network) using the same ODE solver (the midpoint scheme with $\mathrm{NFE}=100$). EFM uses Eq.~\eqref{eq:closed_form_u_hat} with 100 nearest neighbors.

\subsubsection{Synthetic 2D Datasets}
\label{sec:toy_experiments_memorization}

We test the samplers on three density-stratified datasets (\texttt{dense\_sparse}, \texttt{multiscale\_clusters}, \texttt{sandwich}). Figure~\ref{fig:toy_kpe_density_all} plots cumulative energy (left) and instantaneous power $\|v(t)\|^2$ (right).
The power curves reveal spikes emerging at $t>0.50$--$0.70$, which is consistent with the prediction of Lemma~\ref{lem:terminal_blowup}. EFM achieves \textbf{1.3$\times$--3.9$\times$} higher peak power than Vanilla FM (which exhibits no spikes), with corresponding KPE increases of 30\%--50\% across datasets.

\begin{figure}[!t]
  \centering
  \setlength{\tabcolsep}{1.5pt}
  \renewcommand{\arraystretch}{0.85}
  \footnotesize
  \begin{tabular}{@{}cc@{}}
    \textbf{Cumulative Energy} & \textbf{Instantaneous Power} \\
    \includegraphics[width=0.48\linewidth]{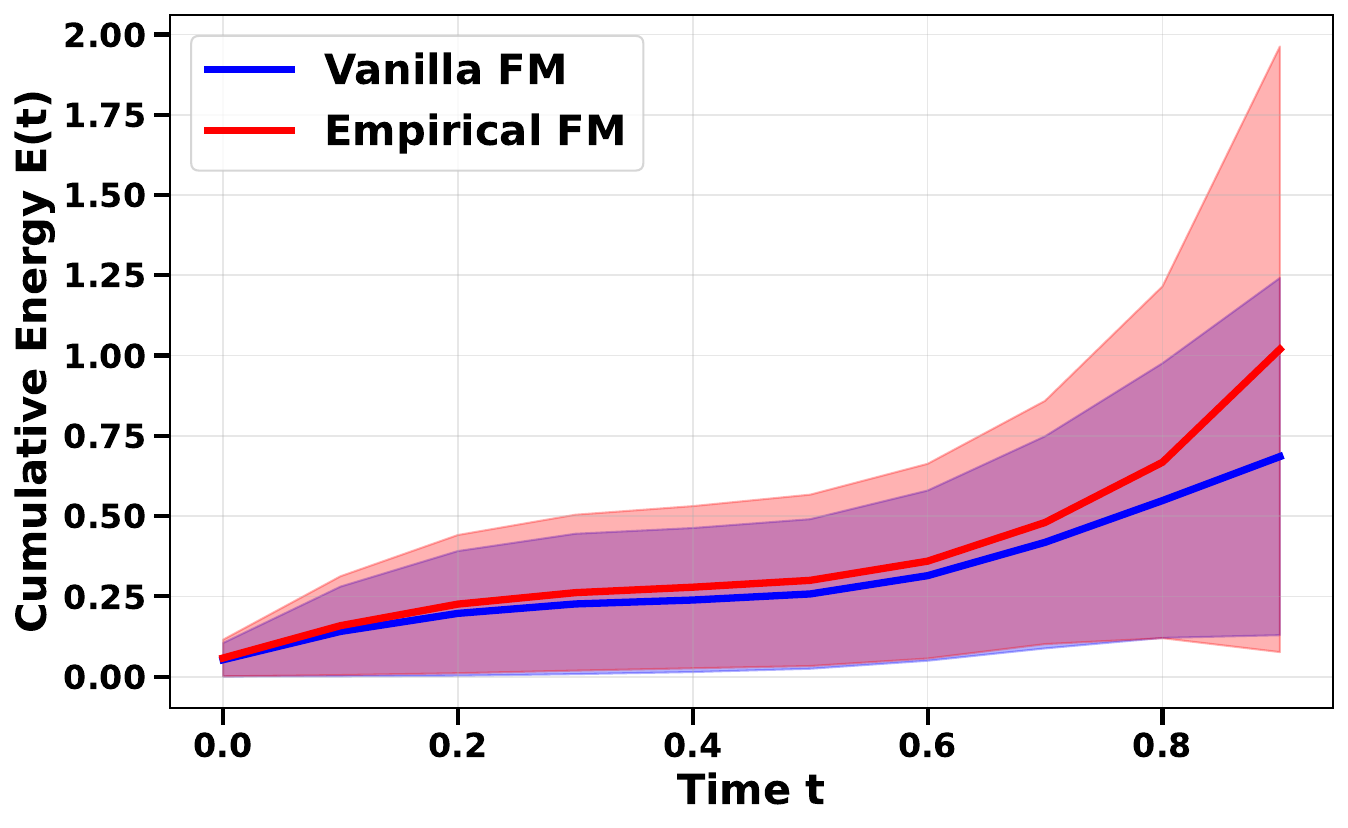}
    & \includegraphics[width=0.48\linewidth]{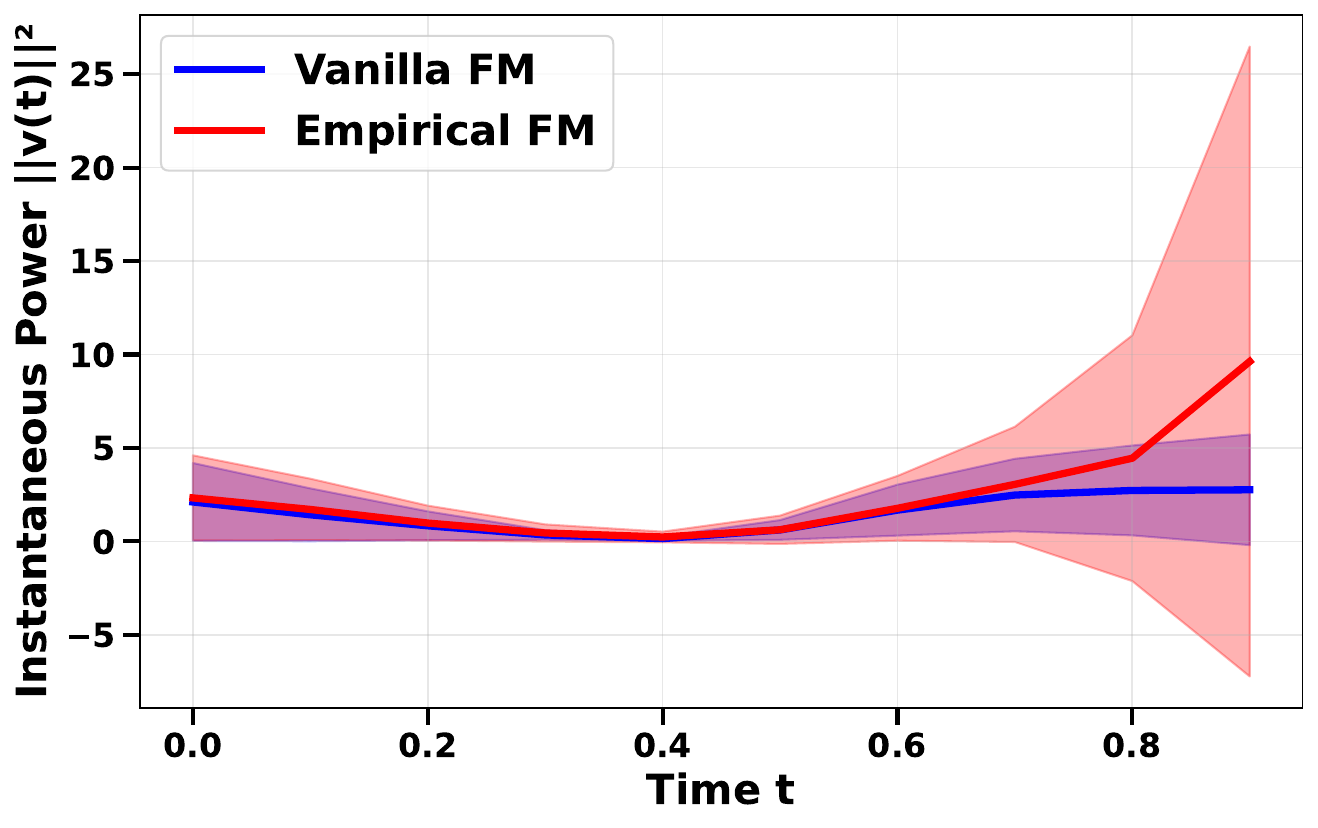}
    \\[-0.5ex]
    \multicolumn{2}{c}{\small\texttt{dense\_sparse}}
    \\[-0.3ex]
    \includegraphics[width=0.48\linewidth]{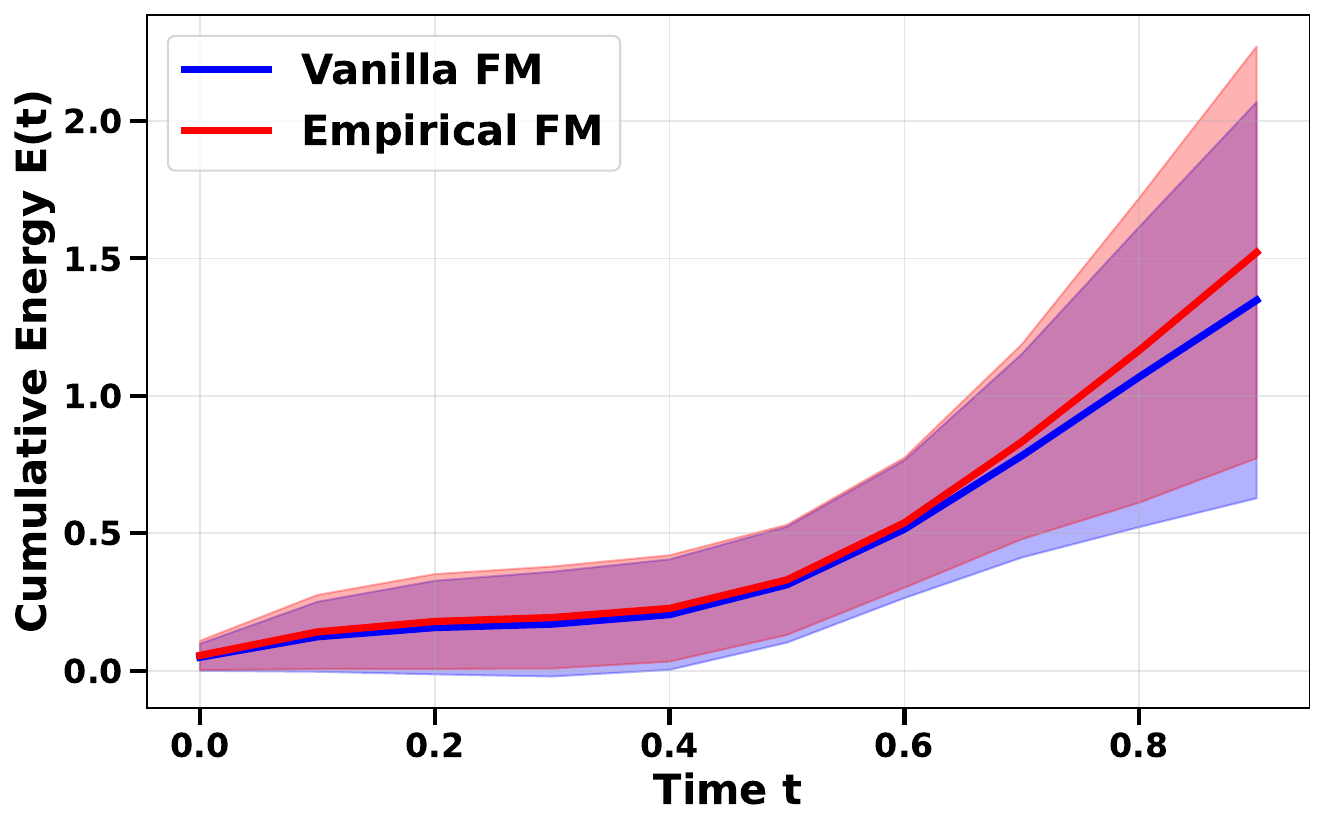}
    & \includegraphics[width=0.48\linewidth]{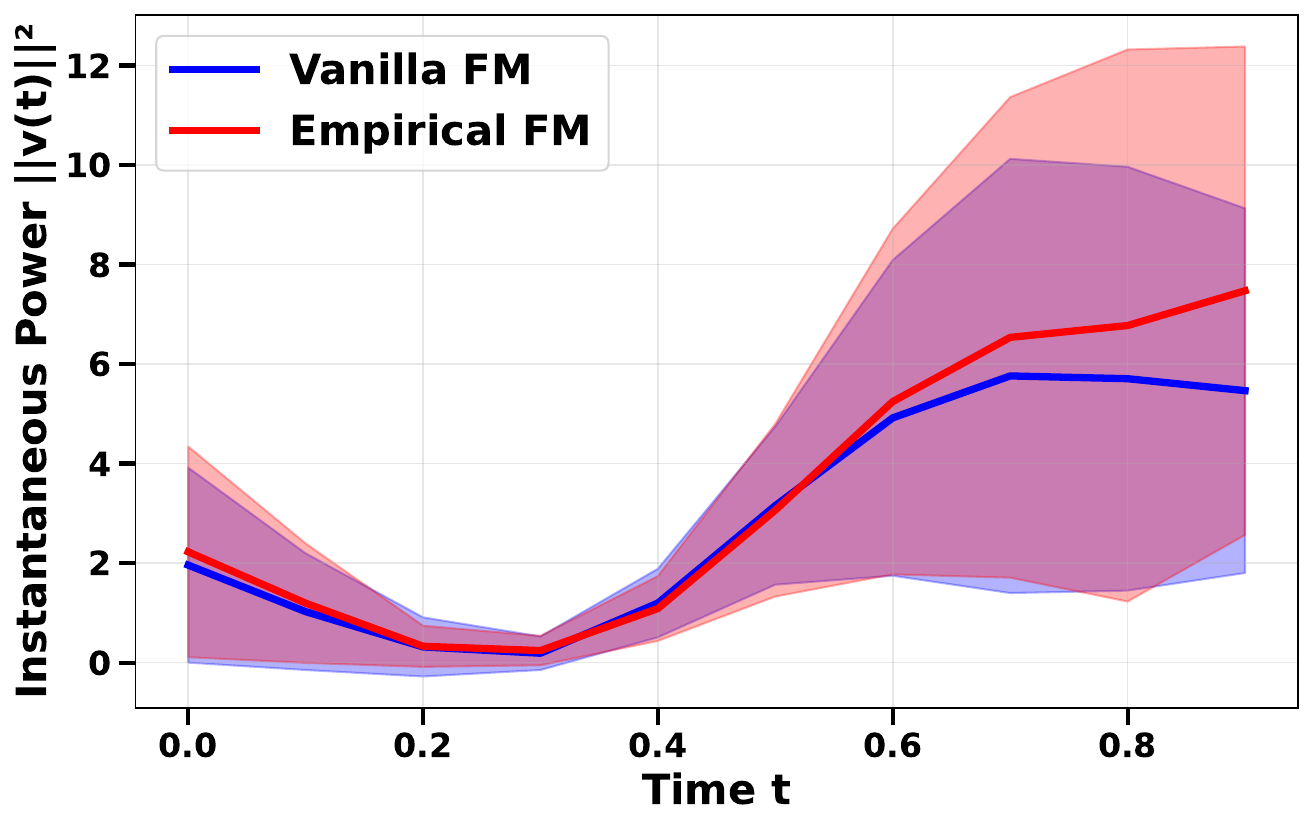}
    \\[-0.5ex]
    \multicolumn{2}{c}{\small\texttt{multiscale\_clusters}}
    \\[-0.3ex]
    \includegraphics[width=0.48\linewidth]{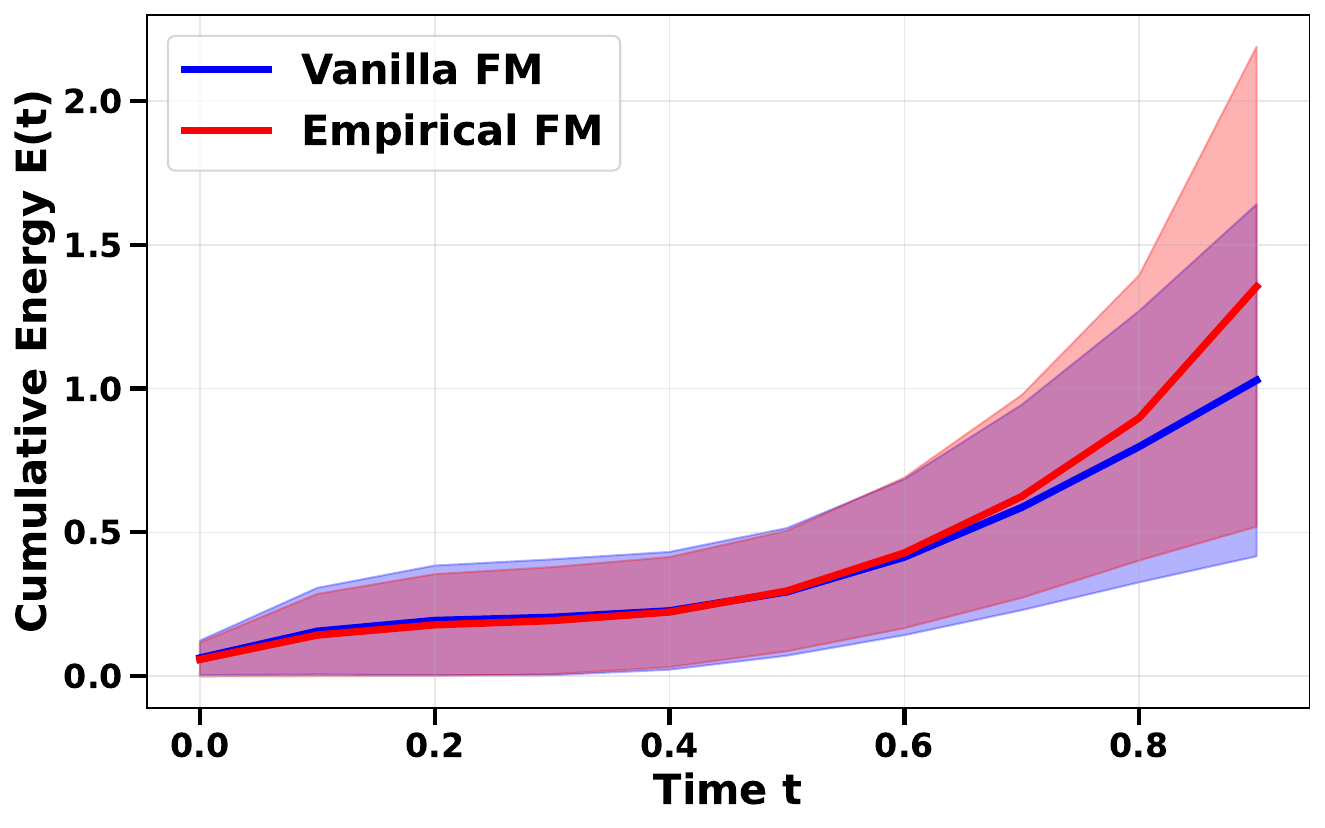}
    & \includegraphics[width=0.48\linewidth]{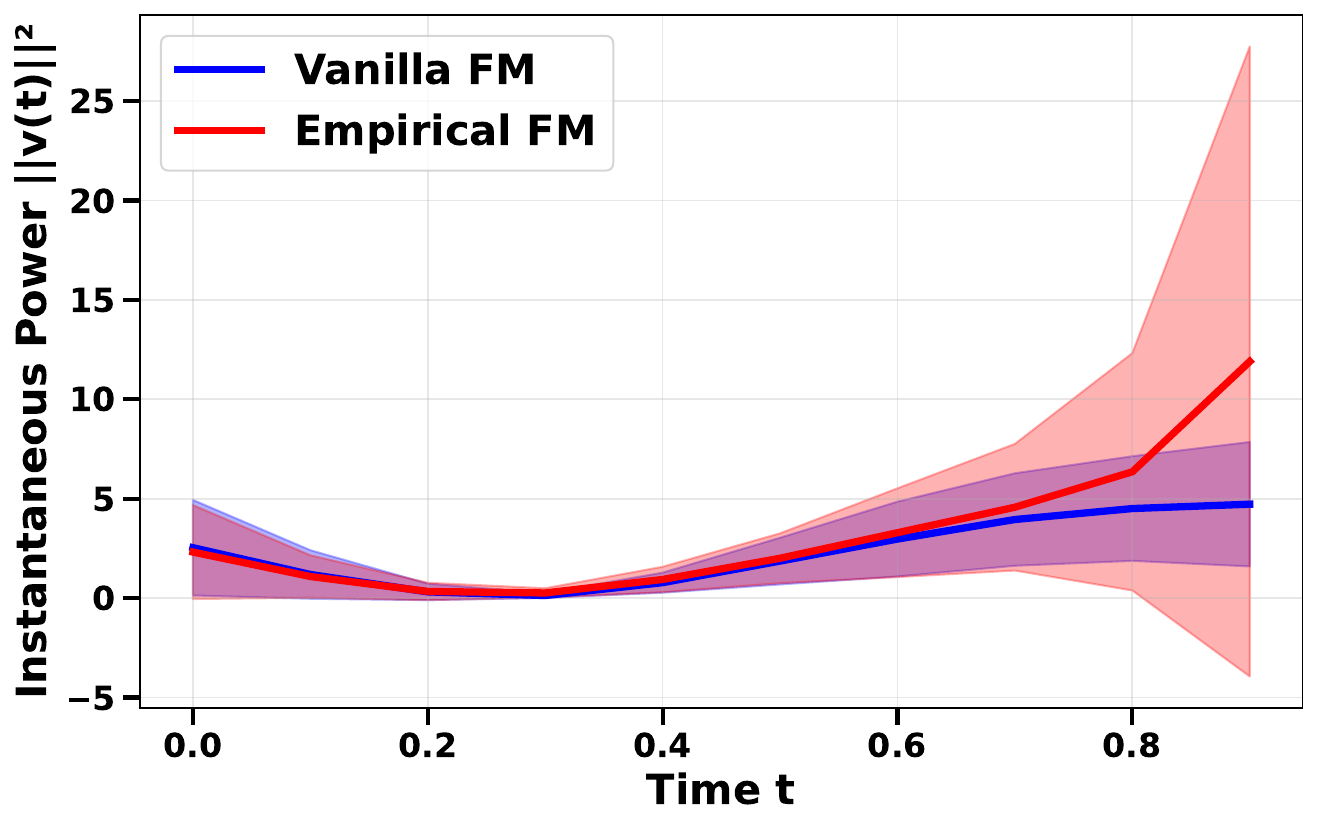}
    \\[-0.5ex]
    \multicolumn{2}{c}{\small\texttt{sandwich}}
  \end{tabular}
\vspace{-1mm}
  \caption{\textbf{Empirical FM shows terminal-time power spikes despite optimal regression loss.} Across three synthetic datasets, Empirical FM develops late-time spikes at $t>0.50$--$0.70$ and reaches \textbf{1.3$\times$--3.9$\times$} higher peak power than Vanilla FM (Lemma~\ref{lem:terminal_blowup}). Left: cumulative kinetic energy $\frac{1}{2}\int_0^t \|v(\tau)\|^2 d\tau$; Right: instantaneous power $\|v(t)\|^2$. More visualizations are provided in Appendix~\ref{app:toy2d_generation_dynamics}.}
  \label{fig:toy_kpe_density_all}
  \vspace{-1em} 
\end{figure}

\subsubsection{CelebA-HQ: KPE Across Training}
\label{sec:experiment_generalization_memorization}

We track CelebA 64$\times$64 training-time behavior as a U-Net FM model approaches the closed-form optimum, measuring FID, $F_{\text{mem}}$, and average KPE across checkpoints. We train on a subset of 1{,}024 images, following \cite{bonnaire2025diffusion}.
As shown in Figure~\ref{fig:generalization_memorization_celeba}, FID improves early (decreasing from $\sim 280$ to $\sim 15$) and then plateaus after $10^4$ iterations, while both KPE and $F_{\text{mem}}$ continue to rise. Notably, $F_{\text{mem}}$ increases sharply after $10^4$ iterations and reaches 98\% by 2M iterations, and KPE keeps increasing to 540. This suggests that, in the late-training regime, higher kinetic energy correlates with memorization rather than further quality gains.
Figure~\ref{fig:celeba_vis_neighbors} provides qualitative evidence: each panel pairs generated samples (left) with their nearest training neighbors (right). Early checkpoints show diverse generations with noticeable gaps to neighbors, whereas late checkpoints produce near-copies whose fine details closely match specific training images, consistent with rising $F_{\text{mem}}$.
More visualizations are provided in Appendix~\ref{app:toy2d_generation_dynamics}.

\begin{figure}[htbp]
  \centering
  \begin{subfigure}[t]{0.49\linewidth}
    \centering
    \includegraphics[width=\linewidth]{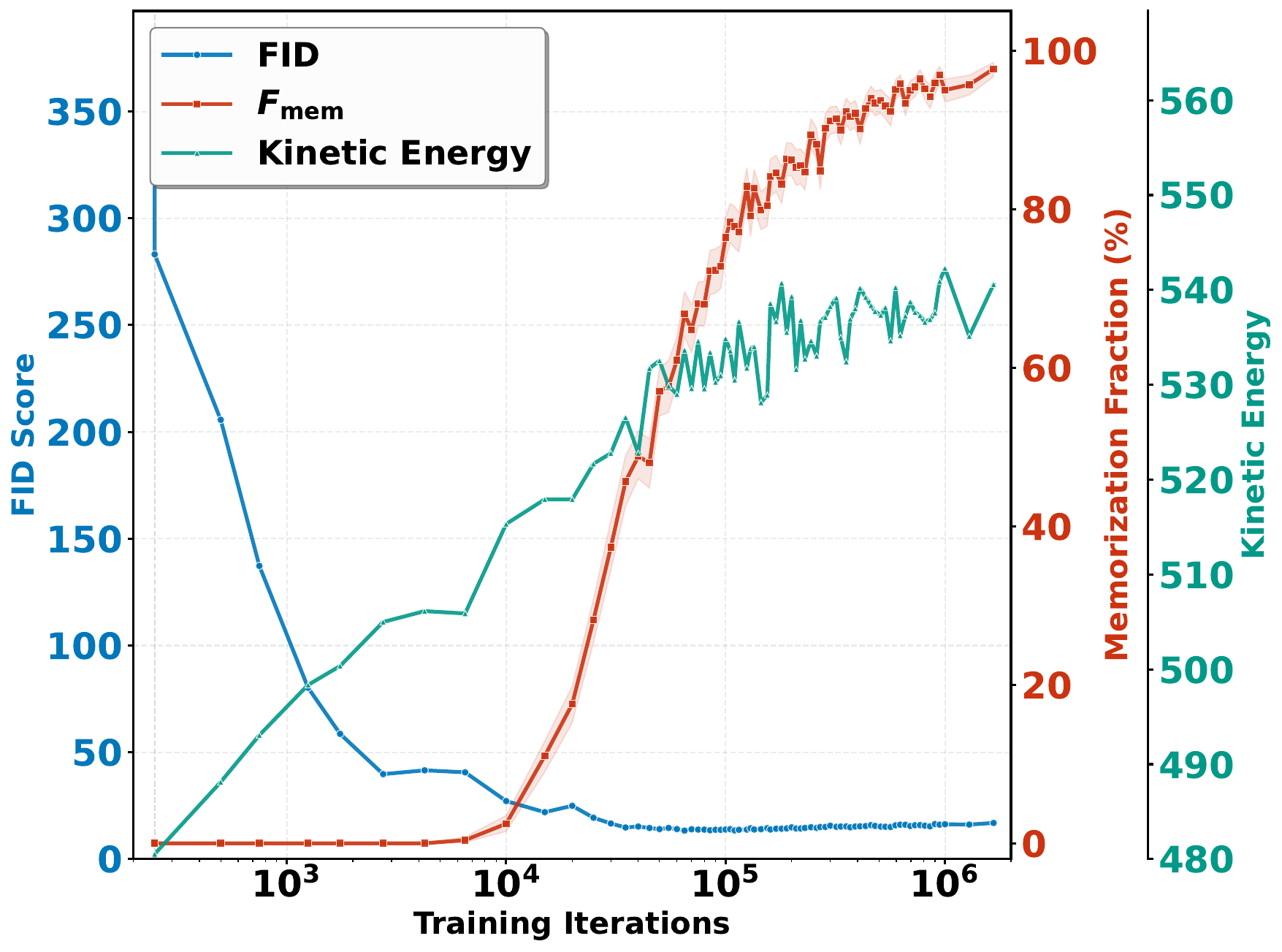}
    \caption{\textbf{Energy-memorization correlation.} FID, $F_{\text{mem}}$, and average KPE across training iterations. As the model approaches the closed-form optimum, both memorization and KPE increase in tandem. }
    \label{fig:generalization_memorization_celeba}
  \end{subfigure}\hfill
  \begin{subfigure}[t]{0.48\linewidth}
    \centering
    \includegraphics[width=1\linewidth]{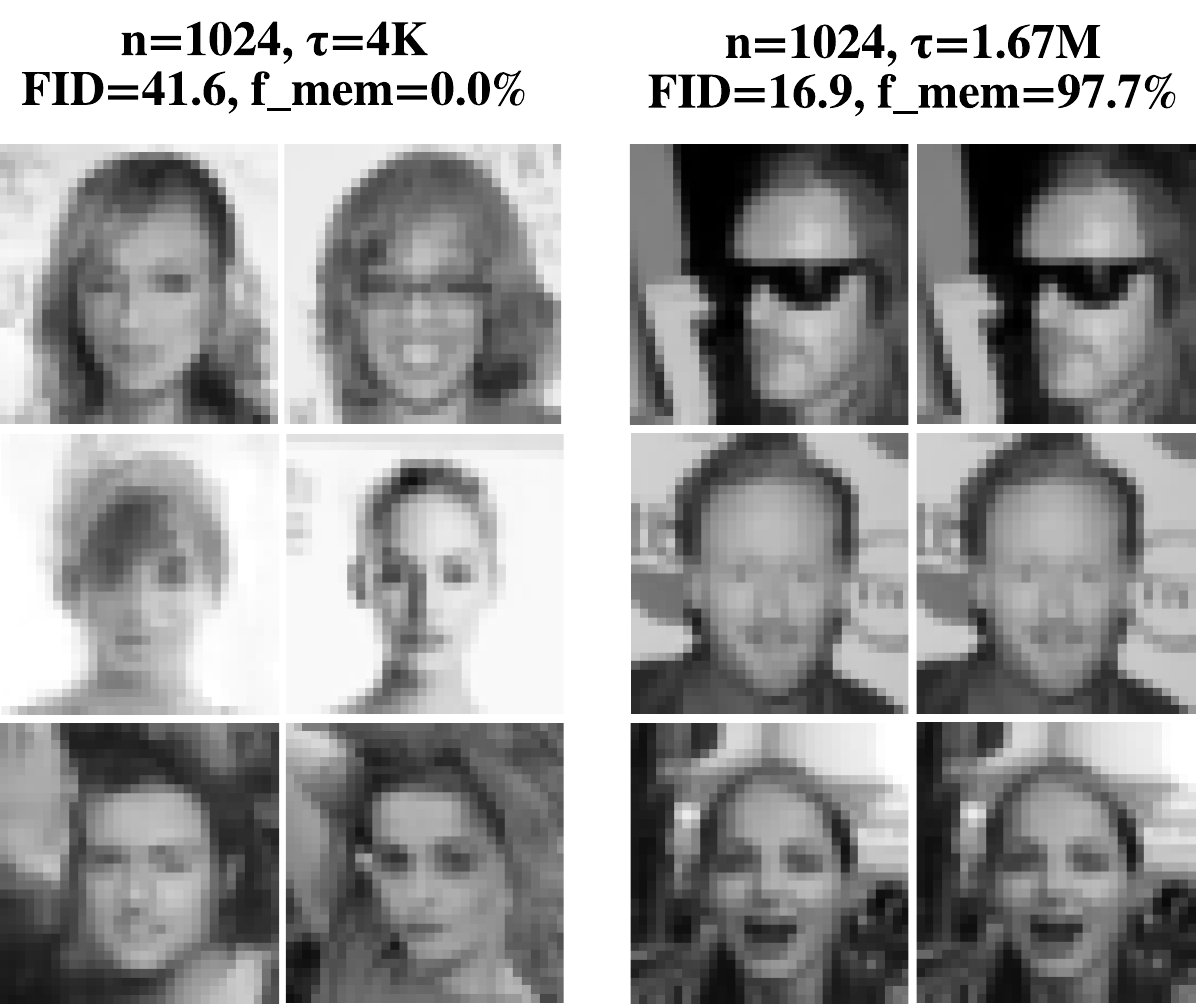}
    \caption{\textbf{Visual evidence.} Left: Early checkpoint (low KPE, high diversity). Right: Late checkpoint (high KPE, memorization). Each shows generated samples (left half) and nearest training neighbors (right half).}
    \label{fig:celeba_vis_neighbors}
  \end{subfigure}
  \caption{\textbf{Energy increases lead to memorization on CelebA.} (a) KPE and $F_{\text{mem}}$ rise throughout training, whereas FID plateaus late. (b) Nearest-neighbor pairs show diverse samples early but near-copies at late checkpoints.}
  \label{fig:celeba_combined}
  \vspace{-1em}
\end{figure}

%% file: sec/method-revised-cam.tex
\section{Kinetic Trajectory Shaping}
\label{sec:kts}
 %\vspace{-2mm}

%Section~\ref{sec:memorization} reveals an energy paradox: terminal KPE spikes promote memorization rather than better generation. 
%Section~\ref{sec:memorization} exposes a terminal-energy paradox.
\emph{How can we exploit the positive KPE-semantic correlation (\S\ref{sec:two-findings}) while suppressing terminal energy spikes?} We propose \textit{Kinetic Trajectory Shaping (KTS)}, a training-free two-phase energy modulation method.

\subsection{A Two-Phase Strategy for Reducing Memorization}
\label{sec:kts_method}

\textbf{The core insight:} \emph{KPE allocation matters.}
Within the normal sampling regime, higher KPE correlates with stronger semantic fidelity (\S\ref{sec:two-findings}). However, in the extreme empirical-matching regime, KPE can blow up near the terminal time due to the $1/(1-t)$ scaling, and this terminal concentration is associated with memorization rather than additional quality gains (\S\ref{sec:memorization}). Thus, we \emph{boost early} and \emph{damp late}.

%\textbf{The core insight: }\emph{Two distinct kinetic regimes.} Empirically, early-time kinetic energy correlates with stronger semantics (\S\ref{sec:two-findings}), while late-time terminal power concentration (e.g., $1/(1-t)$ scaling) increases memorization risk (\S\ref{sec:memorization}). Thus, we \emph{boost early} and \emph{damp late}.

KTS rescales the  velocity via a time-dependent gain $\eta(t)$:
\begin{equation}
\label{eq:kts_velocity}
    \tilde{v}(x_t, t) = \eta(t) \cdot v_\theta(x_t, t),
\end{equation}
with phase-specific modulation:
\begin{equation}
\label{eq:eta_schedule}
    \eta(t) =
    \begin{cases}
    1 + \alpha(t), & t < \tau_{\text{split}} \quad \text{(Kinetic Launch)} \\
    1 - \beta(t), & t \ge \tau_{\text{split}} \quad \text{(Kinetic Soft Landing)}
    \end{cases}
\end{equation}
%\textbf{Kinetic Launch ($\alpha > 0$)} boosts early velocity to increase KPE, pushing samples away from high-density regions toward sparse, semantically rich areas.
\textbf{Kinetic Launch ($\alpha > 0$)} boosts early velocity to increase KPE during the semantic formation phase.
\textbf{Kinetic Soft Landing ($\beta > 0$)} damps late velocity to mitigate the $1/(1-t)$ divergence, reducing terminal singular behavior and memorization.
%\textbf{Kinetic Soft Landing ($\beta > 0$)} damps late velocity to curb the $1/(1-t)$ divergence, preventing terminal singularities and memorization.

\textbf{Launch:} we use a linear decay for a strong initial impulse and smooth transition:
\begin{equation}
\label{eq:alpha_schedule}
    \alpha(t) = \alpha_0 \cdot \max\left(0, 1 - \frac{t}{\tau_{\text{split}}}\right), \quad t \in [0, \tau_{\text{split}}].
\end{equation}
\textbf{Soft Landing:} we use exponential damping to directly suppress terminal singularities:
\begin{equation}
\label{eq:beta_schedule}
    \beta(t) = \beta_0 \cdot \left[ \exp\left(k \cdot (t - \tau_{\text{split}})\right) - 1 \right], \ \ t \in [\tau_{\text{split}}, 1].
\end{equation}
We set $\tau_{\text{split}}=0.6$, aligned with the onset of the spike in Figure~\ref{fig:toy_kpe_density_all}, and use $k=3$ as the default.
Algorithm~\ref{alg:kts} outlines the implementation.
Under Euler integration, KTS acts as a time-dependent effective step-size schedule. Since $v_\theta(x,t)$ is explicitly time-conditioned, it is not merely a time reparameterization: it rescales the velocity magnitude while preserving the model time input $t$.

\begin{algorithm}[t]
\setlength{\abovecaptionskip}{0pt}
\setlength{\textfloatsep}{6pt}
\setlength{\intextsep}{6pt}
\setlength{\floatsep}{6pt}
\footnotesize
\caption{Kinetic Trajectory Shaping (KTS)}
\label{alg:kts}
\begin{algorithmic}[1]
\STATE \textbf{Input:} $v_\theta$, $x_0 \sim \mathcal{N}(0, I)$, $\alpha_0,\beta_0,k,\tau_{\text{split}},\Delta t$
\STATE \textbf{Output:} $x_1$
\FOR{$t = 0$ to $1$ with step $\Delta t$}
    \STATE $v_t \gets v_\theta(x_t, t)$ \hfill {\color{cyan!45!black}// base velocity}
    \IF{$t < \tau_{\text{split}}$}
        \STATE $\eta(t) \gets 1 + \alpha_0 \cdot (1 - t/\tau_{\text{split}})$ \hfill {\color{cyan!45!black}// launch}
    \ELSE
        \STATE $\eta(t) \gets 1 - \beta_0 \cdot (\exp(k(t - \tau_{\text{split}})) - 1)$ \hfill {\color{cyan!45!black}// soft landing}
    \ENDIF
    \STATE $x_{t+\Delta t} \gets x_t + \eta(t)\, v_t\, \Delta t$ \hfill {\color{cyan!45!black}// Euler step}
\ENDFOR
\STATE \textbf{return} $x_1$
\end{algorithmic}
\end{algorithm}
% \begin{remark}[Bounded distributional shift from KTS]
% \label{rem:kts_shift}
% Since $\eta(t)>0$, KTS only rescales the magnitude of $v_\theta$ while preserving its direction, a time reparameterization of the original flow. For balanced KTS $(\alpha_0,\beta_0){=}(0.01,0.01)$, the per-step deviation $|\eta(t){-}1|$ is at most $\approx 0.022$, far from the $1/(1-t)$ singularity; a Gr\"onwall-type bound (Appendix~\ref{app:kts-stability}) then controls the terminal sample shift by $\int_0^1|\eta(t){-}1|\,\|v_\theta(x(t),t)\|\,dt$. 
% % Empirically, ImageNet Precision/Recall change by at most $0.003/0.002$ between FM and balanced KTS (Table~\ref{tab:performance_comparison_imagenet}), confirming the generated distribution is essentially preserved.
% \end{remark}

\subsection{Experiments}
\label{sec:kts_experiments}

\textbf{Setup.} We validate KTS on CelebA at 32$\times$32 resolution (1024 grayscale training images similar to \cite{bonnaire2025diffusion}), and evaluate generation quality and memorization under a controlled setup. We train conditional flow matching models~\cite{lipman2022flow} with a U-Net (32 base channels; 3 resolution levels with attention at higher resolutions) using Adam ($10^{-4}$), batch size 512, for $2\times10^6$ iterations, and sample with Euler ODE solver ($\mathrm{NFE}=100$). We compare baseline FM and KTS (with $\tau=0.6$) across hyperparameters, and report FID (held-out reference statistics) as well as the memorization fraction $F_{\text{mem}}$ defined below.

\textbf{Memorization metric.}
Following \citep{yoon2023diffusion,bonnaire2025diffusion}, let $\mathbf{a}^{\mu_1}(\mathbf{x}),\mathbf{a}^{\mu_2}(\mathbf{x})$ denote the nearest and second-nearest training neighbors of $\mathbf{x}\in[0,1]^d$ in $\ell_2$ pixel space, and $r_{\text{gap}}(\mathbf{x})\!=\!\|\mathbf{x}\!-\!\mathbf{a}^{\mu_1}\|_2/\|\mathbf{x}\!-\!\mathbf{a}^{\mu_2}\|_2$. We declare $\mathbf{x}$ memorized when $r_{\text{gap}}(\mathbf{x})<\tau_{\text{gap}}$ and report
\begin{equation}
\label{eq:fmem-def}
F_{\text{mem}} = \tfrac{1}{n}\!\sum_{i=1}^{n} \mathbf{1}\!\left[\, r_{\text{gap}}(\mathbf{x}^{(i)}) < \tau_{\text{gap}} \,\right],
\end{equation}
with $\tau_{\text{gap}}{=}1/3$ and $n{=}10{,}000$.

\textbf{Main results.}
Table~\ref{tab:performance_comparison_celeba} reports CelebA results at 30K-step checkpoint. Compared with FM (FID@10k 16.68, $F_{\text{mem}}$ 37.34\%), KTS offers a tunable trade-off via $\alpha_0$ and $\beta_0$. Increasing $\beta_0$ reduces memorization (lowest $F_{\text{mem}}$ \textbf{19.36\%} at $\alpha_0{=}0,\beta_0{=}0.02$) but at the cost of worse sample quality (FID@10k 86.56), whereas increasing $\alpha_0$ improves quality (best FID@10k \textbf{11.27} at $\alpha_0{=}0.02,\beta_0{=}0$) with little change in $F_{\text{mem}}$. 
Importantly, enabling both terms ($\alpha_0{=}\beta_0{=}0.01$) yields a sweet spot, improving over FM in both quality and memorization (FID@10k 14.35, $F_{\text{mem}}$ 31.22\%).
%Importantly, enabling both ($\alpha_0{=}\beta_0{=}0.01$) yields a balanced point (FID@10k 14.35, $F_{\text{mem}}$ 31.22\%).

Table~\ref{tab:performance_comparison_imagenet} summarizes results on ImageNet-256. KTS matches or improves FM, while offering a tunable precision--recall trade-off. Increasing $\alpha_0$ improves quality and alignment: $\alpha_0{=}0.05$ gives the best FID (\textbf{11.59}), CLIP (\textbf{24.34}), and precision (\textbf{0.731}), but lowers recall (0.630). Increasing $\beta_0$ improves coverage: $\beta_0{=}0.05$ achieves the highest recall (\textbf{0.657}) with slightly worse FID (12.45) and precision (0.721). Enabling both ($\alpha_0{=}\beta_0{=}0.01$) yields a balanced point (FID 11.63, CLIP 24.20), consistent with KPE trends: $\alpha_0$ increases KPE$_{\text{early}}$ and $\beta_0$ reduces KPE$_{\text{late}}$.

\begin{table}[h]
    \centering
    \footnotesize
    \setlength{\abovecaptionskip}{2pt}
    \setlength{\belowcaptionskip}{2pt}
    \caption{\textbf{Performance comparison on CelebA dataset} at the 30K-step checkpoint. We compare KTS with varying hyperparameters against the FM baseline. \textbf{Bold} indicates the best result.}
    \label{tab:performance_comparison_celeba}
    \setlength{\tabcolsep}{4pt}
    \renewcommand{\arraystretch}{0.95}
    \resizebox{\linewidth}{!}{
    \begin{tabular}{@{}lccccc@{}}
        \toprule
        \multirow{2}{*}{\textbf{Method}} & \multicolumn{2}{c}{\textbf{Hyperparams}} & \multicolumn{3}{c}{\textbf{Metrics}} \\
        \cmidrule(lr){2-3} \cmidrule(l){4-6}
         & $\alpha_0$ & $\beta_0$ & FID@10k $\downarrow$ & $F_{\text{mem}}$ $\downarrow$ (\%) & KPE$_{\text{early}}$ / KPE$_{\text{late}}$ \\
        \midrule
        FM (Baseline) & 0.00 & 0.00 & 16.68 & 37.34 & 310.6 / 212.8 \\
        \midrule
        \multirow{5}{*}{KTS (Ours)}
         & 0.00 & 0.01 & 35.04 & 30.17 & 311.7 / 202.1 \\
         & 0.00 & 0.02 & 86.56 & \textbf{19.36} & 311.7 / 188.9 \\
         \cmidrule(l){2-6}
         & 0.01 & 0.00 & \textbf{11.30} & 37.44 & 312.8 / 211.1 \\
         & 0.02 & 0.00 & \textbf{11.27} & 36.78 & 315.8 / 211.4 \\
         \cmidrule(l){2-6}
         & 0.01 & 0.01 & \underline{14.35} & \underline{31.22} & 313.7 / 201.3 \\
        \bottomrule
    \end{tabular}
    }
    \vspace{-2mm}
\end{table}

\vspace{-1mm}
\begin{table}[H]
    \centering
    \footnotesize
    \setlength{\abovecaptionskip}{2pt}
    \setlength{\belowcaptionskip}{2pt}
    \caption{\textbf{Performance comparison on ImageNet-256 dataset.} We compare KTS with varying hyperparameters against the FM baseline. \textbf{Bold} indicates the best result.}
    \label{tab:performance_comparison_imagenet}
    \setlength{\tabcolsep}{4pt}
    \renewcommand{\arraystretch}{0.95}
    \resizebox{\linewidth}{!}{
    \begin{tabular}{@{}lcccccccc@{}}
        \toprule
        \multirow{2}{*}{\textbf{Method}} & \multicolumn{2}{c}{\textbf{Hyperparams}} & \multicolumn{5}{c}{\textbf{Metrics}} \\
        \cmidrule(lr){2-3} \cmidrule(l){4-8}
         & $\alpha_0$ & $\beta_0$ & FID@10K$\downarrow$ & CLIP$\uparrow$ & Prec.$\uparrow$ & Rec.$\uparrow$ & KPE$_{\text{early}}$ / KPE$_{\text{late}}$ \\
        \midrule
        FM (Baseline) & 0.00 & 0.00 & 11.70 & 24.11 & 0.728 & 0.655 & 1081.0 / 470.0 \\
        \midrule
        \multirow{5}{*}{KTS (Ours)}
         & 0.00 & 0.01 & 11.84 & 24.10 & 0.728 & 0.653 & 1081.0 / 464.4 \\
         & 0.00 & 0.05 & 12.45 & 24.05 & 0.721 & \textbf{0.657} & 1081.0 / 442.3 \\
         \cmidrule(l){2-8}
         & 0.01 & 0.00 & 11.61 & 24.16 & 0.730 & 0.648 & 1094.5 / 470.0 \\
         & 0.05 & 0.00 & \textbf{11.59} & \textbf{24.34} & \textbf{0.731} & 0.630 & 1149.8 / 470.0 \\
         \cmidrule(l){2-8}
         & 0.01 & 0.01 & 11.63 & 24.20 & 0.729 & 0.653 & 1094.5 / 464.4 \\
        \bottomrule
    \end{tabular}
    }
    \vspace{-2mm}
\end{table}
\vspace{-2mm}

\textbf{Choice of $\tau_{\text{split}}$.}
Table~\ref{tab:kts_tau_sensitivity} sweeps the phase boundary under fixed $\alpha_0{=}\beta_0{=}0.01$. Early splits ($\tau_{\text{split}}{=}0.2,0.4$) suppress memorization more strongly but increase FID, indicating premature damping of semantic formation. A late split ($\tau_{\text{split}}{=}0.8$) leaves more late-time dynamics undamped and raises $F_{\text{mem}}$. We therefore use $\tau_{\text{split}}{=}0.6$, which lies in the observed spike-onset interval $[0.50,0.70]$ (Figure~\ref{fig:toy_kpe_density_all}) and results in the best FID among the tested values.

\begin{table}[!b]
  \centering
  \footnotesize
  \setlength{\abovecaptionskip}{2pt}
  \setlength{\belowcaptionskip}{2pt}
  \caption{\textbf{$\tau_{\text{split}}$ sensitivity on CelebA $32\times32$.} Fixed $\alpha_0=\beta_0=0.01$, Euler, $\mathrm{NFE}=100$, uniform schedule. Among the tested phase boundaries, $\tau_{\text{split}}=0.6$ achieves the best FID.}
  \label{tab:kts_tau_sensitivity}
  \setlength{\tabcolsep}{6pt}
  \renewcommand{\arraystretch}{0.95}
  \begin{tabular*}{0.76\columnwidth}{@{\extracolsep{\fill}}rcc@{}}
  \toprule
  $\tau_{\text{split}}$ & FID@10k\,$\downarrow$ & $F_{\text{mem}}$\,(\%)\,$\downarrow$ \\
  \midrule
  0.2          & 60.31 & 23.66 \\
  0.4          & 48.58 & 27.15 \\
  \textbf{0.6} & \textbf{14.35} & \textbf{31.22} \\
  0.8          & 21.07 & 34.30 \\
  \bottomrule
  \end{tabular*}
%   \vspace{-2mm}
\end{table}

%\vspace{-2mm}
\textbf{Schedule functional form.}
Table~\ref{tab:kts_schedule_form} varies the early launch and late soft-landing functions while fixing $\tau_{\text{split}}{=}0.6$ and $\alpha_0{=}\beta_0{=}0.01$. Across all tested linear, constant, and exponential variants, the two-phase boost-then-damp design improves FID over FM (16.68), achieving values between 14.35 and 11.72. 
This robustness suggests that the benefit comes from the phase structure: increasing early motion supports sample formation, while reducing late motion controls terminal behavior. The exact functional form affects the magnitude of the gains, but improvements persist across all tested choices. 
In this sweep, linear--exponential gives the largest memorization reduction (37.34\% to 31.22\%).
\begin{table}[H]
  \centering
  \footnotesize
  \setlength{\abovecaptionskip}{2pt}
  \setlength{\belowcaptionskip}{2pt}
  \caption{\textbf{Functional form of launch/soft-landing schedules (CelebA $32{\times}32$).} $\tau_{\text{split}}{=}0.6$, $\alpha_0{=}\beta_0{=}0.01$, Euler, $\mathrm{NFE}{=}100$. All the two-phase combinations improve over the FM baseline; \textsuperscript{\dag} denotes the schedule used throughout the paper.}
  \label{tab:kts_schedule_form}
  \renewcommand{\arraystretch}{0.95}
  \begin{tabular*}{\columnwidth}{@{\extracolsep{\fill}}llcc@{}}
  \toprule
  Early $\alpha(t)$ & Late $\beta(t)$ & FID@10k\,$\downarrow$ & $F_{\text{mem}}$\,(\%)\,$\downarrow$ \\
  \midrule
  \multicolumn{2}{l}{\emph{Baseline (no KTS)}}                 & 16.68          & 37.34 \\
  \cmidrule(lr){1-4}
  linear\textsuperscript{\dag} & exponential\textsuperscript{\dag} & 14.35          & \textbf{31.22} \\
  linear                       & constant                          & 12.47          & 35.50 \\
  linear                       & linear                            & 11.91          & 36.73 \\
  constant                     & linear                            & \textbf{11.72} & 37.15 \\
  exponential                  & linear                            & 11.93          & 36.76 \\
  \bottomrule
  \end{tabular*}
  \vspace{-2mm}
\end{table}
\vspace{-2mm}
\textbf{Robustness across solvers, NFE, and schedules.}
Table~\ref{tab:kts_solver_robust} applies the same balanced KTS configuration ($\alpha_0{=}\beta_0{=}0.01$, $\tau_{\text{split}}{=}0.6$, no per-configuration retuning) to Euler/Midpoint solvers, $\mathrm{NFE} \in \{100,250\}$, and uniform/cosine time schedules. KTS reduces $F_{\text{mem}}$ by $6$--$10$ percentage points in \emph{every} configuration while keeping FID comparable or better, indicating that the gains transfer without per-setup retuning.
\begin{table}[H]
  \centering
  \footnotesize
  \setlength{\abovecaptionskip}{2pt}
  \setlength{\belowcaptionskip}{2pt}
  \caption{\textbf{Robustness across solvers, NFE, and schedules (CelebA $32{\times}32$).} Balanced KTS uses $\alpha_0{=}\beta_0{=}0.01$, $\tau_{\text{split}}{=}0.6$ (no per-configuration retuning). KTS lowers $F_{\text{mem}}$ by $6$--$10$\,pp in \emph{every} configuration; FID is comparable or better. Best per row in \textbf{bold}.}
  \label{tab:kts_solver_robust}
  \renewcommand{\arraystretch}{0.95}
  \begin{tabular*}{\columnwidth}{@{\extracolsep{\fill}}llccccc@{}}
  \toprule
   &     &        & \multicolumn{2}{c}{FID@10k\,$\downarrow$} & \multicolumn{2}{c}{$F_{\text{mem}}$\,(\%)\,$\downarrow$} \\
  \cmidrule(lr){4-5}\cmidrule(lr){6-7}
  Solver & NFE & Sched.\ & FM & KTS & FM & KTS \\
  \midrule
  Euler    & 100 & unif.\ & 16.68 & \textbf{14.35} & 37.34 & \textbf{31.22} \\
  Euler    & 250 & unif.\ & 13.11 & \textbf{12.87} & 37.83 & \textbf{30.23} \\
  Midpoint & 100 & unif.\ & 19.20 & \textbf{18.88} & 37.86 & \textbf{30.19} \\
  Midpoint & 250 & unif.\ & 15.80 & \textbf{15.00} & 37.68 & \textbf{29.72} \\
  Euler    & 100 & cos.\  & 15.21 & \textbf{15.20} & 38.19 & \textbf{28.54} \\
  Midpoint & 100 & cos.\  & 15.42 & \textbf{15.36} & 37.80 & \textbf{30.84} \\
  \bottomrule
  \end{tabular*}
  \vspace{-2mm}
\end{table}

\textbf{Ablation study on $\alpha_0$ over training iterations.}
Figure~\ref{fig:alpha_ablation} shows the training-time behavior of $\Delta$FID and $\Delta F_{\text{mem}}$ for different $\alpha_0$ values (with $\beta_0{=}0$). We report \emph{per-iteration} differences relative to the FM baseline at the same iteration (baseline is 0). Negative $\Delta$FID indicates better sample quality (a 27.6--28.3\% reduction vs.\ FM), and negative $\Delta F_{\text{mem}}$ indicates reduced memorization. Increasing $\alpha_0$ improves FID early and consistently, with gains that gradually saturate, while $\Delta F_{\text{mem}}$ stays near zero after the early phase (small early fluctuations likely reflect estimator noise). Overall, $\alpha_0$ mainly accelerates quality acquisition and  $\beta_0$ is needed for stronger memorization suppression.

\textbf{Ablation study on $\beta_0$ over training iterations.}
Figure~\ref{fig:beta_ablation} shows the training-time behavior of $\Delta$FID and $\Delta F_{\text{mem}}$ for different $\beta_0$ ($\alpha_0{=}0$). We report {per-iteration} differences relative to the FM baseline at the same iteration (baseline is 0). Increasing $\beta_0$ yields a clear and sustained reduction in memorization, with the gap widening in the mid-to-late regime (more negative $\Delta F_{\text{mem}}$). However, too large $\beta_0$ can over-damp the dynamics and hurt quality: $\beta_0{=}0.02$ reduces memorization the most but makes $\Delta$FID positive, while $\beta_0{=}0.01$ keeps both $\Delta$FID and $\Delta F_{\text{mem}}$ negative, improving quality and memorization simultaneously.

\begin{figure}[H]
    \centering
    \setlength{\abovecaptionskip}{2pt}
    \setlength{\belowcaptionskip}{2pt}
    \begin{subfigure}[b]{0.48\linewidth}
        \centering
        \includegraphics[width=\linewidth]{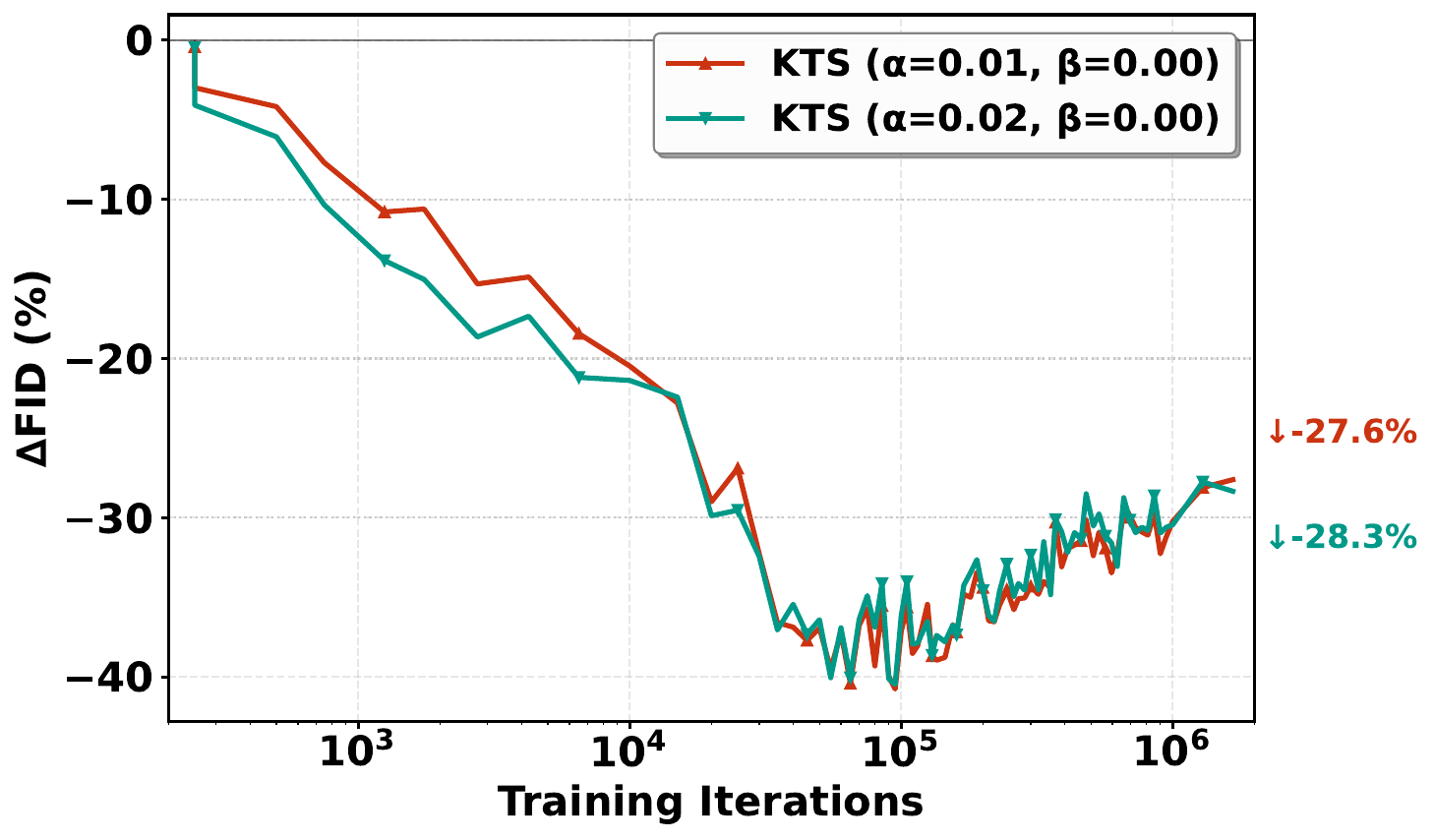}
        \caption{FID improvement with $\alpha$}
        \label{fig:fid_change_log}
    \end{subfigure}
    \hfill
    \begin{subfigure}[b]{0.48\linewidth}
        \centering
        \includegraphics[width=\linewidth]{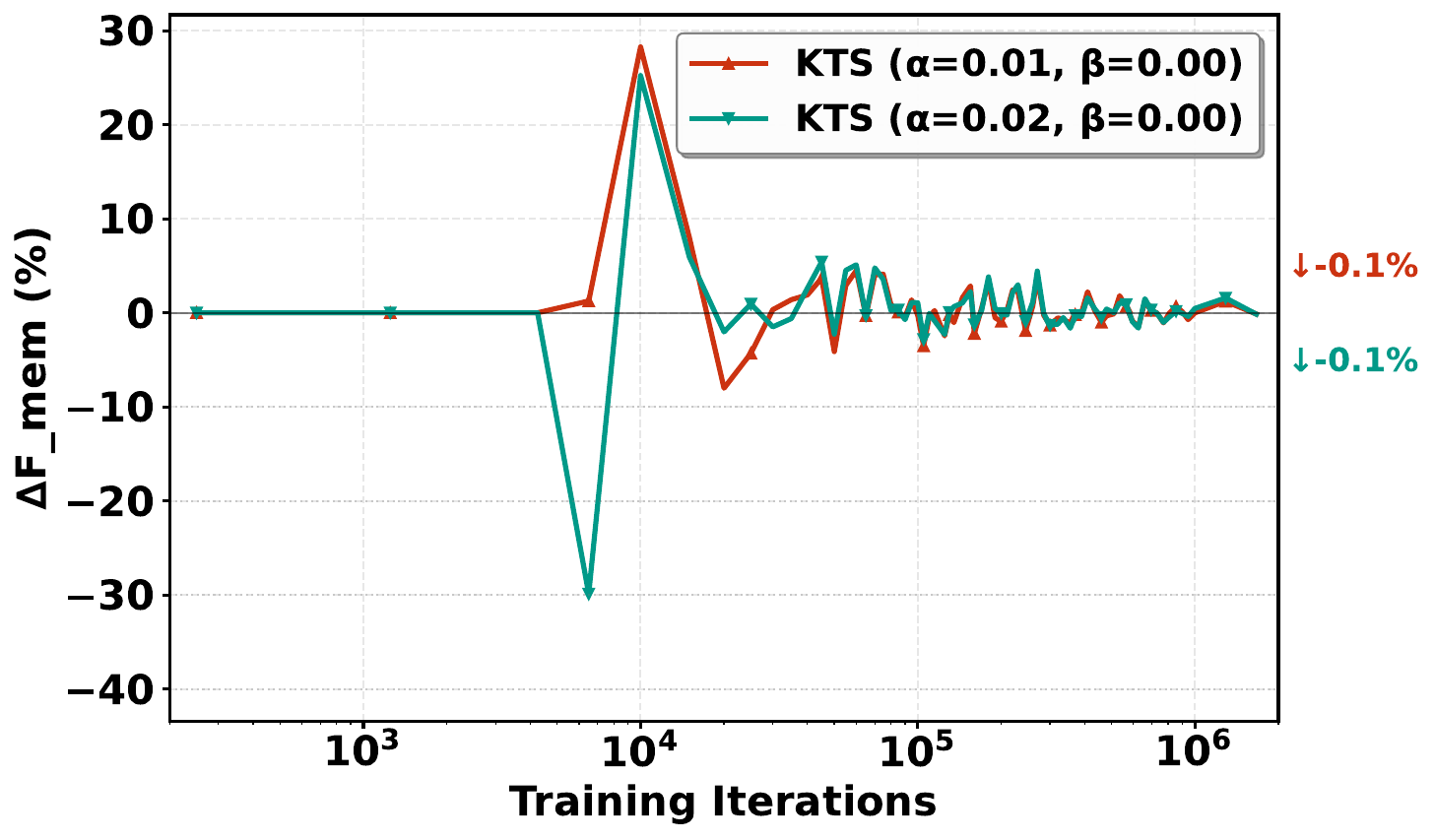}
        \caption{Memorization risk with $\alpha$}
        \label{fig:fmem_change_log}
    \end{subfigure}
    \caption{\textbf{Ablation on $\alpha := \alpha_0$ over training iterations.} We plot the relative changes $\Delta$FID and $\Delta F_{\text{mem}}$ with respect to the FM baseline evaluated at the same training iteration (baseline is 0). Negative $\Delta$FID indicates improved sample quality, while positive $\Delta F_{\text{mem}}$ indicates increased memorization.}
    \label{fig:alpha_ablation}
    \vspace{-5mm}
\end{figure}
\begin{figure}[H]
    \centering
    \setlength{\abovecaptionskip}{2pt}
    \setlength{\belowcaptionskip}{2pt}
    \begin{subfigure}[b]{0.48\linewidth}
        \centering
        \includegraphics[width=\linewidth]{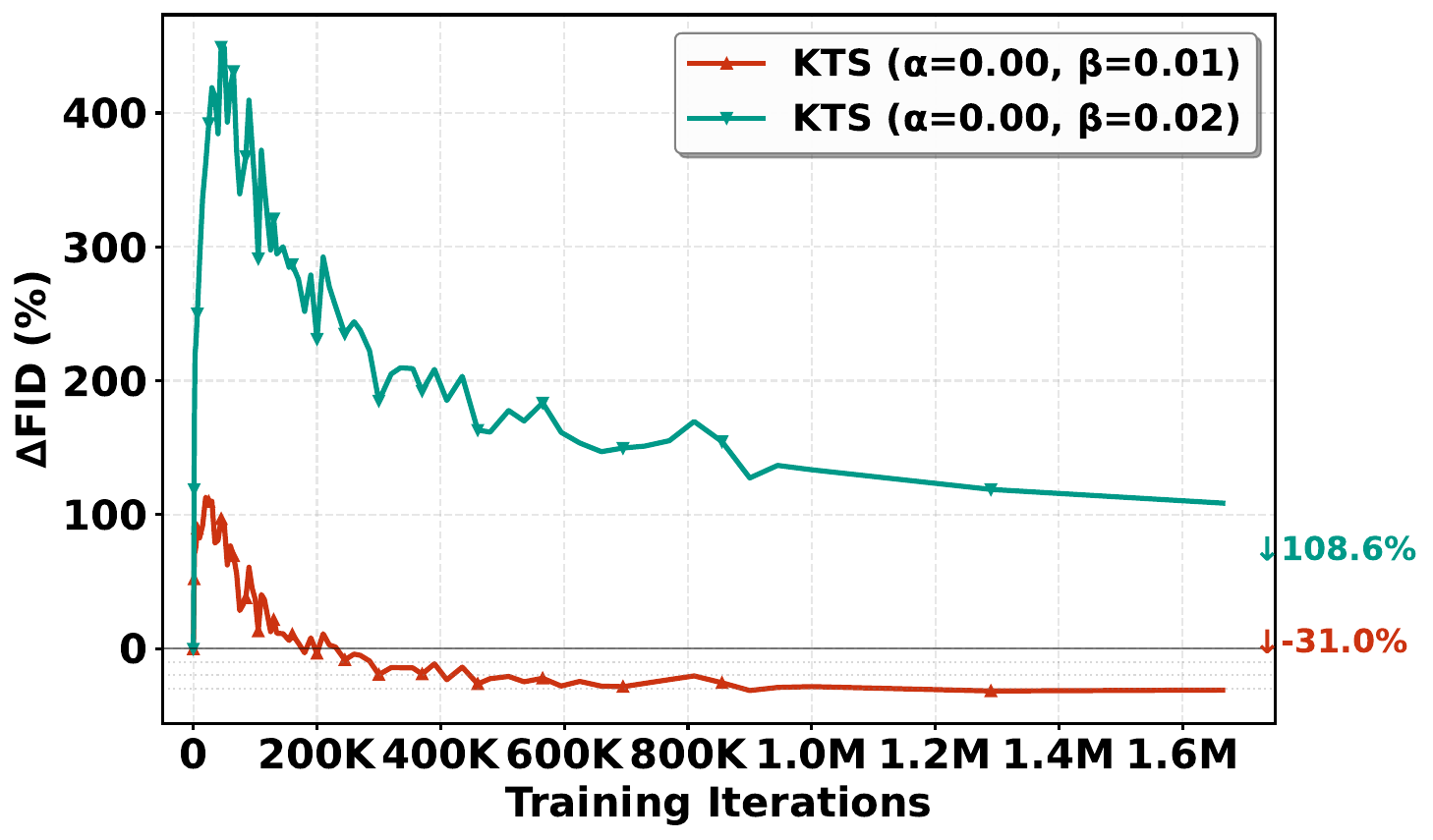}
        \caption{FID change with $\beta$}
        \label{fig:beta_fid_change}
    \end{subfigure}
    \hfill
    \begin{subfigure}[b]{0.48\linewidth}
        \centering
        \includegraphics[width=\linewidth]{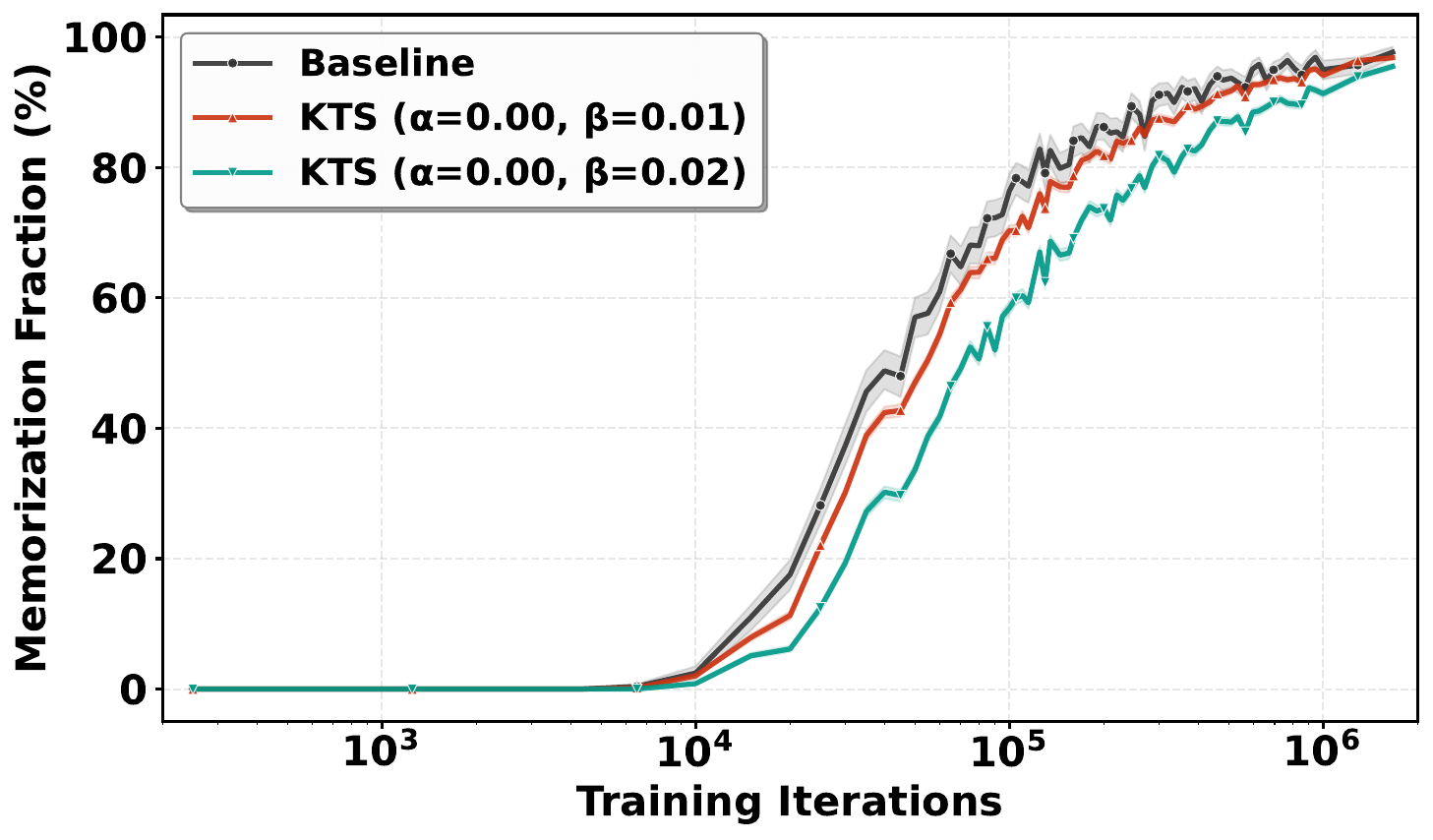}
        \caption{F$_{\text{mem}}$ reduction with $\beta$}
        \label{fig:beta_fmem_change}
    \end{subfigure}
    \caption{\textbf{Ablation on $\beta := \beta_0$ over training iterations.} We plot the relative changes $\Delta$FID and $\Delta F_{\text{mem}}$ with respect to the FM baseline evaluated at the same training iteration (baseline is 0). While excessively high $\beta_0$ can hurt sample quality (positive $\Delta$FID), increasing $\beta_0$ generally reduces memorization (negative $\Delta F_{\text{mem}}$), revealing a tunable quality--memorization trade-off. } %that requires careful tuning.}
    \label{fig:beta_ablation}
    \vspace{-4mm}
\end{figure}

% \textbf{Visual proof: Novel samples, not copies.}
% Figure~\ref{fig:kts_nearest_neighbors} provides visual evidence. \red{[PLACEHOLDER FIGURE: 6 rows $\times$ 3 columns grid. Each row: [Generated sample | Nearest neighbor 1 + LPIPS | Nearest neighbor 2 + LPIPS]. Rows 1--2: Baseline (LPIPS $<$0.25, near-copies). Rows 3--4: Soft-landing (LPIPS $>$0.35, novel). Rows 5--6: Full KTS (LPIPS $>$0.35, novel + quality).]} Baseline generates near-copies (LPIPS \red{0.XX}--\red{0.XX}), visually indistinguishable from training data. KTS methods produce creative variations (LPIPS \red{0.XX}--\red{0.XX}) sharing semantic similarity but showing clear structural differences, confirming novelty.

%% file: sec/conclusion.tex
\section{{Conclusions and Limitations}}
\label{sec:conclusion}
% \vspace{-3mm}
\looseness=-1
%For flow-based generative models, sampling trajectories provide a direct diagnostic of sample difficulty and failure modes. 
We introduce \emph{Kinetic Path Energy (KPE)}, an action-like per-sample metric, to quantify the accumulated \emph{kinetic effort} along the sampling trajectory. 
Empirically, higher KPE is associated with samples of higher semantic fidelity and with trajectories that end in locally sparser regions of representation space, but the relationship is \emph{not monotone}: pushing energy to extremes can instead induce memorization.
Our analysis  reveals a structural terminal-time singular component that leads to late-time energy spikes and collapses trajectories toward near-copies of training examples. 
This \emph{Goldilocks principle} motivates \emph{Kinetic Trajectory Shaping (KTS)}, a lightweight, training-free inference-time procedure that redistributes energy over time  to improve quality while mitigating memorization. Our study focuses on ODE-based flows and specific theoretical regimes; extending energy-based diagnostics and controls to other models and  stochastic samplers is an important direction for future work.

%% file: sec/impact.tex
\section*{Impact Statement}
\label{sec:impact}
This paper presents work whose goal is to advance the field
of Machine Learning, and more specifically, the theoretical
understanding of implicit regularization as a tool for structured recovery problems. There are many potential societal
consequences of our work, none which we feel must be
specifically highlighted here.

%% file: sec/ack.tex
\section*{Acknowledgements}%
\label{sec:ack}
\makeatletter
\newcommand{\ackref}[2]{\@ifundefined{r@#1}{#2}{\Cref{#1}}}
\makeatother
This work was supported by the Wallenberg AI, Autonomous Systems and Software Program (WASP) funded by the Knut and Alice Wallenberg Foundation.
The computations were enabled by resources provided by the National Academic Infrastructure for Supercomputing in Sweden (NAISS), partially funded by the Swedish Research Council through grant agreement no.~2022\nobreakdash-06725.
SHL would like to acknowledge support from the Wallenberg Initiative on Networks and Quantum Information (WINQ) and the Swedish Research Council (VR/2021-03648).
XL is supported by a Trinity College Dublin PhD Award and the Research Ireland Centre for Research Training in Artificial Intelligence (Grant No. 18/CRT/6223).
FG is supported in part by the National Natural Science Foundation of China under Grant 62571395.
We thank the anonymous reviewers for their constructive feedback.

%% file: sec/related-work-appendix.tex
\section{Related Work}
\label{app:related-work}

In this section, we discuss related work and position our paper relative to them.

\paragraph{Flow Matching and Kinetic Views of Generative Transport.}
Flow matching formulates generative modeling as learning a time-dependent velocity field whose induced ODE deterministically transports samples from a base distribution to the data distribution
\citep{lipman2022flow,liu2022flow,lipman2024flow,wald2025flow}.
This trajectory-based formulation places flow matching within the broader perspective of \emph{dynamical measure transport},
where probability measures evolve under continuity equations driven by velocity fields.
In optimal transport, the classical Benamou--Brenier formulation characterizes feasible transport paths as solutions of a
dynamic optimization problem that minimizes an integrated kinetic energy subject to mass conservation
\citep{benamou2000computational,villani2008optimal}.
While flow matching does not explicitly enforce optimality with respect to such action functionals \cite{hertrich2025relation}, it induces a learned
transport dynamics that maps a base distribution to the data distribution through continuous-time particle motion. Error bounds \citep{zhouerror} and statistical guarantees \citep{kunkel2025distribution,mena2025statistical} for flow matching have also been  studied recently.

This connection highlights both the relevance and the limitation of existing energy-based analyses.
Recent work revisits action and kinetic principles in the context of generative modeling by studying kinetically optimal or
constrained probability paths
\citep{shaul2023kinetic}, and related perspectives from nonequilibrium thermodynamics and path-integral formulations interpret
diffusion and transport dynamics through energetic and variational lenses
\citep{seifert2012stochastic,hirono2024understanding,ikeda2025speed}.
However, these approaches primarily characterize \emph{distribution-level} or \emph{optimal} energetic properties.
In contrast, trained flow matching models realize specific, sample-dependent transport trajectories whose kinetic behavior
is neither optimized nor directly constrained.
This gap motivates our focus on analyzing the \emph{realized kinetic effort} accumulated along individual generation
trajectories.

\paragraph{Memorization and Generalization in Flow-Based Generative Models.}
A growing body of work investigates memorization and generalization phenomena in modern generative models, including both
diffusion-based and flow-based approaches.
Several studies analyze memorization through the lens of implicit regularization, dynamical stability, and generalization
theory, characterizing when generative models interpolate or collapse to training data
\citep{baptista2025memorization,bonnaire2025diffusion,ye2025provable,yoon2023diffusion}.
Complementary work emphasizes the role of data geometry, score structure, or closed-form dynamics in shaping model behavior,
revealing how learned generative dynamics interact with low-dimensional manifolds and discrete datasets
\citep{pidstrigach2022score,gao2024flow,bertrand2025closed,wanelucidating}.

While these analyses provide important statistical and structural insights, they largely abstract away the kinematic  behavior
of the sampling process itself.
In particular, they do not directly characterize how individual sampling trajectories evolve in time, nor how energetic
properties of these trajectories contribute to memorization.
Our work complements this literature by identifying a trajectory-level mechanism specific to empirical flow matching \cite{lim2025hidden}. 
The terminal-time singularity of the optimal velocity field leads to large  kinetic energy near the end of sampling, forcing trajectories to collapse onto training
samples and inducing memorization, even in the absence of model approximation error.
This perspective reframes memorization as a dynamical consequence of how kinetic effort is allocated along the generation path.

\paragraph{Trajectory-Level Diagnostics and Inference-Time Control.}
Beyond training objectives, several approaches aim to improve generation quality or controllability by modifying inference
dynamics without retraining.
Examples include classifier-free guidance and energy-guided conditional generation, which bias sampling by scaling
scores, adjusting likelihood contributions, or introducing auxiliary energy terms
\citep{ho2022classifier,yu2023freedom,xu2024energy,du2024learning}.
Such methods provide powerful controls, but they typically operate through local modifications of scores or endpoint
objectives, offering limited visibility into the global dynamics of the sampling trajectory.
%A complementary line of work treats the inference-time timestep schedule itself as an object of design. 

In contrast, our approach is explicitly trajectory-centric.
We introduce \emph{Kinetic Path Energy} as a path-level diagnostic that quantifies the kinetic effort accumulated along each
individual generation trajectory.
This diagnostic reveals systematic relationships between energy, semantic fidelity, and manifold rarity that are not
captured by endpoint-only metrics.
Building on these insights, we propose \emph{Kinetic Trajectory Shaping}, a simple yet effective training-free inference strategy that directly
modulates the velocity field over time, redistributing kinetic effort across different phases of sampling.
By encouraging exploration early and suppressing terminal energy blow-up late, this framework enables phase-specific
control of generation dynamics and provides a unified mechanistic view of sample quality, rarity, and memorization in flow-based  models.

%% file: sec/appendix-KPE-density-restructured.tex
\newpage
\section{KPE and Data Density}
\label{app:appendix-KPE-density}

In this section, we establish the theoretical basis for the KPE-density relationship observed empirically in §\ref{sec:two-findings}. The main result, Theorem~\ref{thm:energy-density}, quantifies how instantaneous energy $\|\hat u^*(z,t)\|^2$ relates to negative log-density $-\log \hat p_t(z)$ under posterior dominance. 

We develop the framework for a differentiable schedule $\gamma:[0,1]\to[0,1]$ and focus on the linear case $\gamma(t)=t$. The results are applied at nondegenerate times $t \in (0,1)$ with $\gamma(t)\in(0,1)$ and $\dot\gamma(t)\neq0$. 
%We develop the framework for general schedule $\gamma(t) \in (0,1)$ but focus on the linear case $\gamma(t)=t$.

\subsection{Preliminaries and Main Result}
\label{subsec:preliminaries}

%\subsubsection{Connection to Main Paper}

\subsubsection{Notation}

We collect here all notation used throughout this appendix. For clarity, we present the key symbols in tabular form:

\medskip
\begin{center}
\begin{tabular}{cl}
\toprule
\textbf{Symbol} & \textbf{Definition} \\
\midrule
$I_d$ & $d$-dimensional identity matrix \\[2pt]
$\{x^{(i)}\}_{i=1}^N$ & Training data points in $\mathbb{R}^d$ \\[2pt]
$\gamma(t)$ & Interpolation schedule, $\gamma:[0,1]\to[0,1]$ with $\gamma(0)=0$, $\gamma(1)=1$ \\[2pt]
$\dot\gamma(t)$ & Time derivative of $\gamma(t)$ \\[2pt]
$x_0$ & Source random variable, $x_0 \sim \mathcal{N}(0,I_d)$ \\[2pt]
$z_t$ & Conditional bridge: $z_t = (1-\gamma(t))x_0 + \gamma(t)x^{(i)}$ \\[2pt]
$p_t(z \mid x^{(i)})$ & Conditional probability distribution at time $t$: $p_t(z \mid x^{(i)}) = \mathcal{N}(z; \mu_i(t), \sigma_t^2 I_d)$ \\[2pt]
$\mu_i(t)$ & Mean of $i$-th component: $\mu_i(t) = \gamma(t)x^{(i)}$ \\[2pt]
$\sigma_t^2$ & Variance: $\sigma_t^2 = (1-\gamma(t))^2$ \\[2pt]
$\hat p_t(z)$ & Intermediate mixture density: $\hat p_t(z) = \frac{1}{N}\sum_{i=1}^N p_t(z \mid x^{(i)})$ \\[2pt]
$\lambda_i(z,t)$ & Posterior responsibility: $\lambda_i(z,t) = \frac{p_t(z\mid x^{(i)})}{\sum_j p_t(z\mid x^{(j)})}$ \\[2pt]
$\hat u^*(z,t)$ & Empirical optimal velocity field (see Lemma~\ref{lem:velocity-score-form}) \\[2pt]
$\alpha(t)$ & Velocity field coefficient: $\alpha(t) = \frac{\dot\gamma(t)\sigma_t^2}{\gamma(t)(1-\gamma(t))}$ \\[2pt]
$\beta(t)$ & Velocity field coefficient: $\beta(t) = \frac{\dot\gamma(t)}{\gamma(t)}$ \\[2pt]
$m(t)$ & Combined coefficient: $m(t) = \beta(t) - \frac{\alpha(t)}{\sigma_t^2} = -\frac{\dot\gamma(t)}{1-\gamma(t)}$ \\[2pt]
\bottomrule
\end{tabular}
\end{center}

\subsubsection{Main Theorem}

Before proceeding to the detailed derivation, we state our main theoretical result, which establishes the quantitative relationship between instantaneous energy and mixture density.

\begin{theorem}[Instantaneous Energy vs.\ Mixture Density]
\label{thm:energy-density}
Consider a closed-form flow matching model with the following intermediate mixture density
\begin{equation}
    \hat p_t(z) = \frac{1}{N}\sum_{i=1}^N p_t(z \mid x^{(i)}).
\end{equation}
%Fix a nondegenerate time $t\in(0,1)$ and $z\in\mathbb{R}^d$. 
Fix $t\in(0,1)$ with $\gamma(t)\in(0,1)$ and $\dot\gamma(t)\neq0$, and fix $z\in\mathbb{R}^d$.
Suppose there exists an index $i^*$ and a parameter $\varepsilon\in(0,1/2)$ such that
\begin{equation}
    \lambda_{i^*}(z,t) := \frac{p_t(z \mid x^{(i^*)})}{\sum_{j=1}^N p_t(z \mid x^{(j)})} \;\ge\; 1 - \varepsilon.
\end{equation}

Then there exist constants $c_1'(t), c_2'(t) > 0$ and $C_t' \in \mathbb{R}$, depending only on $t$, $\varepsilon$, the dimension $d$, the sample size $N$, and the data $\{x^{(i)}\}$, such that
\begin{equation}
    c_1'(t)\,\big(-\log \hat p_t(z)\big) - C_t'
    \;\le\;
    \big\|\hat u^*(z,t)\big\|^2
    \;\le\;
    c_2'(t)\,\big(-\log \hat p_t(z)\big) + C_t'.
    \label{eq:main-theorem-bounds}
\end{equation}
In particular, under posterior dominance, $\|\hat u^*(z,t)\|^2$ is comparable to $-\log\hat p_t(z)$ up to multiplicative and additive constants. 
%In particular, $\|\hat u^*(z,t)\|^2$ and $-\log\hat p_t(z)$ are monotone equivalent up to multiplicative and additive constants.

Moreover, the constants can be chosen explicitly as:
\begin{equation}
    c_1'(t) = \frac{1}{2}m(t)^2\sigma_t^2,
    \qquad
    c_2'(t) = 12\,m(t)^2\sigma_t^2,
\end{equation}
where $m(t) = -\frac{\dot\gamma(t)}{1-\gamma(t)}$.
\end{theorem}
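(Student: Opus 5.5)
The plan is to express $\hat u^*$ explicitly as $m(t)$ times the displacement from the dominant component mean plus a small remainder, and to write $-\log\hat p_t(z)$ exactly as a quadratic in that same displacement plus bounded terms. Comparing the two then gives the affine bounds.

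\textbf{Step 1: explicit form of $\hat u^*$.} Start from Lemma~\ref{lem:score-decomposition-main}, $\hat u^*(z,t)=\alpha(t)\nabla_z\log\hat p_t(z)+\beta(t)z$. Substitute the exact mixture score
\[
\nabla_z\log\hat p_t(z)=-\sigma_t^{-2}\sum_i\lambda_i(z,t)\,(z-\mu_i(t)),
\]
which uses $\sum_i\lambda_i=1$. This gives
\[
\hat u^*=\Big(\beta-\tfrac{\alpha}{\sigma_t^2}\Big)z+\tfrac{\alpha}{\sigma_t^2}\sum_i\lambda_i\mu_i .
\]
Since $\alpha/\sigma_t^2=\dot\gamma/(\gamma(1-\gamma))$ and $\mu_i=\gamma x^{(i)}$, we have $(\alpha/\sigma_t^2)\mu_i=-m(t)x^{(i)}$. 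Hence
\[
\hat u^*(z,t)=m(t)\big(z-\bar x(z,t)\big),\qquad \bar x(z,t):=\sum_i\lambda_i(z,t)\,x^{(i)} .
\]
For $\gamma(t)=t$ this recovers \eqref{eq:closed_form_u_hat}, which serves as a check. Next decompose around the dominant atom:
\[
z-\bar x=(z-\mu_{i^*})+r,\qquad r:=(\gamma-1)x^{(i^*)}+\sum_{j\ne i^*}\lambda_j\big(x^{(i^*)}-x^{(j)}\big).
\]
Posterior dominance gives $\|r\|\le R_t:=(1-\gamma(t))\|x^{(i^*)}\|+\varepsilon D$, where $D=\max_{i,j}\|x^{(i)}-x^{(j)}\|$. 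The elementary inequalities $\|a+b\|^2\le 2\|a\|^2+2\|b\|^2$ and $\|a+b\|^2\ge\tfrac12\|a\|^2-\|b\|^2$ then yield
\[
\tfrac12 m^2\|z-\mu_{i^*}\|^2-m^2R_t^2\;\le\;\|\hat u^*\|^2\;\le\;2m^2\|z-\mu_{i^*}\|^2+2m^2R_t^2 .
\]

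\textbf{Step 2: exact formula for the log-density.} Use $\sum_j p_t(z\mid x^{(j)})=p_t(z\mid x^{(i^*)})/\lambda_{i^*}$. This gives the identity
\[
-\log\hat p_t(z)=\frac{\|z-\mu_{i^*}\|^2}{2\sigma_t^2}+\frac d2\log(2\pi\sigma_t^2)+\log N+\log\lambda_{i^*}(z,t).
\]
Here $\log\lambda_{i^*}\in[\log(1-\varepsilon),0]$. Therefore $\|z-\mu_{i^*}\|^2=2\sigma_t^2(-\log\hat p_t(z))-K$, with $K$ lying in an interval determined by $d$, $\sigma_t$, $\log N$ and $-\log(1-\varepsilon)$.

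\textbf{Step 3: combine and collect constants.} Substituting Step 2 into Step 1 gives
\[
m^2\sigma_t^2(-\log\hat p_t)-C\;\le\;\|\hat u^*\|^2\;\le\;4m^2\sigma_t^2(-\log\hat p_t)+C .
\]
These multiplicative factors are already stronger than the stated $c_1'=\tfrac12 m^2\sigma_t^2$ and $c_2'=12\,m^2\sigma_t^2$; the looser stated constants leave room for cruder Young-inequality splits. The additive constant $C_t'$ collects $m^2R_t^2$ together with $m^2\sigma_t^2$ times $|\log N|$, $\tfrac d2|\log(2\pi\sigma_t^2)|$ and $-\log(1-\varepsilon)$. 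This matches the dependence claimed in Theorem~\ref{prop:kpe-density}, with $R_t$ supplying the "geometric properties of the dominant component."

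\textbf{Main obstacle: passing between the stated and sharper constants.} The quantity $-\log\hat p_t$ can be negative, so replacing $m^2\sigma_t^2$ by the smaller $\tfrac12 m^2\sigma_t^2$ in the lower bound, or $4m^2\sigma_t^2$ by $12\,m^2\sigma_t^2$ in the upper bound, is not automatically valid. I would handle this by using the quadratic identity of Step 2 to note that $-\log\hat p_t$ is bounded below by a constant depending only on $t,d,N,\varepsilon$. Then
\[
(c-c')(-\log\hat p_t)\ge(c-c')\cdot\text{(that lower bound)}
\]
for $c\ge c'$, and the resulting shift is absorbed into $C_t'$. The remaining work is to make this absorption, together with the sign bookkeeping for $\log\lambda_{i^*}$, fully explicit.
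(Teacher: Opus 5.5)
Your proof is correct and shares the paper's skeleton, but it organizes the velocity bound differently. The common parts are the velocity--score form, the exact identity $-\log\hat p_t(z)=A_t(z)+C_t^{(0)}+\log\lambda_{i^*}(z,t)$, and a two-sided comparison of $\|\hat u^*\|^2$ with the dominant quadratic. That identity is Lemma~\ref{lem:local-gaussian}, since the paper's remainder $R_t(z)=-\log(1+\delta)$ equals $\log\lambda_{i^*}$.

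In Lemma~\ref{lem:energy-bounds} the paper writes $\hat u^*=m(t)z+b_t+e_t$ with $b_t=\frac{\alpha(t)}{\sigma_t^2}\mu_{i^*}(t)=-m(t)x^{(i^*)}$. It compares $\|\hat u^*\|^2$ with $m(t)^2\|z\|^2$, and only then converts $\|z\|^2$ into $\|z-\mu_{i^*}\|^2$ by a second triangle inequality. These two successive losses are exactly where $\tfrac12$ and $12$ come from.

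You instead use the posterior-mean form $\hat u^*=m(t)(z-\bar x)$ and center directly at $\mu_{i^*}$. That form is the paper's intermediate expression in the proof of Lemma~\ref{lem:velocity-score-form}, so your detour through the score and back is unnecessary. A single split then gives the sharper coefficients $m^2\sigma_t^2$ and $4m^2\sigma_t^2$.

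Your remainder $r$ also keeps the cancellation between $m(t)z$ and $-m(t)x^{(i^*)}$. The $\varepsilon$-independent geometric term in your additive constant is $m^2(1-\gamma)^2\|x^{(i^*)}\|^2=\dot\gamma^2\|x^{(i^*)}\|^2$. By contrast, the paper's $C_\pm(t)$ contain $m(t)^2\|x^{(i^*)}\|^2$, which grows like $(1-t)^{-2}$ for $\gamma(t)=t$.

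The cost of your route is the relaxation step you flag, which the paper never faces. The paper substitutes $|A_t-(-\log\hat p_t)|\le K_t$ into bounds that already carry the positive coefficients $\tfrac12 m^2\sigma_t^2$ and $12m^2\sigma_t^2$ on $A_t$. Your fix via $-\log\hat p_t(z)\ge \tfrac d2\log(2\pi\sigma_t^2)+\log N+\log(1-\varepsilon)$ is valid. It is simpler to weaken the coefficients on the nonnegative quantity $\|z-\mu_{i^*}\|^2$ before substituting Step 2. That is the same observation and needs no sign bookkeeping.
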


The proof of this theorem is deferred to §\ref{subsec:proof-main-theorem}, after we establish the necessary technical lemmas.

{\bf Remark.} Theorem~\ref{thm:energy-density} shows that, under posterior dominance, the instantaneous kinetic energy $\|\hat u^*(z,t)\|^2$ is comparable to the negative log-density $-\log\hat p_t(z)$ up to multiplicative and additive constants. Thus, regions with sufficiently large $-\log\hat p_t(z)$ have correspondingly large instantaneous energy, up to additive constants. Applied along trajectory segments where posterior dominance holds and away from singular endpoints, this provides a theoretical basis for the KPE--density trend in Finding~2.

%Theorem~\ref{thm:energy-density} shows that the instantaneous kinetic energy $\|\hat u^*(z,t)\|^2$ is proportional to the negative log-density $-\log\hat p_t(z)$ up to multiplicative constants of order $\Theta(1)$ and additive constants. 
%This implies that regions of low density (large $-\log\hat p_t(z)$) correspond to regions of high instantaneous energy, and vice versa. Integrating this relationship along a trajectory yields the connection between total kinetic path energy and time-averaged negative log-density, which forms the theoretical basis for Finding 2 in the main paper.

\subsubsection{Proof Roadmap}

We now outline the overall proof strategy for Theorem~\ref{thm:energy-density} and describe the role of each intermediate lemma. %Understanding this roadmap will help the reader navigate the technical derivations that follow.

\paragraph{Proof Strategy.}

The core challenge is to establish a quantitative relationship between the instantaneous energy $\|\hat u^*(z,t)\|^2$ and the negative log-density $-\log\hat p_t(z)$, when one mixture component dominates at $(z,t)$ (i.e., $\lambda_{i^*}(z,t) \ge 1-\varepsilon$ for small $\varepsilon$). Our strategy proceeds in four steps:

\begin{enumerate}[leftmargin=*]
    \item \textbf{Express velocity in terms of mixture score} (Lemma~\ref{lem:velocity-score-form})

    We first derive a closed-form expression for the optimal velocity field $\hat u^*(z,t)$ in terms of the mixture score $\nabla_z\log\hat p_t(z)$. Specifically, we show that
    \begin{equation}
        \hat u^*(z,t) = \alpha(t)\,\nabla_z \log \hat p_t(z) + \beta(t)\,z,
    \end{equation}
    where $\alpha(t)$ and $\beta(t)$ are explicit functions of $\gamma(t)$ and $\dot{\gamma}(t)$. This decomposition is the foundation for all subsequent analysis.

    \item \textbf{Local Gaussian approximation for the mixture density} (Lemma~\ref{lem:local-gaussian})

    Under the posterior dominance condition, we show that the mixture density $\hat p_t(z)$ can be locally approximated by a single dominant Gaussian component. Specifically, we establish that
    \begin{equation}
        -\log \hat p_t(z)
        = \frac{1}{2\sigma_t^2}\,\big\|z - \mu_{i^*}(t)\big\|^2 + C_t^{(0)} + R_t(z),
    \end{equation}
    where $C_t^{(0)}$ is a constant depending only on $t$, $d$, and $N$, and the remainder $R_t(z)$ is quantitatively bounded in terms of the dominance parameter $\varepsilon$.

    This lemma establishes the relationship between $-\log\hat p_t(z)$ and the quadratic form $A_t(z) := \frac{1}{2\sigma_t^2}\|z - \mu_{i^*}(t)\|^2$.

    \item \textbf{Decompose mixture score into dominant component plus remainder} (Lemma~\ref{lem:score-decomposition})

    Similarly, we decompose the mixture score as
    \begin{equation}
        \nabla_z \log \hat p_t(z)
        = -\frac{1}{\sigma_t^2}\big(z - \mu_{i^*}(t)\big) + r_t(z),
    \end{equation}
    where $r_t(z)$ is a remainder term that is explicitly bounded in terms of $\varepsilon$, $\sigma_t^2$, and the spread of mixture means.

    \item \textbf{Establish energy bounds in terms of the dominant quadratic form} (Lemma~\ref{lem:energy-bounds})

    Combining Lemma~\ref{lem:velocity-score-form} and Lemma~\ref{lem:score-decomposition}, we derive explicit upper and lower bounds for the instantaneous energy $\|\hat u^*\|^2$ in terms of the quadratic form $A_t(z)$:
    \begin{equation}
        c_1(t)\,A_t(z) - C_-(t)
        \;\le\;
        \big\|\hat u^*(z,t)\big\|^2
        \;\le\;
        c_2(t)\,A_t(z) + C_+(t),
    \end{equation}
    with explicit constants $c_1(t), c_2(t), C_\pm(t)$.
\end{enumerate}

Finally, we combine Lemma~\ref{lem:local-gaussian} (relating $A_t(z)$ to $-\log\hat p_t(z)$) and Lemma~\ref{lem:energy-bounds} (relating $\|\hat u^*\|^2$ to $A_t(z)$) to obtain Theorem~\ref{thm:energy-density}, which directly relates $\|\hat u^*\|^2$ to $-\log\hat p_t(z)$.

% 

% \text{Lemma~\ref{lem:score-decomposition} (score decomposition)} \\
% \downarrow \\
% \text{Lemma~\ref{lem:energy-bounds} (energy bounds)} \\
% \downarrow \\
% \text{Theorem~\ref{thm:energy-density}}
% \end{array}
% \qquad\qquad
% \begin{array}{c}
% \text{Lemma~\ref{lem:local-gaussian} (local Gaussian)} \\
% \downarrow \\
% \\
% \\
% \text{Theorem~\ref{thm:energy-density}}
% \end{array}
% \end{equation*}

% More precisely:
% \begin{itemize}[leftmargin=*]
%     \item Lemma~\ref{lem:velocity-score-form} and Lemma~\ref{lem:score-decomposition} are combined to derive Lemma~\ref{lem:energy-bounds}.
%     \item Lemma~\ref{lem:local-gaussian} and Lemma~\ref{lem:energy-bounds} are combined to derive Theorem~\ref{thm:energy-density}.
% \end{itemize}

\paragraph{Organization.}

The remainder of this section is organized as follows:
\begin{itemize}[leftmargin=*]
    \item §\ref{subsec:bridge-setup}: We define the closed-form interpolating bridge and intermediate mixture density.
    \item §\ref{subsec:velocity-score}: We derive the velocity-score representation (Lemma~\ref{lem:velocity-score-form}).
    \item §\ref{subsec:energy-density-relation}: We establish the energy-density relationship through three technical lemmas (Lemma~\ref{lem:local-gaussian}, Lemma~\ref{lem:score-decomposition}, Lemma~\ref{lem:energy-bounds}) and prove the main theorem (Theorem~\ref{thm:energy-density}).
    \item §\ref{subsec:linear-bridge}: We specialize the results to the linear bridge $\gamma(t)=t$ and provide explicit formulas for the constants appearing in the theorem.
\end{itemize}

With this roadmap in place, we now proceed to the detailed technical development.

\subsection{Closed-Form Interpolating Bridge and Intermediate Mixture}
\label{subsec:bridge-setup}

We now establish the basic setup by defining the conditional bridge and the intermediate mixture density that will be analyzed throughout this section.

Let $\{x^{(i)}\}_{i=1}^N \subset \mathbb R^d$ be the training data and let
$x_0 \sim \mathcal N(0,I_d)$ be a random variable with the source distribution.
For each data point $x^{(i)}$, we consider the  conditional bridge
\begin{equation}
    z_t = (1 - \gamma(t))\,x_0 + \gamma(t)\,x^{(i)},
    \qquad t \in [0,1],
    \label{eq:app-linear-bridge}
\end{equation}
where $\gamma : [0,1]\to[0,1]$ is differentiable with
$\gamma(0)=0$ and $\gamma(1)=1$. Note that the analysis that follows could also be extended straightforwardly to the more general case of $z_t = \beta(t) x_0 + \gamma(t) x^{(i)}$ for some $t$-differentiable $\beta(t)$ and $\gamma(t)$ satisfying $\beta(0) = \gamma(1) = 1$, $\beta(1) = \gamma(0) = 0$.

For fixed $t$ and $x^{(i)}$,
the random variable $z_t$ is an affine transformation of $x_0$ and
thus follows a Gaussian distribution:
\begin{equation}
    p_t(z \mid x^{(i)})
    = \mathcal N\!\big(z; \mu_i(t), \Sigma_t\big),
    \quad
    \mu_i(t) = \gamma(t)x^{(i)},\
    \Sigma_t = \sigma_t^2 I_d,\
    \sigma_t^2 = (1-\gamma(t))^2.
    \label{eq:app-pt-cond}
\end{equation}

Averaging over the empirical data distribution
$\hat p_{\text{data}} = \tfrac1N\sum_i \delta_{x^{(i)}}$ yields the
intermediate density mixture
\begin{equation}
    \hat p_t(z)
    = \frac{1}{N}\sum_{i=1}^N p_t(z \mid x^{(i)}).
    \label{eq:app-mixture-pt}
\end{equation}
Thus $\hat p_t$ is exactly the marginal density of $z_t$ under the model
where $i$ is sampled uniformly from $\{1,\dots,N\}$ and $x_0\sim\mathcal N(0,I_d)$.

%Recall that we denote by $\dot\gamma(t)$ the time derivative of $\gamma(t)$ and use $\sigma_t^2 = (1-\gamma(t))^2$ throughout.

\subsection{Empirical Optimal Velocity and the Mixture Score}
\label{subsec:velocity-score}

Given a finite training set $\{x^{(i)}\}_{i=1}^N$, we first establish the optimal flow matching velocity field $\hat u^*(z,t)$ in terms of the mixture score $\nabla_z \log \hat p_t(z)$. This lemma provides the foundation for all subsequent analysis, as it decomposes the velocity field into a score term (weighted by $\alpha(t)$) and a drift term (weighted by $\beta(t)$). The explicit representation derived here will be essential for relating instantaneous energy to density in later sections.

Consider the bridge~\eqref{eq:app-linear-bridge} and the
intermediate mixture~\eqref{eq:app-mixture-pt}.
For $t\in(0,1)$ with $\gamma(t)\in(0,1)$ and $\dot\gamma(t)\neq 0$, define
\begin{equation}
    \lambda_i(z,t)
    = \mathbb P(i \mid z,t)
    = \frac{p_t(z \mid x^{(i)})}{\sum_{j=1}^N p_t(z \mid x^{(j)})},
    \label{eq:app-lambda-def}
\end{equation}
the posterior responsibilities of the mixture components.

\begin{lemma}[Closed-form optimal velocity]
\label{lem:velocity-score-form}
The empirical optimal  velocity field $\hat u^*(z,t)$ can be
written as
\begin{equation}
    \hat u^*(z,t)
    = \alpha(t)\,\nabla_z \log \hat p_t(z)
      + \beta(t)\,z,
    \quad
    \alpha(t) = \frac{\dot\gamma(t)(1-\gamma(t))}{\gamma(t)},
    \quad
    \beta(t) = \frac{\dot\gamma(t)}{\gamma(t)}.
    \label{eq:app-u-star-score-form}
\end{equation}
\end{lemma}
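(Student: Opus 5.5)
The plan is to compute $\hat u^*$ as a posterior average of conditional velocities, and then eliminate the posterior mean of the data by using the mixture score.

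\emph{Step 1 (regression optimum as a conditional expectation).} The empirical flow matching objective is $\mathbb E_{t,i,x_0}\|v(z_t,t)-\dot z_t\|^2$, with $i$ uniform on $\{1,\dots,N\}$ and $z_t$ given by \eqref{eq:app-linear-bridge}. Its pointwise minimizer is the conditional mean $\hat u^*(z,t)=\mathbb E[\dot z_t\mid z_t=z]$. Since $\hat p_t$ is the marginal of $z_t$ and $\lambda_i(z,t)=\mathbb P(i\mid z,t)$ as in \eqref{eq:app-lambda-def}, this gives
\begin{equation*}
\hat u^*(z,t)=\sum_{i=1}^N \lambda_i(z,t)\,\mathbb E\big[\dot z_t\mid z_t=z,\ i\big].
\end{equation*}

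\emph{Step 2 (conditional velocity in closed form).} Differentiating the bridge gives $\dot z_t=\dot\gamma(t)\,(x^{(i)}-x_0)$. Given $i$ and $z_t=z$, the noise is determined exactly: $x_0=(z-\gamma(t)x^{(i)})/(1-\gamma(t))$, which is valid because $\gamma(t)\in(0,1)$. Substituting and simplifying yields
\begin{equation*}
\mathbb E[\dot z_t\mid z,i]=\frac{\dot\gamma(t)}{1-\gamma(t)}\big(x^{(i)}-z\big),
\qquad
\hat u^*(z,t)=\frac{\dot\gamma(t)}{1-\gamma(t)}\Big(\sum_i\lambda_i(z,t)\,x^{(i)}-z\Big).
\end{equation*}
For $\gamma(t)=t$ this recovers \eqref{eq:closed_form_u_hat}.

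\emph{Step 3 (mixture score).} Differentiate \eqref{eq:app-mixture-pt}. Each Gaussian component satisfies $\nabla_z p_t(z\mid x^{(i)})=-p_t(z\mid x^{(i)})\,(z-\gamma(t)x^{(i)})/\sigma_t^2$, so
\begin{equation*}
\nabla_z\log\hat p_t(z)=\sum_i\lambda_i(z,t)\,\frac{\gamma(t)x^{(i)}-z}{\sigma_t^2}.
\end{equation*}
Solving for the posterior mean gives $\sum_i\lambda_i x^{(i)}=\big(\sigma_t^2\nabla_z\log\hat p_t(z)+z\big)/\gamma(t)$; this uses $\gamma(t)\neq0$ and $\sum_i\lambda_i=1$.

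\emph{Step 4 (substitute and collect terms).} Plugging this into Step 2, the score coefficient is $\dot\gamma\sigma_t^2/(\gamma(1-\gamma))=\dot\gamma(1-\gamma)/\gamma=\alpha(t)$. The coefficient of $z$ is $\frac{\dot\gamma}{1-\gamma}\big(\frac1\gamma-1\big)=\dot\gamma/\gamma=\beta(t)$. This is \eqref{eq:app-u-star-score-form}, and it agrees with the form of $\alpha$ in Lemma~\ref{lem:score-decomposition-main} because $\sigma_t^2=(1-\gamma)^2$.

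The computations in Steps 2--4 are routine. The only step needing care is Step 1: justifying that the $L^2$ minimizer is the conditional expectation under the joint law of $(i,x_0)$. This is the standard regression fact, applied for each fixed $t$ since the loss decouples across $t$. The explicit conditioning on $i$ works because, given $i$ and $z$, the bridge map from $x_0$ to $z$ is bijective.
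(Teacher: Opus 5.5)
Your proposal is correct and follows essentially the same route as the paper's proof: compute the conditional velocity $\frac{\dot\gamma(t)}{1-\gamma(t)}(x^{(i)}-z)$, average it against the posterior weights $\lambda_i(z,t)$, and eliminate the data-dependent sum using the Gaussian-mixture score identity. The only cosmetic difference is that you solve the score identity for the posterior mean $\sum_i\lambda_i x^{(i)}$, whereas the paper rewrites each $x^{(i)}-z$ as $\frac{1}{\gamma(t)}(\mu_i(t)-z)+\frac{1-\gamma(t)}{\gamma(t)}z$ before summing; this is the same algebra, and your explicit justification of the regression optimum as a conditional expectation is a point the paper simply takes for granted.
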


\begin{proof}
By definition of the bridge~\eqref{eq:app-linear-bridge},
for fixed $(x_0,x^{(i)})$ we have
\begin{equation}
    z_t = (1-\gamma(t))x_0 + \gamma(t)x^{(i)}.
\end{equation}
Differentiating with respect to $t$ yields the conditional velocity
\begin{equation}
    \dot z_t
    = u_{\text{cond}}(z_t,t,x^{(i)},x_0)
    = -\dot\gamma(t)\,x_0 + \dot\gamma(t)\,x^{(i)}
    = \dot\gamma(t)\,(x^{(i)} - x_0).
    \label{eq:app-u-cond-x0}
\end{equation}

We now express $x_0$ in terms of $(z,t,x^{(i)})$ where $z=z_t$.
From $z = (1-\gamma)x_0 + \gamma x^{(i)}$ we obtain
\begin{equation}
    x_0 = \frac{\gamma(t)x^{(i)} - z}{\gamma(t) - 1}.
\end{equation}
Substituting into~\eqref{eq:app-u-cond-x0} gives
\begin{equation}
    x^{(i)} - x_0
    = x^{(i)} - \frac{\gamma x^{(i)} - z}{\gamma - 1}
    = \frac{z - x^{(i)}}{\gamma - 1}
    = \frac{x^{(i)} - z}{1-\gamma(t)},
\end{equation}
and hence
\begin{equation}
    u_{\text{cond}}(z,t,x^{(i)})
    = \dot\gamma(t)\,(x^{(i)} - x_0)
    = \frac{\dot\gamma(t)}{1-\gamma(t)}\,(x^{(i)} - z).
    \label{eq:app-u-cond-final}
\end{equation}

The optimal velocity $\hat u^*(z,t)$ is the conditional
expectation of~\eqref{eq:app-u-cond-final}, where the
randomness is over the choice of $x^{(i)}$ (equivalently, the index
$i$).  Using the posterior
weights~\eqref{eq:app-lambda-def}, we obtain
\begin{equation}
    \hat u^*(z,t)
    = \mathbb E\big[u_{\text{cond}}(z,t,x^{(i)}) \,\big|\, z,t\big]
    = \sum_{i=1}^N \lambda_i(z,t)\,u_{\text{cond}}(z,t,x^{(i)})
    = \frac{\dot\gamma(t)}{1-\gamma(t)}
      \sum_{i=1}^N \lambda_i(z,t)\,(x^{(i)} - z).
    \label{eq:app-u-star-intermediate}
\end{equation}

We now relate the mixture score $\nabla_z \log \hat p_t(z)$ to the
same posterior weights.  For the Gaussian mixture
$\hat p_t(z) = \tfrac{1}{N}\sum_i \mathcal N(z;\mu_i(t),\Sigma_t)$
with common covariance $\Sigma_t = \sigma_t^2 I_d$, the gradient of
the log-density can be computed to be:
\begin{equation}
    \nabla_z \log \hat p_t(z)
    = \sum_{i=1}^N \lambda_i(z,t)\,\Sigma_t^{-1}\big(\mu_i(t) - z\big),
\end{equation}
which is the standard expression for the score of a Gaussian mixture.
Since $\Sigma_t = \sigma_t^2 I_d$, we have $\Sigma_t^{-1} =
\tfrac{1}{\sigma_t^2} I_d$, so
\begin{equation}
    \nabla_z \log \hat p_t(z)
    = \frac{1}{\sigma_t^2}
      \sum_{i=1}^N \lambda_i(z,t)\,\big(\mu_i(t) - z\big),
    \qquad
    \mu_i(t) = \gamma(t)x^{(i)}.
    \label{eq:app-score-mixture}
\end{equation}

Next, we express the sum
$S_1 := \sum_i \lambda_i(z,t)(x^{(i)} - z)$ in
\eqref{eq:app-u-star-intermediate} in terms of the sum
$S_2 := \sum_i \lambda_i(z,t)(\mu_i(t) - z)$
appearing in~\eqref{eq:app-score-mixture}.  Note that
\begin{equation}
    \mu_i(t) - z
    = \gamma(t)x^{(i)} - z,
\end{equation}
and we can rewrite
\begin{equation}
    x^{(i)} - z
    = \frac{1}{\gamma(t)}\big(\gamma(t)x^{(i)} - \gamma(t)z\big)
    = \frac{1}{\gamma(t)}\big(\mu_i(t) - z\big)
      + \frac{1-\gamma(t)}{\gamma(t)}\,z.
\end{equation}
Therefore,
\begin{align}
    S_1
    &= \sum_{i=1}^N \lambda_i(z,t)\,(x^{(i)} - z) \\
    &= \sum_{i=1}^N \lambda_i(z,t)\left[
           \frac{1}{\gamma(t)}\big(\mu_i(t) - z\big)
           + \frac{1-\gamma(t)}{\gamma(t)}\,z
       \right] \\
    &= \frac{1}{\gamma(t)} \sum_{i=1}^N \lambda_i(z,t)\big(\mu_i(t) - z\big)
       + \frac{1-\gamma(t)}{\gamma(t)}\,z \sum_{i=1}^N\lambda_i(z,t) \\
    &= \frac{1}{\gamma(t)} S_2
       + \frac{1-\gamma(t)}{\gamma(t)}\,z,
\end{align}
where we used $\sum_i \lambda_i(z,t)=1$.  Using
$S_2 = \sigma_t^2 \nabla_z \log \hat p_t(z)$
from~\eqref{eq:app-score-mixture}, we obtain
\begin{equation}
    S_1
    = \frac{\sigma_t^2}{\gamma(t)} \nabla_z \log \hat p_t(z)
      + \frac{1-\gamma(t)}{\gamma(t)}\,z.
\end{equation}
Substituting this into~\eqref{eq:app-u-star-intermediate} and using the formula $\sigma_t^2 = (1-\gamma(t))^2$ yield
\begin{align}
    \hat u^*(z,t)
    &= \frac{\dot\gamma(t)}{1-\gamma(t)} S_1 \\
    &= \frac{\dot\gamma(t)}{1-\gamma(t)}
       \left[
           \frac{\sigma_t^2}{\gamma(t)} \nabla_z \log \hat p_t(z)
           + \frac{1-\gamma(t)}{\gamma(t)}\,z
       \right] \\
    &= \underbrace{
        \frac{\dot\gamma(t)(1-\gamma(t))}{\gamma(t)}
    }_{\alpha(t)}
       \nabla_z \log \hat p_t(z)
       + \underbrace{
          \frac{\dot\gamma(t)}{\gamma(t)}
       }_{\beta(t)} z,
\end{align}
which is exactly~\eqref{eq:app-u-star-score-form}.
\end{proof}

\subsection{Instantaneous Energy vs.\ Negative Log-Density}
\label{subsec:energy-density-relation}

We now establish the quantitative relationship between the instantaneous energy $\|\hat u^*(z,t)\|^2$ and the negative log-density $-\log \hat p_t(z)$. The key technical condition is posterior dominance: at each point $(z,t)$, we assume there exists a dominant mixture component $i^*$ such that $\lambda_{i^*}(z,t)\ge 1-\varepsilon$ for some small $\varepsilon\in(0,1/2)$.

Under this condition, we will show that the mixture density $\hat p_t(z)$ and mixture score $\nabla_z\log\hat p_t(z)$ can both be well-approximated by the corresponding quantities for the dominant Gaussian component. This local approximation enables us to derive explicit bounds relating the instantaneous energy to the negative log-density.

The development proceeds through three lemmas:
\begin{itemize}[leftmargin=*]
    \item Lemma~\ref{lem:local-gaussian} establishes a local Gaussian approximation for $-\log\hat p_t(z)$ in terms of the dominant quadratic form.
    \item Lemma~\ref{lem:score-decomposition} decomposes the mixture score as the dominant component's score plus a controlled remainder.
    \item Lemma~\ref{lem:energy-bounds} combines these results with Lemma~\ref{lem:velocity-score-form} to bound the instantaneous energy.
\end{itemize}
Finally, we combine all lemmas to prove Theorem~\ref{thm:energy-density}.

\paragraph{Lemma 2: Local Gaussian Approximation.}

The following lemma shows that under posterior dominance, the mixture log-density can be locally approximated by the log-density of a single dominant Gaussian component, with a quantitatively controlled remainder which is small when $\varepsilon$ is small.

\begin{lemma}[Local Gaussian approximation with quantitative constants]
\label{lem:local-gaussian}
Fix $t \in (0,1)$ and $z \in \mathbb R^d$. Suppose there exists an index $i^*$
and a parameter $\varepsilon \in (0,1/2)$ such that the posterior responsibility
\[
    \lambda_{i^*}(z,t)
    = \frac{p_t(z \mid x^{(i^*)})}{\sum_{j=1}^N p_t(z \mid x^{(j)})}
    \;\ge\; 1 - \varepsilon.
\]
Let $\sigma_t^2 = (1-\gamma(t))^2$ and $\mu_{i^*}(t) = \gamma(t)\,x^{(i^*)}$. Then
\begin{equation}
    -\log \hat p_t(z)
    = \frac{1}{2\sigma_t^2}\,\big\|z - \mu_{i^*}(t)\big\|^2
      + C_t^{(0)} + R_t(z),
    \label{eq:local-gauss-expansion}
\end{equation}
where
\begin{equation}
    C_t^{(0)}
    := \frac{d}{2}\log(2\pi) + \frac{d}{2}\log \sigma_t^2 + \log N,
\end{equation}
and the remainder $R_t(z)$ satisfies
\begin{equation}
    \log(1-\varepsilon)
    \;\le\;
    R_t(z)
    \;\le\; 0,
    \qquad
    \big|R_t(z)\big| \le -\log(1-\varepsilon).
    \label{eq:Rt-bound}
\end{equation}
\end{lemma}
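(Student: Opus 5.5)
The plan is to write the mixture in terms of its dominant component and read off the remainder exactly from the posterior weight. By definition of $\lambda_{i^*}$,
\[
\sum_{j=1}^N p_t(z\mid x^{(j)}) = \frac{p_t(z\mid x^{(i^*)})}{\lambda_{i^*}(z,t)},
\qquad\text{so}\qquad
\hat p_t(z) = \frac{1}{N}\,\frac{p_t(z\mid x^{(i^*)})}{\lambda_{i^*}(z,t)}.
\]
Taking $-\log$ gives the exact identity
\[
-\log\hat p_t(z) = \log N - \log p_t(z\mid x^{(i^*)}) + \log\lambda_{i^*}(z,t).
\]
No approximation enters here.

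Next I substitute the explicit Gaussian density \eqref{eq:app-pt-cond}, namely $p_t(z\mid x^{(i^*)}) = (2\pi\sigma_t^2)^{-d/2}\exp\!\big(-\|z-\mu_{i^*}(t)\|^2/(2\sigma_t^2)\big)$. This gives
\[
-\log p_t(z\mid x^{(i^*)}) = \frac{1}{2\sigma_t^2}\|z-\mu_{i^*}(t)\|^2 + \frac d2\log(2\pi) + \frac d2\log\sigma_t^2 .
\]
Combining with the previous display yields \eqref{eq:local-gauss-expansion}, with $C_t^{(0)}$ exactly as stated and remainder
\[
R_t(z) := \log\lambda_{i^*}(z,t).
\]
The definition of $C_t^{(0)}$ requires $\sigma_t^2>0$, which holds because $t\in(0,1)$ with $\gamma(t)<1$.

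Finally, the bounds \eqref{eq:Rt-bound} follow from $1-\varepsilon\le\lambda_{i^*}(z,t)\le 1$. The upper bound uses that $\lambda_{i^*}$ is a normalized responsibility, and the lower bound is the dominance hypothesis. Monotonicity of $\log$ then gives $\log(1-\varepsilon)\le R_t(z)\le 0$. Since $R_t\le 0$, we have $|R_t(z)| = -R_t(z)\le -\log(1-\varepsilon)$, and this quantity is at most $\log 2$ because $\varepsilon<1/2$.

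I do not expect a genuine obstacle. The only point needing care is recognizing that the remainder is exactly $\log\lambda_{i^*}$, rather than trying to bound the sum over non-dominant components via the spread of the means. Doing the latter would introduce unnecessary geometric constants.
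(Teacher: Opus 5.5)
Your proposal is correct and is essentially the paper's argument. The paper factors $\hat p_t(z) = \tfrac1N p_t(z\mid x^{(i^*)})\bigl(1+\delta(z,t)\bigr)$ with $\delta = \sum_{j\neq i^*} p_t(z\mid x^{(j)})/p_t(z\mid x^{(i^*)})$ and bounds $0\le\delta\le \varepsilon/(1-\varepsilon)$ using the dominance hypothesis. Since $1+\delta = 1/\lambda_{i^*}(z,t)$, its remainder $R_t(z) = -\log(1+\delta)$ is exactly your $\log\lambda_{i^*}(z,t)$; the paper likewise avoids any bound via the spread of the means.
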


\begin{proof}
We have
\[
    \hat p_t(z)
    = \frac{1}{N}\sum_{i=1}^N p_t(z \mid x^{(i)})
    = \frac{1}{N}\,p_t(z \mid x^{(i^*)})
      \left[
        1 + \sum_{j\neq i^*}\frac{p_t(z \mid x^{(j)})}{p_t(z \mid x^{(i^*)})}
      \right].
\]
Define
\[
    \delta(z,t)
    := \sum_{j\neq i^*}\frac{p_t(z \mid x^{(j)})}{p_t(z \mid x^{(i^*)})}
    \;\ge\; 0.
\]
Then
\[
    \hat p_t(z) = \frac{1}{N}\,p_t(z \mid x^{(i^*)})\big(1+\delta(z,t)\big).
\]

The dominance assumption $\lambda_{i^*}(z,t) \ge 1-\varepsilon$ implies
\[
    \lambda_{i^*}(z,t)
    = \frac{p_t(z \mid x^{(i^*)})}{\sum_{k} p_t(z \mid x^{(k)})}
    \ge 1-\varepsilon
    \;\Longrightarrow\;
    \sum_{k} p_t(z \mid x^{(k)}) \le \frac{1}{1-\varepsilon} p_t(z \mid x^{(i^*)}).
\]
Hence
\[
    \sum_{j\neq i^*} p_t(z \mid x^{(j)})
    \le \frac{1}{1-\varepsilon} p_t(z \mid x^{(i^*)}) - p_t(z \mid x^{(i^*)})
    = \frac{\varepsilon}{1-\varepsilon} p_t(z \mid x^{(i^*)}),
\]
which gives
\[
    0 \le \delta(z,t)
    = \frac{1}{p_t(z \mid x^{(i^*)})}\sum_{j\neq i^*}p_t(z \mid x^{(j)})
    \le \frac{\varepsilon}{1-\varepsilon}.
\]
Thus
\[
    1 \le 1+\delta(z,t) \le \frac{1}{1-\varepsilon}.
\]

Taking logarithms,
\[
    \log \hat p_t(z)
    = \log\frac{1}{N} + \log p_t(z \mid x^{(i^*)}) + \log(1+\delta(z,t)).
\]
For the Gaussian component
$p_t(z \mid x^{(i^*)})
 = \mathcal N(z ; \mu_{i^*}(t), \sigma_t^2 I_d)$,
\[
    \log p_t(z \mid x^{(i^*)})
    = -\frac{d}{2}\log(2\pi) - \frac{d}{2}\log \sigma_t^2
      -\frac{1}{2\sigma_t^2}\big\|z - \mu_{i^*}(t)\big\|^2.
\]
Therefore,
\[
    -\log \hat p_t(z)
    = \frac{1}{2\sigma_t^2}\big\|z - \mu_{i^*}(t)\big\|^2
      + \frac{d}{2}\log(2\pi) + \frac{d}{2}\log \sigma_t^2 + \log N
      -\log\big(1+\delta(z,t)\big).
\]
This is \eqref{eq:local-gauss-expansion} with
\[
    C_t^{(0)}
    := \frac{d}{2}\log(2\pi) + \frac{d}{2}\log \sigma_t^2 + \log N,
    \quad
    R_t(z) := -\log\big(1+\delta(z,t)\big).
\]

Since $1 \le 1+\delta(z,t) \le 1/(1-\varepsilon)$, we have
\[
    0 \le \log\big(1+\delta(z,t)\big) \le \log\frac{1}{1-\varepsilon},
\]
so
\[
    \log(1-\varepsilon) \le R_t(z) \le 0,
\]
and $\big|R_t(z)\big| \le -\log(1-\varepsilon)$.
\end{proof}

\paragraph{Lemma 3: Mixture Score Decomposition.}

Just as the mixture density can be locally approximated by the dominant component, the mixture score $\nabla_z\log\hat p_t(z)$ can be decomposed as the dominant component's score plus a small remainder. This decomposition is crucial for controlling the instantaneous energy.

\begin{lemma}[Mixture score versus dominant component]
\label{lem:score-decomposition}
Under the assumptions of Lemma~\ref{lem:local-gaussian}, the score of the
mixture can be written as
\begin{equation}
    \nabla_z \log \hat p_t(z)
    = -\frac{1}{\sigma_t^2}\big(z - \mu_{i^*}(t)\big) + r_t(z),
\end{equation}
where the remainder $r_t(z)$ admits the explicit bound
\begin{equation}
    \big\|r_t(z)\big\|
    \;\le\;
    \frac{\varepsilon}{\sigma_t^2}\,
    \Delta_t,
    \qquad
    \Delta_t
    := \max_{1\le j\le N} \big\|\mu_j(t) - \mu_{i^*}(t)\big\|.
    \label{eq:rt-explicit-bound}
\end{equation}
\end{lemma}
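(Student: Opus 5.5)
Start from the explicit mixture-score formula \eqref{eq:app-score-mixture} established in the proof of Lemma~\ref{lem:velocity-score-form}:
\[
    \nabla_z \log \hat p_t(z) = \frac{1}{\sigma_t^2}\sum_{j=1}^N \lambda_j(z,t)\big(\mu_j(t)-z\big).
\]
Since $\sum_j \lambda_j(z,t)=1$, we can split $\mu_j(t)-z = (\mu_{i^*}(t)-z) + (\mu_j(t)-\mu_{i^*}(t))$ inside the sum. The first piece sums to exactly $-\frac{1}{\sigma_t^2}(z-\mu_{i^*}(t))$, which is the dominant component's score. This identifies the remainder as
\[
    r_t(z) = \frac{1}{\sigma_t^2}\sum_{j=1}^N \lambda_j(z,t)\big(\mu_j(t)-\mu_{i^*}(t)\big)
    = \frac{1}{\sigma_t^2}\sum_{j\neq i^*} \lambda_j(z,t)\big(\mu_j(t)-\mu_{i^*}(t)\big),
\]
where the $j=i^*$ term drops out because it vanishes identically.

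Next, bound the norm by the triangle inequality. Each difference $\|\mu_j(t)-\mu_{i^*}(t)\|$ is at most $\Delta_t$ by definition, so
\[
    \|r_t(z)\|\le \frac{\Delta_t}{\sigma_t^2}\sum_{j\neq i^*}\lambda_j(z,t)
    = \frac{\Delta_t}{\sigma_t^2}\big(1-\lambda_{i^*}(z,t)\big).
\]
The dominance assumption $\lambda_{i^*}(z,t)\ge 1-\varepsilon$, inherited from Lemma~\ref{lem:local-gaussian}, gives $1-\lambda_{i^*}(z,t)\le\varepsilon$. This yields exactly \eqref{eq:rt-explicit-bound}.

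There is essentially no obstacle here. The only point requiring care is the recentering step: we must use $\sum_j\lambda_j=1$ so that the $z$-dependence collapses entirely into the dominant term and the remainder involves only the mean differences. Otherwise one would get a bound depending on $\|z-\mu_j(t)\|$, which is not uniform in $z$. As a sanity check, $\Delta_t=\gamma(t)\max_j\|x^{(j)}-x^{(i^*)}\|$, so for fixed data the bound is finite at every nondegenerate $t$.
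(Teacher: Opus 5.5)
Your proof is correct and follows the paper's argument step for step. You recenter with $\sum_j \lambda_j(z,t)=1$ to isolate the dominant-component score and identify $r_t(z)=\sigma_t^{-2}\sum_{j\neq i^*}\lambda_j(z,t)\big(\mu_j(t)-\mu_{i^*}(t)\big)$. You then conclude with the triangle inequality, the maximum $\Delta_t$, and $1-\lambda_{i^*}(z,t)\le\varepsilon$.
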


\begin{proof}
By definition of the mixture score with common covariance $\Sigma_t=\sigma_t^2 I_d$,
\[
    \nabla_z \log \hat p_t(z)
    = \frac{1}{\sigma_t^2}\sum_{i=1}^N \lambda_i(z,t)\big(\mu_i(t)-z\big).
\]
The score of the single Gaussian component $\mathcal N(\mu_{i^*}(t),\sigma_t^2 I_d)$
is
\[
    \frac{1}{\sigma_t^2}\big(\mu_{i^*}(t) - z\big)
    = -\frac{1}{\sigma_t^2}\big(z - \mu_{i^*}(t)\big).
\]
Define
\[
    r_t(z)
    := \nabla_z \log \hat p_t(z)
       + \frac{1}{\sigma_t^2}\big(z - \mu_{i^*}(t)\big).
\]
Substituting the expressions above,
\[
    r_t(z)
    = \frac{1}{\sigma_t^2}
      \left(
        \sum_{i=1}^N \lambda_i(z,t)\mu_i(t) - \mu_{i^*}(t)
      \right)
    = \frac{1}{\sigma_t^2}
      \sum_{j\neq i^*} \lambda_j(z,t)\big(\mu_j(t) - \mu_{i^*}(t)\big).
\]
Taking norms and using the triangle inequality,
\[
    \big\|r_t(z)\big\|
    \le \frac{1}{\sigma_t^2}
         \sum_{j\neq i^*} \lambda_j(z,t)
         \big\|\mu_j(t) - \mu_{i^*}(t)\big\|
    \le \frac{1}{\sigma_t^2}
         \left(\max_{j} \big\|\mu_j(t) - \mu_{i^*}(t)\big\|\right)
         \sum_{j\neq i^*} \lambda_j(z,t).
\]
By the dominance assumption $\lambda_{i^*}(z,t)\ge 1-\varepsilon$,
\[
    \sum_{j\neq i^*} \lambda_j(z,t)
    = 1 - \lambda_{i^*}(z,t) \le \varepsilon.
\]
Hence
\[
    \big\|r_t(z)\big\|
    \le \frac{\varepsilon}{\sigma_t^2}\,
         \max_{j} \big\|\mu_j(t) - \mu_{i^*}(t)\big\|
    = \frac{\varepsilon}{\sigma_t^2}\,\Delta_t.
\]
\end{proof}

\paragraph{Lemma 4: Instantaneous Energy Bounds.}

We now combine Lemma~\ref{lem:velocity-score-form} (velocity-score representation) and Lemma~\ref{lem:score-decomposition} (score decomposition) to establish explicit upper and lower bounds for the instantaneous energy $\|\hat u^*(z,t)\|^2$ in terms of the dominant quadratic form $A_t(z) := \frac{1}{2\sigma_t^2}\|z - \mu_{i^*}(t)\|^2$. This lemma provides the key technical link between energy and the  distance from the dominant component.

The proof employs careful norm inequalities to derive bounds with explicit multiplicative constants $c_1(t), c_2(t)$ and additive constants $C_\pm(t)$. These explicit constants will propagate through to our main theorem.

\begin{lemma}[Instantaneous energy vs.\ dominant quadratic form]
\label{lem:energy-bounds}
Fix $t\in(0,1)$ and suppose $\gamma(t)\in(0,1)$ and $\dot\gamma(t)\neq 0$.
Let $\sigma_t^2 = (1-\gamma(t))^2$ and $\mu_i(t) = \gamma(t)x^{(i)}$.
Assume that at $(z,t)$ there exists an index $i^*$ and $\varepsilon\in(0,1/2)$
such that $\lambda_{i^*}(z,t)\ge 1-\varepsilon$.

Define the quadratic form
\[
    A_t(z) := \frac{1}{2\sigma_t^2}\,\big\|z - \mu_{i^*}(t)\big\|^2.
\]
Let
\[
    \alpha(t) = \frac{\dot\gamma(t)\,\sigma_t^2}{\gamma(t)(1-\gamma(t))},
    \qquad
    \beta(t) = \frac{\dot\gamma(t)}{\gamma(t)},
\]
and
\[
    m(t) := \beta(t) - \frac{\alpha(t)}{\sigma_t^2}
           = -\frac{\dot\gamma(t)}{1-\gamma(t)} \neq 0.
\]
Set
\[
    b_t := \frac{\alpha(t)}{\sigma_t^2}\,\mu_{i^*}(t),
    \qquad
    \Delta_t := \max_j \big\|\mu_j(t) - \mu_{i^*}(t)\big\|,
\]
\[
    R_t := \frac{\varepsilon}{\sigma_t^2}\,\Delta_t,
    \qquad
    E_t := |\alpha(t)|\,R_t,
    \qquad
    F_t := \|b_t\| + E_t.
\]
Then the instantaneous energy satisfies
\begin{equation}
    c_1(t)\,A_t(z) - C_-(t)
    \;\le\;
    \big\|\hat u^*(z,t)\big\|^2
    \;\le\;
    c_2(t)\,A_t(z) + C_+(t),
    \label{eq:inst-energy-quadratic-bounds}
\end{equation}
with explicit constants
\[
    c_1(t) = \frac{1}{2}\,m(t)^2\,\sigma_t^2,
    \qquad
    c_2(t) = 12\,m(t)^2\,\sigma_t^2,
\]
and
\[
    C_-(t) := \frac{m(t)^2}{2}\,\big\|\mu_{i^*}(t)\big\|^2 + 2F_t^2, \qquad
    C_+(t) := 6\,m(t)^2\,\big\|\mu_{i^*}(t)\big\|^2 + 3\big(\|b_t\|^2 + E_t^2\big).
\]
\end{lemma}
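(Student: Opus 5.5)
The plan is to substitute the score decomposition into the velocity representation, which writes $\hat u^*(z,t)$ as a multiple of $z$ plus a perturbation bounded independently of $z$. Then two elementary quadratic inequalities give the upper and lower bounds.

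\textbf{Step 1 (affine rewriting).} By Lemma~\ref{lem:velocity-score-form}, $\hat u^*(z,t)=\alpha(t)\nabla_z\log\hat p_t(z)+\beta(t)z$. Note that the $\alpha(t)$ there, $\dot\gamma(1-\gamma)/\gamma$, coincides with $\dot\gamma\sigma_t^2/(\gamma(1-\gamma))$. Lemma~\ref{lem:score-decomposition} gives $\nabla_z\log\hat p_t(z)=-\sigma_t^{-2}(z-\mu_{i^*}(t))+r_t(z)$. Substituting,
\[
\hat u^*(z,t)=\Big(\beta(t)-\tfrac{\alpha(t)}{\sigma_t^2}\Big)z+\tfrac{\alpha(t)}{\sigma_t^2}\mu_{i^*}(t)+\alpha(t)r_t(z)=m(t)\,z+b_t+\alpha(t)r_t(z).
\]
Write $w:=z-\mu_{i^*}(t)$, so that $m(t)z=m(t)(w+\mu_{i^*}(t))$ and $\|w\|^2=2\sigma_t^2A_t(z)$. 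By Lemma~\ref{lem:score-decomposition}, $\|\alpha(t)r_t(z)\|\le|\alpha(t)|R_t=E_t$. Hence the perturbation $c:=b_t+\alpha(t)r_t(z)$ satisfies $\|c\|\le F_t$ and $\|c\|^2\le 2(\|b_t\|^2+E_t^2)$.

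\textbf{Step 2 (upper bound).} Apply $\|x_1+x_2+x_3\|^2\le 3(\|x_1\|^2+\|x_2\|^2+\|x_3\|^2)$ with $x_1=m(t)(w+\mu_{i^*})$, $x_2=b_t$, $x_3=\alpha(t)r_t(z)$. Then bound $\|w+\mu_{i^*}\|^2\le 2\|w\|^2+2\|\mu_{i^*}\|^2$. This gives
\[
\|\hat u^*\|^2\le 6m^2\|w\|^2+6m^2\|\mu_{i^*}\|^2+3(\|b_t\|^2+E_t^2)=12m^2\sigma_t^2A_t(z)+C_+(t).
\]

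\textbf{Step 3 (lower bound).} Use the reverse inequality $\|a+c\|^2\ge\tfrac12\|a\|^2-\|c\|^2$, which follows from $\|a\|^2\le 2\|a+c\|^2+2\|c\|^2$, with $a=m(t)(w+\mu_{i^*})$. Apply the same inequality once more inside, with $\|w+\mu_{i^*}\|^2\ge\tfrac12\|w\|^2-\|\mu_{i^*}\|^2$. This yields
\[
\|\hat u^*\|^2\ge \tfrac14 m^2\|w\|^2-\tfrac12m^2\|\mu_{i^*}\|^2-F_t^2\ge \tfrac12 m^2\sigma_t^2A_t(z)-C_-(t),
\]
since $F_t^2\le 2F_t^2$.

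\textbf{Main obstacle.} The delicate part is bookkeeping rather than analysis. One must verify the identity $m(t)=\beta(t)-\alpha(t)/\sigma_t^2=-\dot\gamma/(1-\gamma)$ and make sure the $z$-dependent remainder $\alpha r_t$ is bounded uniformly in $z$; this uniformity is what lets the perturbation be absorbed into additive constants. Both facts come directly from Lemma~\ref{lem:score-decomposition}, since $R_t$ depends only on $\varepsilon$, $t$ and the data. The remaining work is choosing splitting constants so that they land on the stated $c_1,c_2,C_\pm$, and the choices above do.
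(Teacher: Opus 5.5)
Your proof is correct and matches the paper's argument step for step: the same affine rewriting $\hat u^*(z,t)=m(t)z+b_t+\alpha(t)r_t(z)$, the same three-term splitting for the upper bound, and a Young-type inequality combined with $\|z\|^2\ge\tfrac12\|z-\mu_{i^*}(t)\|^2-\|\mu_{i^*}(t)\|^2$ for the lower bound. The only difference is that the paper discards $\|b_t+\alpha(t)r_t(z)\|^2$ before applying Young and you keep it, so you reach the slightly sharper additive constant $\tfrac{m(t)^2}{2}\|\mu_{i^*}(t)\|^2+F_t^2$, which you then correctly relax to the stated $C_-(t)$.
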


\begin{proof}
By Lemma~\ref{lem:velocity-score-form} and Lemma~\ref{lem:score-decomposition},
\[
    \hat u^*(z,t)
    = \alpha(t)\,\nabla_z\log\hat p_t(z) + \beta(t)\,z
    = \alpha(t)\left[
          -\frac{1}{\sigma_t^2}\big(z - \mu_{i^*}(t)\big) + r_t(z)
      \right] + \beta(t)\,z.
\]
Rearrange as
\[
    \hat u^*(z,t)
    = \Big(\beta(t) - \frac{\alpha(t)}{\sigma_t^2}\Big) z
      + \frac{\alpha(t)}{\sigma_t^2}\mu_{i^*}(t)
      + \alpha(t)\,r_t(z)
    = M_t z + b_t + e_t(z),
\]
where $M_t := m(t) I_d$, $b_t := \frac{\alpha(t)}{\sigma_t^2}\mu_{i^*}(t)$ and
$e_t(z) := \alpha(t) r_t(z)$.

By Lemma~\ref{lem:score-decomposition},
\[
    \|r_t(z)\| \le R_t = \frac{\varepsilon}{\sigma_t^2}\Delta_t,
    \quad\Rightarrow\quad
    \|e_t(z)\| \le |\alpha(t)|\,R_t =: E_t,
\]
so $\|e_t(z)\|$ is bounded uniformly in $z$.

\medskip\noindent\textbf{Upper bound.}
Using $\|a+b+c\|^2 \le 3(\|a\|^2+\|b\|^2+\|c\|^2)$,
\[
    \big\|\hat u^*(z,t)\big\|^2
    = \big\|M_t z + b_t + e_t(z)\big\|^2
    \le 3\big( \|M_t z\|^2 + \|b_t\|^2 + \|e_t(z)\|^2 \big)
    \le 3 m(t)^2\|z\|^2 + 3\big(\|b_t\|^2 + E_t^2\big).
\]
Using $\|z\|^2 \le 2\|z-\mu_{i^*}(t)\|^2 + 2\|\mu_{i^*}(t)\|^2$, we get
\[
    \big\|\hat u^*(z,t)\big\|^2
    \le 3m(t)^2 \big(2\|z-\mu_{i^*}(t)\|^2 + 2\|\mu_{i^*}(t)\|^2\big)
          + 3\big(\|b_t\|^2 + E_t^2\big),
\]
hence
\[
    \big\|\hat u^*(z,t)\big\|^2
    \le 6m(t)^2\|z-\mu_{i^*}(t)\|^2
         + C_+(t),
\]
with $C_+(t)$ as stated.  Using
$\|z-\mu_{i^*}(t)\|^2 = 2\sigma_t^2 A_t(z)$, we obtain
\[
    \big\|\hat u^*(z,t)\big\|^2
    \le 12\,m(t)^2\,\sigma_t^2\,A_t(z) + C_+(t),
\]
which gives the upper bound in \eqref{eq:inst-energy-quadratic-bounds} with
$c_2(t) = 12\,m(t)^2\,\sigma_t^2$.

\medskip\noindent\textbf{Lower bound.}
Now, expanding the squared norm and using Cauchy-Schwarz inequality,
\begin{align*}
     \big\|\hat u^*(z,t)\big\|^2
    &= \|M_t z + b_t + e_t(z)\|^2 \\
    &= \|M_t z\|^2 + 2 \langle M_t z, b_t + e_t(z)\rangle  + \|b_t + e_t(z)\|^2 \\
    &\ge \|M_t z\|^2 + 2 \langle M_t z, b_t + e_t(z)\rangle  \\
    &\ge \|M_t z\|^2 - 2 \|M_t z\| \cdot \|b_t + e_t(z)\| =: A^2 - 2AB,
\end{align*}
where $A = \|M_t z\|$ and $B = \|b_t + e_t(z)\|$.

Applying Young's inequality $2AB \le \tfrac{1}{2}A^2 + 2B^2$, we obtain:
\[
\begin{aligned}
    \big\|\hat u^*(z,t)\big\|^2
    &\ge A^2 - 2AB \\
    &\ge A^2 - \left(\frac12 A^2 + 2B^2\right) = \frac12 A^2 - 2B^2.
\end{aligned}
\]
We bound $B^2 \le (\|b_t\| + \|e_t(z)\|)^2 \le (\|b_t\| + E_t)^2 =: F_t^2$, so
\[
    \big\|\hat u^*(z,t)\big\|^2
    \ge \frac12 \|M_t z\|^2 - 2F_t^2
    = \frac12 m(t)^2\,\|z\|^2 - 2F_t^2.
\]

Next, we relate $\|z\|^2$ and $\|z-\mu_{i^*}(t)\|^2$.
From
\[
    \|z-\mu_{i^*}(t)\|^2
    = \|z\|^2 + \|\mu_{i^*}(t)\|^2
      - 2\langle z,\mu_{i^*}(t)\rangle
    \le 2\|z\|^2 + 2\|\mu_{i^*}(t)\|^2,
\]
we obtain
\[
    \|z\|^2
    \ge \frac12 \|z-\mu_{i^*}(t)\|^2 - \|\mu_{i^*}(t)\|^2.
\]
Substituting into the previous bound,
\[
\begin{aligned}
    \big\|\hat u^*(z,t)\big\|^2
    &\ge \frac12 m(t)^2\left(\frac12 \|z-\mu_{i^*}(t)\|^2
                              - \|\mu_{i^*}(t)\|^2\right)
         - 2F_t^2 \\
    &= \frac{m(t)^2}{4}\|z-\mu_{i^*}(t)\|^2
       - \frac{m(t)^2}{2}\|\mu_{i^*}(t)\|^2
       - 2F_t^2.
\end{aligned}
\]
In terms of $A_t(z)$, we have
$\|z-\mu_{i^*}(t)\|^2 = 2\sigma_t^2 A_t(z)$, thus
\[
    \frac{m(t)^2}{4}\|z-\mu_{i^*}(t)\|^2
    = \frac{m(t)^2}{4}\cdot 2\sigma_t^2 A_t(z)
    = \frac12 m(t)^2 \sigma_t^2 A_t(z).
\]
Therefore
\[
    \big\|\hat u^*(z,t)\big\|^2
    \ge \underbrace{\frac12 m(t)^2 \sigma_t^2}_{c_1(t)}\,A_t(z)
      - \underbrace{\left(
           \frac{m(t)^2}{2}\|\mu_{i^*}(t)\|^2 + 2F_t^2
        \right)}_{C_-(t)},
\]
which is the lower bound in \eqref{eq:inst-energy-quadratic-bounds}.
\end{proof}

\subsubsection{Proof of the Main Theorem}
\label{subsec:proof-main-theorem}

We are now ready to prove Theorem~\ref{thm:energy-density}, which establishes the direct relationship between instantaneous energy $\|\hat u^*\|^2$ and negative log-density $-\log\hat p_t(z)$. The proof combines Lemma~\ref{lem:local-gaussian} (which relates $-\log\hat p_t$ to the quadratic form $A_t(z)$) with Lemma~\ref{lem:energy-bounds} (which relates $\|\hat u^*\|^2$ to $A_t(z)$).

\begin{proof}[Proof of Theorem~\ref{thm:energy-density}]
From Lemma~\ref{lem:local-gaussian},
\[
    -\log \hat p_t(z)
    = A_t(z) + C_t^{(0)} + R_t(z),
\]
with $A_t(z) = \frac{1}{2\sigma_t^2}\|z-\mu_{i^*}(t)\|^2$ and
$\big|R_t(z)\big| \le -\log(1-\varepsilon) =: C_t^{(\mathrm{mix})}$.
Thus
\[
    A_t(z)
    = -\log \hat p_t(z) - C_t^{(0)} - R_t(z),
\]
and hence
\[
    -\log \hat p_t(z) - K_t
    \;\le\;
    A_t(z)
    \;\le\;
    -\log \hat p_t(z) + K_t,
\]
where
\[
    K_t := \big|C_t^{(0)}\big| + C_t^{(\mathrm{mix})}
          = \left|\frac{d}{2}\log(2\pi) + \frac{d}{2}\log \sigma_t^2
                   + \log N\right| - \log(1-\varepsilon).
\]

By Lemma~\ref{lem:energy-bounds},
\[
    c_1(t)\,A_t(z) - C_-(t)
    \le \|\hat u^*(z,t)\|^2
    \le c_2(t)\,A_t(z) + C_+(t),
\]
with explicit $c_1(t),c_2(t),C_\pm(t)$ as in
Lemma~\ref{lem:energy-bounds}.

For the lower bound, we use
$A_t(z) \ge -\log \hat p_t(z) - K_t$:
\[
    \|\hat u^*(z,t)\|^2
    \ge c_1(t)\big(-\log \hat p_t(z) - K_t\big) - C_-(t)
    = c_1(t)\big(-\log \hat p_t(z)\big) - \big(C_-(t) + c_1(t)K_t\big).
\]
For the upper bound, we use $A_t(z) \le -\log\hat p_t(z) + K_t$:
\[
    \|\hat u^*(z,t)\|^2
    \le c_2(t)\big(-\log \hat p_t(z) + K_t\big) + C_+(t)
    = c_2(t)\big(-\log \hat p_t(z)\big) + \big(C_+(t) + c_2(t)K_t\big).
\]

Therefore \eqref{eq:main-theorem-bounds} holds with
\[
    c_1'(t) := c_1(t) = \frac{1}{2}m(t)^2\sigma_t^2,
    \qquad
    c_2'(t) := c_2(t) = 12\,m(t)^2\sigma_t^2,
\]
\[
    C_t' := \max\big\{ C_-(t) + c_1(t)K_t,\; C_+(t) + c_2(t)K_t \big\}.
\]
\end{proof}

\subsubsection{Connection to Kinetic Path Energy}

Finally, we connect Theorem~\ref{thm:energy-density} to the kinetic path energy defined in the main paper. For a sample trajectory
$z_{0\to 1} = (z(t))_{t \in [0,1]}$ generated by the flow matching sampler,
the kinetic path energy is
\begin{equation}
    E(z_{0\to 1})
    = \frac12 \int_0^1 \big\|\hat u^*(z(t),t)\big\|^2\,dt.
\end{equation}

On any compact interval $[\delta,1-\delta]\subset(0,1)$ where the posterior dominance condition holds along the trajectory, the pointwise bounds~\eqref{eq:main-theorem-bounds} imply that:
\begin{equation}
    \frac12 \int_\delta^{1-\delta} c_1'(t)\,\big(-\log \hat p_t(z(t))\big)\,dt
    - \frac12\int_\delta^{1-\delta} C_t'\,dt
    \;\le\;
    E_\delta(z)
    \;\le\;
    \frac12 \int_\delta^{1-\delta} c_2'(t)\,\big(-\log \hat p_t(z(t))\big)\,dt
    + \frac12\int_\delta^{1-\delta} C_t'\,dt,
\end{equation}
where
$E_\delta(z)
    :=
    \frac12\int_\delta^{1-\delta}\|\hat u^*(z(t),t)\|^2\,dt.$

Thus, on trajectory segments away from the singular endpoints and under posterior dominance, kinetic energy is controlled by the time-integrated negative log-density up to time-dependent prefactors and additive constants. This supports the interpretation that trajectory segments passing through lower density regions of the intermediate mixture tend to incur larger kinetic energy, consistent with our empirical findings.

%Applying the pointwise bounds~\eqref{eq:main-theorem-bounds} along the trajectory yields
%\begin{equation}
%    \frac12 \int_0^1 c_1'(t)\,\big(-\log \hat p_t(z(t))\big)\,dt
%    - \frac12\int_0^1 C_t'\,dt
%    \;\le\;
%    E(z_{0\to 1})
%    \;\le\;
%    \frac12 \int_0^1 c_2'(t)\,\big(-\log \hat p_t(z(t))\big)\,dt
%    + \frac12\int_0^1 C_t'\,dt.
%\end{equation}

%In particular, up to a path-independent bias and time-dependent prefactors, the kinetic path energy is proportional to the integral of the negative log-density of the intermediate mixture along the trajectory.  Hence, trajectories that spend more time in low-density regions of $\hat p_t$ inevitably accumulate higher kinetic energy, which aligns with our empirical findings.

\subsection{Explicit Constants for the Linear Bridge}
\label{subsec:linear-bridge}

The most commonly used choice in practice is the linear interpolation $\gamma(t)=t$. For this choice, all the constants in Theorem~\ref{thm:energy-density} can be computed explicitly, yielding particularly simple and interpretable expressions. 
This special case corresponds to the standard straight-line interpolating path used in flow matching \cite{lipman2022flow} and rectified flow models \cite{liu2022flow}.
%This special case is important because it corresponds to the standard optimal transport interpolation between the source and data distributions.

We now state the main result for the linear bridge as a corollary, and then provide the detailed derivation of the explicit constants.

\begin{corollary}[Explicit constants for the linear bridge $\gamma(t)=t$]
\label{cor:linear-bridge-constants}
Consider the closed-form flow matching model with the linear bridge
$\gamma(t)=t$. Fix $t\in(0,1)$ and suppose there exists $i^*$ and
$\varepsilon\in(0,1/2)$ such that $\lambda_{i^*}(z,t)\ge 1-\varepsilon$.
Then the instantaneous energy and the negative log-mixture-density satisfy
\[
    \frac{1}{2}\,\big(-\log \hat p_t(z)\big) - C_t'
    \;\le\;
    \|\hat u^*(z,t)\|^2
    \;\le\;
    12\,\big(-\log \hat p_t(z)\big) + C_t',
\]
for some $C_t'\in\mathbb R$ depending only on $t$, $\varepsilon$,
the dimension $d$, the sample size $N$, and the data $\{x^{(i)}\}$ (through
$\mu_{i^*}(t)$ and the spread of the means).

In particular, the dominant quadratic term of $-\log\hat p_t(z)$ is
\(
    \frac{1}{2(1-t)^2}\,\|z-\mu_{i^*}(t)\|^2,
\)
whereas the dominant quadratic term of the instantaneous energy is
\(
    \frac{1}{(1-t)^2}\,\|z\|^2.
\)

For fixed $t$, the two leading quadratic forms differ asymptotically only by a factor of~2 as $\|z\|\to\infty$, with all remaining lower-order terms absorbed into $C_t'$. 
%Thus the two leading quadratic forms differ asymptotically only by a factor of~2, and all remaining linear or bounded terms are absorbed into $C_t'$.
Consequently,
\[
    \|\hat u^*(z,t)\|^2 \asymp -\log \hat p_t(z)
\]
up to explicit multiplicative constants and a $t$-dependent additive constant.
%Consequently,
%\[
%    \|\hat u^*(z,t)\|^2 \asymp -\log \hat p_t(z)
%    \quad
%    \text{with explicit $\Theta(1)$ constants depending on $t$.}
%\]
\end{corollary}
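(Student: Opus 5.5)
The plan is to specialize Theorem~\ref{thm:energy-density} to $\gamma(t)=t$ and then track the constants. With $\gamma(t)=t$ we have $\dot\gamma=1$ and $\sigma_t^2=(1-t)^2$. This gives $m(t)=-1/(1-t)$, hence $m(t)^2\sigma_t^2=1$. The multiplicative constants of Theorem~\ref{thm:energy-density} therefore reduce to $c_1'(t)=\tfrac12$ and $c_2'(t)=12$, which is exactly the two-sided bound claimed in the corollary.

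For the additive constant, we take $C_t'=\max\{C_-(t)+\tfrac12K_t,\;C_+(t)+12K_t\}$ from the proof of Theorem~\ref{thm:energy-density}. We substitute $\alpha(t)=(1-t)/t$ and $\alpha(t)/\sigma_t^2=1/(t(1-t))$, together with $\mu_{i^*}(t)=tx^{(i^*)}$. These give
\[
  b_t=\frac{x^{(i^*)}}{1-t},\qquad \Delta_t=t\max_j\|x^{(j)}-x^{(i^*)}\|,\qquad E_t=\frac{\varepsilon\,\Delta_t}{t(1-t)}.
\]
We also have
\[
  K_t=\Bigl|\tfrac d2\log(2\pi)+d\log(1-t)+\log N\Bigr|-\log(1-\varepsilon).
\]
Each of these quantities is finite for fixed $t\in(0,1)$ and depends only on $t,\varepsilon,d,N$ and the data, through $x^{(i^*)}$ and the spread of the data. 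This confirms the stated dependence of $C_t'$. The dependence on $i^*$ can be removed by maximizing over indices, since $N$ is finite.

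Next come the leading quadratic terms. Lemma~\ref{lem:local-gaussian} gives $-\log\hat p_t(z)=\frac{1}{2(1-t)^2}\|z-\mu_{i^*}(t)\|^2$ plus a bounded remainder. In the proof of Lemma~\ref{lem:energy-bounds} we have $\hat u^*=m(t)z+b_t+e_t(z)$ with $\|e_t\|\le E_t$, so $\|\hat u^*\|^2=\frac{1}{(1-t)^2}\|z\|^2+O(\|z\|)+O(1)$. Since $\|z-\mu_{i^*}(t)\|^2=\|z\|^2+O(\|z\|)$, the ratio of the two leading forms tends to $2$ as $\|z\|\to\infty$ for fixed $t$. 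The cross terms are linear in $z$ and are absorbed into $C_t'$ by Young's inequality, which gives the asymptotic statement $\|\hat u^*\|^2\asymp-\log\hat p_t(z)$.

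The main obstacle is the interpretive step just described, not the constant bookkeeping. Posterior dominance with a fixed $i^*$ must stay compatible with the limit $\|z\|\to\infty$, so the factor-of-$2$ claim should be read as a statement about the leading quadratic forms rather than a uniform limit over all $z$. The rigorous content of the corollary remains the $\tfrac12$ and $12$ bounds inherited directly from Theorem~\ref{thm:energy-density}.
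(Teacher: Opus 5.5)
Your proposal is correct and follows essentially the paper's route: you specialize to $m(t)^2\sigma_t^2=1$ to get $c_1'=\tfrac12$ and $c_2'=12$ from Lemma~\ref{lem:energy-bounds} and Theorem~\ref{thm:energy-density}, and you read off the leading quadratic forms from Lemma~\ref{lem:local-gaussian} and the affine form $\hat u^*=-\frac{1}{1-t}z+\frac{x^{(i^*)}}{1-t}+\tilde r_t(z)$. Your explicit values of $b_t$, $\Delta_t$, $E_t$ and $K_t$ go further than the paper, which leaves $C_t'$ implicit, but they agree with its definitions.
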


\paragraph{Derivation of explicit constants for $\gamma(t)=t$.}

We now derive the explicit expressions claimed in Corollary~\ref{cor:linear-bridge-constants} by specializing the general results to the linear bridge $\gamma(t)=t$.
Throughout we write
\[
    \sigma_t^2 = (1-t)^2,
    \qquad
    \mu_i(t) = t\,x^{(i)},
    \qquad
    p_t(z\mid x^{(i)}) = \mathcal N(z;\mu_i(t),\sigma_t^2 I_d).
\]

Lemma~\ref{lem:velocity-score-form} states that the optimal velocity is:
\begin{equation}
    \hat u^*(z,t)
    = \alpha(t)\,\nabla_z \log \hat p_t(z)
      + \beta(t)\,z,
\end{equation}
where
\begin{equation}
    \alpha(t)
    = \frac{\dot\gamma(t)\sigma_t^2}{\gamma(t)(1-\gamma(t))},
    \qquad
    \beta(t)
    = \frac{\dot\gamma(t)}{\gamma(t)}.
\end{equation}
For the linear bridge $\gamma(t)=t$ and $\dot\gamma(t)=1$, we obtain
\begin{equation}
    \alpha(t)=\frac{1-t}{t},
    \qquad
    \beta(t)=\frac{1}{t},
\end{equation}
and hence
\begin{equation}
    \hat u^*(z,t)
    = \frac{1-t}{t}\,\nabla_z\log\hat p_t(z)
      + \frac{1}{t}\,z.
    \label{eq:B1}
\end{equation}

In this case, $\sigma_t^2 = (1-t)^2$ and
\[
    m(t) = \beta(t) - \frac{\alpha(t)}{\sigma_t^2}
         = \frac{1}{t} - \frac{1-t}{t}\cdot\frac{1}{(1-t)^2}
         = -\frac{1}{1-t}.
\]

Assume there exists an index $i^*$ and $\varepsilon\in(0,1/2)$ such that
$\lambda_{i^*}(z,t)\ge 1-\varepsilon$.
Then Lemma~\ref{lem:local-gaussian} gives
\begin{equation}
    -\log \hat p_t(z)
    = \frac{1}{2(1-t)^2}\,\big\|z-\mu_{i^*}(t)\big\|^2
      + C_t^{(0)} + R_t(z),
    \label{eq:B2}
\end{equation}
where $C_t^{(0)}$ depends only on $t$, $d$ and $N$, and
$\log(1-\varepsilon)\le R_t(z)\le 0$.

Similarly, Lemma~\ref{lem:score-decomposition} yields
\begin{equation}
    \nabla_z \log \hat p_t(z)
    = -\frac{1}{(1-t)^2}\big(z-\mu_{i^*}(t)\big) + r_t(z),
    \label{eq:B3}
\end{equation}
with
\[
    \|r_t(z)\|
    \le \frac{\varepsilon}{(1-t)^2}\,
         \Delta_t,
    \qquad
    \Delta_t := \max_j \big\|t x^{(j)} - t x^{(i^*)}\big\|.
\]

Substituting \eqref{eq:B3} into \eqref{eq:B1} and simplifying gives the affine
representation
\begin{equation}
    \hat u^*(z,t)
    = -\frac{1}{1-t}\,z
      + \frac{1}{t(1-t)}\,\mu_{i^*}(t)
      + \tilde r_t(z),
    \label{eq:B4}
\end{equation}
where $\tilde r_t(z)$ is uniformly bounded in $z$ for fixed $t$,
$\varepsilon$ and data $\{x^{(i)}\}$ (as in Lemma~\ref{lem:energy-bounds}).

Ignoring the bounded remainder $\tilde r_t(z)$, the leading quadratic term of the
instantaneous energy is
\begin{equation}
    B_t(z)
    := \frac{1}{(1-t)^2}\,\|z\|^2.
    \label{eq:B5}
\end{equation}
On the other hand, from \eqref{eq:B2} the leading quadratic term of
$-\log \hat p_t(z)$ is
\begin{equation}
    A_t(z)
    := \frac{1}{2(1-t)^2}\,\big\|z-\mu_{i^*}(t)\big\|^2.
    \label{eq:B6}
\end{equation}
Thus, at the level of leading quadratic forms,
\[
    B_t(z) = 2A_t(z) + \text{(lower-order terms)}.
\]

Specializing Lemma~\ref{lem:energy-bounds} to $\gamma(t)=t$,
we have $m(t) = -1/(1-t)$ and $\sigma_t^2=(1-t)^2$, and hence
\[
    c_1(t) = \frac{1}{2}\,m(t)^2\,\sigma_t^2 = \frac{1}{2},
    \qquad
    c_2(t) = 12\,m(t)^2\,\sigma_t^2 = 12.
\]
Combining this with Theorem~\ref{thm:energy-density}, we obtain the bounds stated in Corollary~\ref{cor:linear-bridge-constants}.

%% file: sec/appendix3.tex
\section{Derivation of the  Closed-Form Velocity Field in EFM}
\label{app:efm_derivation}
%SH: why are we rederiving known results from \cite{bertrand2025closed}? 

In this section, we provide a detailed derivation of the closed-form empirical velocity field
$\hat u^\star(x,t)$ used in Section~\ref{sec:memorization}. The derivation is analogous to the one in \cite{bertrand2025closed} but we provide the details here for completeness. 

%We follow the conditional flow matching setup with a Gaussian bridge, and we explicitly show how the posterior weights become a softmax over training samples.

\subsection{Setup and notation}
\label{app:setup}
As before, we follow the conditional flow matching setup with a Gaussian bridge. 

Let $p_0=\mathcal N(0,I_d)$ and the empirical data distribution
$\hat p_{\mathrm{data}}(x)=\frac{1}{N}\sum_{i=1}^N \delta_{x^{(i)}}(x).$
We consider the  conditional bridge (for $t\in[0,1]$) defined by
\begin{equation}
p(x\mid z=x^{(i)},t)=\mathcal N\!\big(x; tx^{(i)},(1-t)^2 I_d\big).
\label{eq:app_bridge}
\end{equation}
For this bridge, the associated conditional velocity field is:
\begin{equation}
u_{\mathrm{cond}}(x,t;z=x^{(i)})=\frac{x^{(i)}-x}{1-t}.
\label{eq:app_ucond}
\end{equation}
The (empirical) optimal velocity field is the conditional expectation
\begin{equation}
\hat u^\star(x,t)
=\mathbb E\!\left[u_{\mathrm{cond}}(x,t;z)\mid x,t\right]
=\sum_{i=1}^N u_{\mathrm{cond}}(x,t;z=x^{(i)})\;\hat p(z=x^{(i)}\mid x,t).
\label{eq:app_u_star_def}
\end{equation}
Thus, it remains to compute the posterior responsibilities
$\hat p(z=x^{(i)}\mid x,t)$.

\subsection{Case I: $z\sim \hat p_{\mathrm{data}}$ (discrete Bayes posterior)}
\label{app:case1}

Assume $z$ is distributed as $\hat p_{\mathrm{data}}$, i.e., $z$ takes values in
$\{x^{(1)},\dots,x^{(N)}\}$ with $\hat p(z=x^{(i)})=1/N$.

\paragraph{Step 1: Bayes rule on the discrete support.}
For each $i\in\{1,\dots,N\}$,
\begin{equation}
\hat p(z=x^{(i)}\mid x,t)
=\frac{\hat p(x,t,z=x^{(i)})}{\hat p(x,t)}.
\label{eq:app_bayes1}
\end{equation}
Factor the joint as
\begin{equation}
\hat p(x,t,z=x^{(i)})
=\hat p(x\mid t,z=x^{(i)})\;\hat p(t,z=x^{(i)}).
\label{eq:app_joint_factor}
\end{equation}
Summing over the discrete support gives
\begin{equation}
\hat p(x,t)=\sum_{j=1}^N \hat p(x,t,z=x^{(j)})
=\sum_{j=1}^N \hat p(x\mid t,z=x^{(j)})\;\hat p(t,z=x^{(j)}).
\label{eq:app_marginal}
\end{equation}
Since $t$ is independent of $z$ and $\hat p(z=x^{(i)})=1/N$, we have
\begin{equation}
\hat p(t,z=x^{(i)})=\hat p(t)\hat p(z=x^{(i)})=\hat p(t)\cdot \frac{1}{N}.
\label{eq:app_ptz}
\end{equation}
Plugging \eqref{eq:app_joint_factor}--\eqref{eq:app_ptz} into \eqref{eq:app_bayes1}, the factors
$\hat p(t)$ and $1/N$ cancel, yielding
\begin{equation}
\hat p(z=x^{(i)}\mid x,t)
=
\frac{\hat p(x\mid t,z=x^{(i)})}{\sum_{j=1}^N \hat p(x\mid t,z=x^{(j)})}.
\label{eq:app_posterior_ratio}
\end{equation}

\paragraph{Step 2: Plug in the Gaussian likelihood and simplify.}
From \eqref{eq:app_bridge},
\begin{equation}
\hat p(x\mid t,z=x^{(i)})
=
\frac{1}{(2\pi)^{d/2}(1-t)^d}
\exp\!\left(
-\frac{\|x-tx^{(i)}\|_2^2}{2(1-t)^2}
\right).
\label{eq:app_gaussian_density}
\end{equation}
In \eqref{eq:app_posterior_ratio}, the prefactor $(2\pi)^{-d/2}(1-t)^{-d}$ cancels across $i$, so
\begin{equation}
\hat p(z=x^{(i)}\mid x,t)
=
\frac{
\exp\!\left(-\frac{\|x-tx^{(i)}\|_2^2}{2(1-t)^2}\right)
}{
\sum_{j=1}^N
\exp\!\left(-\frac{\|x-tx^{(j)}\|_2^2}{2(1-t)^2}\right)
}.
\label{eq:app_softmax_weights}
\end{equation}
Define $\lambda_i(x,t):=\hat p(z=x^{(i)}\mid x,t)$; equivalently,
\begin{equation}
\lambda(x,t)=\mathrm{softmax}\!\left(
-\frac{\|x-tx^{(1)}\|_2^2}{2(1-t)^2},\dots,
-\frac{\|x-tx^{(N)}\|_2^2}{2(1-t)^2}
\right)\in\mathbb R^N.
\label{eq:app_softmax_vector}
\end{equation}

\paragraph{Step 3: Closed-form velocity.}
Substituting \eqref{eq:app_ucond} and \eqref{eq:app_softmax_weights} into \eqref{eq:app_u_star_def},
we obtain the closed-form empirical velocity field
\begin{equation}
\boxed{
\hat u^\star(x,t)
=
\sum_{i=1}^N \lambda_i(x,t)\,\frac{x^{(i)}-x}{1-t}
}.
\label{eq:app_closed_form_final}
\end{equation}

\subsection{Case II: $z\sim p_0\times \hat p_{\mathrm{data}}$ (Dirac constraint derivation)}
\label{app:case2}

We now show that the same softmax weights arise when the conditioning variable includes the source draw.
Let $z=(x_0,x_1)$ with $x_0\sim p_0$ and $x_1\sim \hat p_{\mathrm{data}}$ independent, and define the deterministic
interpolation
\begin{equation}
x = (1-t)x_0 + t x_1.
\label{eq:app_xt_deterministic}
\end{equation}

\paragraph{Step 1: Deterministic conditional law.}
Conditioned on $(x_0,x_1,t)$, $x$ is deterministic. Hence its conditional law is the Dirac measure concentrated at $(1-t)x_0+t x_1$:
\begin{equation}
p(x\mid t,x_0,x_1)=\delta\!\big(x-(1-t)x_0-tx_1\big).
\label{eq:app_dirac}
\end{equation}

\paragraph{Step 2: Marginalize $x_0$ to obtain $p(x\mid t,x_1)$.}
Fix $x_1=x^{(i)}$. Then
\begin{align}
p(x\mid t,x_1=x^{(i)})
&=\int_{\mathbb R^d} \delta\!\big(x-(1-t)x_0-tx^{(i)}\big)\;p_0(x_0)\;dx_0.
\label{eq:app_marginalize_x0}
\end{align}
Using the scaling identity in $\mathbb R^d$,
$\delta(Ay)=|\det A|^{-1}\delta(y)$, with $A=(1-t)I_d$,
we rewrite: 
\begin{equation}
\delta\!\big(x-(1-t)x_0-tx^{(i)}\big)
=\frac{1}{(1-t)^d}\,
\delta\!\left(x_0-\frac{x-tx^{(i)}}{1-t}\right).
\label{eq:app_dirac_scaling}
\end{equation}
Plugging \eqref{eq:app_dirac_scaling} into \eqref{eq:app_marginalize_x0} yields
\begin{equation}
p(x\mid t,x_1=x^{(i)})
=
\frac{1}{(1-t)^d}\,
p_0\!\left(\frac{x-tx^{(i)}}{1-t}\right).
\label{eq:app_px_given_x1}
\end{equation}
For $p_0=\mathcal N(0,I_d)$,
\begin{equation}
p_0(y)=\frac{1}{(2\pi)^{d/2}}\exp\!\left(-\frac{\|y\|_2^2}{2}\right),
\end{equation}
so \eqref{eq:app_px_given_x1} becomes
\begin{equation}
p(x\mid t,x_1=x^{(i)})
=
\frac{1}{(2\pi)^{d/2}(1-t)^d}
\exp\!\left(
-\frac{\|x-tx^{(i)}\|_2^2}{2(1-t)^2}
\right),
\label{eq:app_px_given_x1_gauss}
\end{equation}
which matches the Gaussian likelihood in \eqref{eq:app_gaussian_density}.

\paragraph{Step 3: Posterior over the discrete index $i$.}
Since $x_1\sim \hat p_{\mathrm{data}}$ is uniform over the $N$ atoms,
Bayes rule gives
\begin{equation}
\hat p(x_1=x^{(i)}\mid x,t)
=
\frac{p(x\mid t,x_1=x^{(i)})}{\sum_{j=1}^N p(x\mid t,x_1=x^{(j)})}.
\label{eq:app_posterior_x1}
\end{equation}
Plugging \eqref{eq:app_px_given_x1_gauss} into \eqref{eq:app_posterior_x1} cancels the same prefactors and
recovers the softmax form \eqref{eq:app_softmax_weights}.

\paragraph{Step 4: Closed-form velocity.}
For the deterministic interpolation \eqref{eq:app_xt_deterministic},
\begin{equation}
u_{\mathrm{cond}}(x,t;x_0,x_1)=x_1-x_0,
\qquad
x_0=\frac{x-tx_1}{1-t}
\ \Longrightarrow\
u_{\mathrm{cond}}(x,t;x_1)=\frac{x_1-x}{1-t}.
\label{eq:app_ucond_dirac}
\end{equation}
Taking the conditional expectation over $x_1\mid x,t$ yields the same closed-form formula
\eqref{eq:app_closed_form_final}.

\subsection{Summary}
\label{app:summary}

In both cases ($z\sim \hat p_{\mathrm{data}}$ or $z\sim p_0\times \hat p_{\mathrm{data}}$ with independent coupling), the empirical
optimal velocity field admits the closed-form formula:
\[
\hat u^\star(x,t)
=
\sum_{i=1}^N \lambda_i(x,t)\,\frac{x^{(i)}-x}{1-t},
\qquad
\lambda_i(x,t)
\propto
\exp\!\left(-\frac{\|x-tx^{(i)}\|_2^2}{2(1-t)^2}\right),
\]
where the proportionality constant is the normalization across $i\in\{1,\dots,N\}$.

% \section{Proofs for the Energy Paradox claims}

% The proofs of Proposition~\ref{prop:efm_maximum_energy} and Lemma~\ref{lem:terminal_blowup}
% are provided in Appendix~\ref{app:energy_paradox_proof} (see \texttt{sec/appendix-energy-paradox-proof.tex}).

%% file: sec/appendix-energy-paradox-proof.tex
\section{Proofs for Terminal-Time Energy Blow-Up}
\label{app:energy_paradox_proof}

In this section, we provide detailed versions of Proposition~\ref{prop:efm_maximum_energy} and Lemma~\ref{lem:terminal_blowup}, together with their proofs.

\paragraph{Setting.}
Let $\{x^{(i)}\}_{i=1}^N \subset \mathbb R^d$ be a given  training set, and define the
closed-form empirical velocity field
\begin{equation}
\hat{u}^\star(x,t)
=
\sum_{i=1}^N \lambda_i(x,t)\,\frac{x^{(i)}-x}{1-t},
\qquad
\sum_{i=1}^N \lambda_i(x,t)=1,\qquad \lambda_i(x,t)\ge 0,
\label{eq:app_u_star_def_ep}
\end{equation}
with softmax weights
\begin{equation}
\lambda_i(x,t)
=
\frac{
\exp\!\Big(-\frac{\|x-tx^{(i)}\|^2}{2(1-t)^2}\Big)
}{
\sum_{j=1}^N
\exp\!\Big(-\frac{\|x-tx^{(j)}\|^2}{2(1-t)^2}\Big)
}.
\label{eq:app_lambda_ep}
\end{equation}
For a trajectory $x(\cdot) := (x(t))_{t \in [0,1)}$, we write $\lambda_i(t):=\lambda_i(x(t),t)$ for brevity.

\subsection{A softmax concentration lemma}

\begin{lemma}[Terminal-time posterior concentration under a margin condition]
\label{lem:softmax_concentration_margin}
Fix a trajectory $x(\cdot)$ and define the bridge scores
$s_i(t):=\|x(t)-t x^{(i)}\|^2$.
Assume there exist constants $t_0\in[0,1)$, $m>0$, and an index $i^\star\in\{1,\dots,N\}$ such that
for all $t\in[t_0,1)$ and all $j\neq i^\star$,
\begin{equation}
s_j(t)-s_{i^\star}(t)\;\ge\; m.
\label{eq:app_margin_condition_ep}
\end{equation}
Then for all $t\in[t_0,1)$,
\begin{equation}
1-\lambda_{i^\star}(t)
\;\le\;
(N-1)\exp\!\Big(-\frac{m}{2(1-t)^2}\Big).
\label{eq:app_concentration_bound_ep}
\end{equation}
\end{lemma}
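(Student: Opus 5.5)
The bound follows directly from the softmax form \eqref{eq:app_lambda_ep}: we will write $1-\lambda_{i^\star}(t)$ as a sum of ratios of exponentials and control each ratio using the margin condition. Fix $t\in[t_0,1)$ and set $\kappa_t := 1/(2(1-t)^2)>0$. By \eqref{eq:app_lambda_ep},
\[
\lambda_j(t) = \frac{e^{-\kappa_t s_j(t)}}{\sum_{k=1}^N e^{-\kappa_t s_k(t)}},
\qquad
1-\lambda_{i^\star}(t) = \sum_{j\neq i^\star}\lambda_j(t).
\]

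First, we bound each non-dominant weight by its ratio to the dominant term. The denominator is a sum of positive terms, so it is at least $e^{-\kappa_t s_{i^\star}(t)}$. Hence for every $j\neq i^\star$,
\[
\lambda_j(t)\le \frac{e^{-\kappa_t s_j(t)}}{e^{-\kappa_t s_{i^\star}(t)}} = \exp\!\big(-\kappa_t\,(s_j(t)-s_{i^\star}(t))\big).
\]

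Second, we apply the margin condition \eqref{eq:app_margin_condition_ep}, which gives $s_j(t)-s_{i^\star}(t)\ge m$. Since $\kappa_t>0$ and $u\mapsto e^{-\kappa_t u}$ is decreasing, this yields $\lambda_j(t)\le \exp(-m/(2(1-t)^2))$ for each $j\neq i^\star$.

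Finally, we sum over the $N-1$ indices $j\neq i^\star$ to obtain \eqref{eq:app_concentration_bound_ep}.

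None of these steps is a real obstacle. The only points needing care are the following:
\begin{itemize}
\item Keep the sign conventions consistent: the weights use $-s_i/(2(1-t)^2)$, so a larger score means a smaller weight.
\item The bound is only asserted for $t<1$, so $\kappa_t$ is finite and positive.
\end{itemize}
A slightly sharper constant, namely dividing by $1+\sum_{j\ne i^\star}e^{-\kappa_t(s_j-s_{i^\star})}$, is available but is not needed.
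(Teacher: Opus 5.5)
Your proof is correct and is essentially the paper's argument. The paper divides through by the dominant exponential, writes $1-\lambda_{i^\star}(t)=S/(1+S)$ with $S=\sum_{j\neq i^\star}\exp\!\big(-(s_j(t)-s_{i^\star}(t))/(2(1-t)^2)\big)$, and bounds this by $S$; that is exactly your per-term bound $\lambda_j(t)\le \exp(-\kappa_t(s_j-s_{i^\star}))$ summed over the $N-1$ indices $j\neq i^\star$, followed by the margin condition.
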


\begin{proof}
By \eqref{eq:app_lambda_ep},
\[
\lambda_{i^\star}(t)
=
\frac{1}{1+\sum_{j\neq i^\star}\exp\!\Big(-\frac{s_j(t)-s_{i^\star}(t)}{2(1-t)^2}\Big)}.
\]
Thus, using  \eqref{eq:app_margin_condition_ep}, we have:
\[
1-\lambda_{i^\star}(t)
=
\frac{\sum_{j\neq i^\star}\exp\!\Big(-\frac{s_j(t)-s_{i^\star}(t)}{2(1-t)^2}\Big)}
{1+\sum_{j\neq i^\star}\exp\!\Big(-\frac{s_j(t)-s_{i^\star}(t)}{2(1-t)^2}\Big)}
\le
\sum_{j\neq i^\star}\exp\!\Big(-\frac{m}{2(1-t)^2}\Big),
\]
which yields \eqref{eq:app_concentration_bound_ep}.
\end{proof}

\subsection{A Detailed Lemma on Terminal Energy Blow-Up}
\label{app:terminal_blowup_proof}

%original "informal" lemma in the main paper:
%\begin{lemma}[Terminal energy blow-up]
%  \label{lem:terminal_blowup}
%  Consider any trajectory segment $t\in[1-\varepsilon,1)$ on which there is a constant $c>0$ such that
%  $\|x(t)-x^{(i)}\|_2\ge c$ for all training samples $\{x^{(i)}\}_{i=1}^n$ and the terminal posterior concentrates on a unique atom along the segment. Then
%  \begin{equation}
%  \int_{1-\varepsilon}^{1}\|\hat{u}^\star(x(t),t)\|_2^2\,dt
%  \;=\;
%  +\infty.
%  \end{equation}
%  \end{lemma}
% \vspace{-2em}

The following is a detailed version of Lemma~\ref{lem:terminal_blowup} from the main paper.

\begin{lemma} \label{lem_formalized}
Consider a trajectory segment $t\in[1-\varepsilon,1)$ such that:
\begin{enumerate}[leftmargin=1.5em,itemsep=1pt]
  \item[(i)] (\emph{non-collision})  there is a constant $c > 0$ such that $\min_i \|x(t)-x^{(i)}\|_2 \ge c$ for all $t\in[1-\varepsilon,1)$;
  \item[(ii)] (\emph{bounded geometry}) $\max_i \|x^{(i)}\|_2 \le R$  for some constant $R > 0$;
  \item[(iii)] (\emph{terminal posterior concentration}) there exist $t_0\in[1-\varepsilon,1)$ and $i^\star$ such that the margin
  condition \eqref{eq:app_margin_condition_ep} holds on $[t_0,1)$ for some $m>0$.
\end{enumerate}
Then there exists $\bar t\in[t_0,1)$ such that for all $t\in[\bar t,1)$,
\begin{equation}
\|\hat u^\star(x(t),t)\|_2 \;\ge\; \frac{c}{2(1-t)}.
\label{eq:app_u_lower_ep}
\end{equation}
Consequently, the terminal contribution to KPE diverges as an improper integral:
\begin{equation}
\int_{\bar t}^{1}\|\hat u^\star(x(t),t)\|_2^2\,dt \;=\; +\infty.
\label{eq:app_energy_diverges_ep}
\end{equation}
\end{lemma}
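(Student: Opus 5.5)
Our plan is to split $\hat u^\star$ into a dominant term and a remainder, bound the remainder by Lemma~\ref{lem:softmax_concentration_margin}, and integrate the resulting $(1-t)^{-2}$ lower bound. Because $\sum_i\lambda_i=1$, we can write
\[
(1-t)\,\hat u^\star(x(t),t) \;=\; \bigl(x^{(i^\star)}-x(t)\bigr) \;+\; \sum_{j\neq i^\star}\lambda_j(t)\bigl(x^{(j)}-x^{(i^\star)}\bigr).
\]
By the non-collision assumption (i), the first term has norm at least $c$ on all of $[1-\varepsilon,1)$. By bounded geometry (ii), $\|x^{(j)}-x^{(i^\star)}\|\le 2R$. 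The remainder therefore has norm at most $2R\,(1-\lambda_{i^\star}(t))$, and the reverse triangle inequality gives
\[
(1-t)\,\|\hat u^\star(x(t),t)\| \;\ge\; c-2R\,\bigl(1-\lambda_{i^\star}(t)\bigr).
\]

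Next we invoke Lemma~\ref{lem:softmax_concentration_margin}. Assumption (iii) supplies the margin condition on $[t_0,1)$, so $1-\lambda_{i^\star}(t)\le (N-1)\exp(-m/(2(1-t)^2))$ there. The right-hand side decreases monotonically to $0$ as $t\to1$. We then choose $\bar t\in[t_0,1)$ such that $2R(N-1)\exp(-m/(2(1-\bar t)^2))\le c/2$. Explicitly, if $4R(N-1)>c$ we take $1-\bar t\le \sqrt{m/(2\log(4R(N-1)/c))}$; otherwise any $\bar t\ge t_0$ works. With this choice, for every $t\in[\bar t,1)$ the remainder is at most $c/2$. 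Hence $\|\hat u^\star\|\ge c/(2(1-t))$, which is \eqref{eq:app_u_lower_ep}.

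For the divergence claim \eqref{eq:app_energy_diverges_ep}, we square the pointwise bound and integrate:
\[
\int_{\bar t}^{T}\|\hat u^\star\|^2\,dt \;\ge\; \frac{c^2}{4}\Bigl(\frac{1}{1-T}-\frac{1}{1-\bar t}\Bigr)\;\to\;\infty \quad\text{as } T\to1.
\]
This uses only measurability of $t\mapsto \hat u^\star(x(t),t)$, which holds because the field is continuous on $\mathbb R^d\times[0,1)$ and the trajectory is continuous.

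We do not expect a genuine obstacle. The only delicate point is that the concentration bound must be uniform in time with an explicit rate, so that a single threshold $\bar t$ works for the whole terminal interval. The margin condition (iii) provides exactly this, and the boundedness of the data (ii) converts the residual weight into a bounded displacement.
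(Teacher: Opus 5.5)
Your proposal is correct and follows the paper's proof essentially step for step. It uses the same rewriting of $\sum_i\lambda_i(x^{(i)}-x(t))$ via $\sum_i\lambda_i=1$, the same reverse-triangle bound $c-2R(1-\lambda_{i^\star}(t))$, the same appeal to Lemma~\ref{lem:softmax_concentration_margin} to fix $\bar t$, and the same $\frac{c^2}{4}\int(1-t)^{-2}\,dt$ divergence; your explicit threshold, $\bar t=\max\{t_0,\,1-\sqrt{m/(2\log(4R(N-1)/c))}\}$ when $4R(N-1)>c$, and your measurability remark are minor details the paper leaves implicit.
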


\begin{proof}[Proof of Lemma \ref{lem_formalized}]
Since $\sum_{i=1}^N \lambda_i(t) = 1$ for $t \in [0,1)$, we have: 
\begin{align*}
\sum_{i=1}^N \lambda_i(t)\,(x^{(i)}-x(t))
&=
\lambda_{i^\star}(t)(x^{(i^\star)}-x(t))
+\sum_{j\neq i^\star}\lambda_j(t)(x^{(j)}-x(t)) \\
&=
\lambda_{i^\star}(t)(x^{(i^\star)}-x(t))
+\sum_{j\neq i^\star}\lambda_j(t)\big[(x^{(j)}-x^{(i^\star)})+(x^{(i^\star)}-x(t))\big] \\
&=
\Big(\lambda_{i^\star}(t)+\sum_{j\neq i^\star}\lambda_j(t)\Big)(x^{(i^\star)}-x(t))
+\sum_{j\neq i^\star}\lambda_j(t)(x^{(j)}-x^{(i^\star)}) \\
&=
(x^{(i^\star)}-x(t))
+\sum_{j\neq i^\star}\lambda_j(t)(x^{(j)}-x^{(i^\star)}).
\end{align*}

Taking norms and applying the reverse triangle inequality, followed by the triangle inequality, we obtain:
\begin{align*}
\Bigg\|\sum_{i=1}^N \lambda_i(t)\,(x^{(i)}-x(t))\Bigg\|
&\ge
\|x^{(i^\star)}-x(t)\|
-\sum_{j\neq i^\star}\lambda_j(t)\,\|x^{(j)}-x^{(i^\star)}\| \\
&\ge
c
-\sum_{j\neq i^\star}\lambda_j(t)\,\big(\|x^{(j)}\|+\|x^{(i^\star)}\|\big) \\
&\ge
c - 2R\,\sum_{j\neq i^\star}\lambda_j(t)
=
c - 2R\,(1-\lambda_{i^\star}(t)).
\end{align*}
By Lemma~\ref{lem:softmax_concentration_margin}, $1-\lambda_{i^\star}(t)\to 0$ as $t\to 1$ and in fact
satisfies \eqref{eq:app_concentration_bound_ep}. Hence we can choose $\bar t\in[t_0,1)$ such that for all
$t\in[\bar t,1)$ we have $2R(1-\lambda_{i^\star}(t))\le c/2$. Then
\[
\Bigg\|\sum_{i=1}^N \lambda_i(t)\,(x^{(i)}-x(t))\Bigg\|
\ge c/2.
\]
Plugging this into \eqref{eq:app_u_star_def_ep} gives \eqref{eq:app_u_lower_ep}.

Finally,
\[
\int_{\bar t}^{1}\|\hat u^\star(x(t),t)\|_2^2\,dt
\;\ge\;
\frac{c^2}{4}\int_{\bar t}^{1}(1-t)^{-2}\,dt.
\]
Interpreting the integral on the right-hand side as an improper integral,
\[
\int_{\bar t}^{1}(1-t)^{-2}\,dt
:=
\lim_{\delta\to 0^+}\int_{\bar t}^{1-\delta}(1-t)^{-2}\,dt
=
\lim_{\delta\to 0^+}\Big(\frac{1}{\delta}-\frac{1}{1-\bar t}\Big)
=
\infty,
\]
which proves \eqref{eq:app_energy_diverges_ep}.
\end{proof}

\subsection{A Detailed Version of Proposition~\ref{prop:efm_maximum_energy}}
\label{app:efm_maximum_energy_proof}

%\textcolor{red}{SH: should we provide a formalized version of Proposition~\ref{prop:efm_maximum_energy} and state which assumptions (from earlier lemmas) we are using here }

The following is a detailed version of 
Proposition~\ref{prop:efm_maximum_energy} from the main paper.

\begin{proposition}[Terminal-time kinetic energy blow-up and a universal lower bound]
\label{prop_formalized}
Let $\{x^{(i)}\}_{i=1}^N \subset\mathbb{R}^d$ and define $\hat u^\star$ and $\lambda_i$ as in
\eqref{eq:app_u_star_def_ep}--\eqref{eq:app_lambda_ep}.

\begin{enumerate}[label=(\alph*),leftmargin=1.5em,itemsep=1pt]
\item \textbf{Blow-up for the  EFM field under non-collision and posterior concentration.}
Let $x:[0,1)\to\mathbb{R}^d$ be a trajectory segment on $[1-\varepsilon,1)$ satisfying assumptions
(i)--(iii) of Lemma~\ref{lem_formalized}. Then there exists $\bar t\in[1-\varepsilon,1)$ such that
\[
\int_{\bar t}^{1}\|\hat u^\star(x(t),t)\|_2^2\,dt = +\infty
\qquad\text{(improper integral)}.
\]

\item \textbf{Universal lower bound for terminal closure.}
Let $x:[0,1]\to\mathbb{R}^d$ be absolutely continuous and assume
$x(1)=x^{(i)}$ for some $i$. Then for every $t<1$,
\begin{equation}
\int_t^1 \|\dot x(s)\|_2^2\,ds
\;\ge\;
\frac{\|x^{(i)}-x(t)\|_2^2}{1-t}.
\label{eq:prop_universal_lb}
\end{equation}
%Consequently, finite terminal energy requires
%\[
%\frac{\|x^{(i)}-x(t)\|_2^2}{1-t}
%\]
%to remain bounded as $t\to1$. Equivalently, the terminal gap must vanish at least on the order of \sqrt{1-t}$ in this energy sense.

%\item \textbf{Universal lower bound for any absolutely continuous path.} Let $x:[0,1]\to\mathbb{R}^d$ be absolutely continuous and assume $x(1)=x^{(i)}$ for some $i$. Then for every $t<1$, 
%\begin{equation}
%\int_t^1 \|\dot x(s)\|_2^2\,ds \;\ge\; %\frac{\|x^{(i)}-x(t)\|_2^2}{1-t}.
%\label{eq:prop_universal_lb}
%\end{equation}
%In particular, if there exist $\varepsilon>0$ and $c>0$ such that
%$\|x^{(i)}-x(t)\|_2\ge c$ for all $t\in[1-\varepsilon,1)$, then
%$\int_{1-\varepsilon}^1 \|\dot x(s)\|_2^2\,ds = +\infty$ (improper integral).
\end{enumerate}
\end{proposition}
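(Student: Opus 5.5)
The proposition has two parts, and they need different arguments.

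Part (a) is essentially a repackaging of Lemma~\ref{lem_formalized}. The trajectory satisfies (i)--(iii) on $[1-\varepsilon,1)$, so that lemma gives some $\bar t\in[t_0,1)\subset[1-\varepsilon,1)$. On $[\bar t,1)$ it supplies the pointwise lower bound $\|\hat u^\star(x(t),t)\|_2\ge c/(2(1-t))$. Squaring and integrating gives
\[
\int_{\bar t}^{1-\delta}\|\hat u^\star\|^2\,dt\ \ge\ \frac{c^2}{4}\Big(\frac1\delta-\frac1{1-\bar t}\Big).
\]
Letting $\delta\to0^+$ yields divergence of the improper integral. The only things I would double-check are that $\bar t$ indeed lies in $[1-\varepsilon,1)$ and that the integral is read as the limit over $[\bar t,1-\delta]$.

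Part (b) is a Cauchy--Schwarz argument, independent of the EFM field. Since $x$ is absolutely continuous on $[t,1]$, the fundamental theorem of calculus for AC functions gives
\[
x^{(i)}-x(t)=x(1)-x(t)=\int_t^1\dot x(s)\,ds.
\]
Taking norms and using Jensen's inequality (equivalently Cauchy--Schwarz against the constant function $1$ on an interval of length $1-t$) gives
\[
\|x^{(i)}-x(t)\|_2^2\le\Big(\int_t^1\|\dot x(s)\|\,ds\Big)^2\le(1-t)\int_t^1\|\dot x(s)\|^2\,ds.
\]
Dividing by $1-t>0$ gives \eqref{eq:prop_universal_lb}.

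If $\int_t^1\|\dot x\|^2=\infty$, the inequality holds trivially, so no integrability assumption on $\dot x$ is needed. This case is the only potential subtlety. The vector-valued Jensen step also deserves a line of justification: apply the scalar bound to $\|\dot x\|$ after using $\|\int\dot x\|\le\int\|\dot x\|$.

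I expect no real obstacle in either part. The substantive work was already done in Lemma~\ref{lem_formalized}, namely controlling the softmax tail via Lemma~\ref{lem:softmax_concentration_margin}.
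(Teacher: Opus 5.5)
Your proposal is correct and follows the paper's proof essentially verbatim. Part (a) is read off from the pointwise bound $\|\hat u^\star(x(t),t)\|_2\ge c/(2(1-t))$ of Lemma~\ref{lem_formalized} together with the divergence of $\int_{\bar t}^{1}(1-t)^{-2}\,dt$, and part (b) is the same Cauchy--Schwarz estimate applied to $x(1)-x(t)=\int_t^1\dot x(s)\,ds$; your extra remarks (the infinite-energy case and the step $\|\int\dot x\|\le\int\|\dot x\|$) are harmless refinements that the paper leaves implicit.
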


\begin{proof}[Proof of Proposition \ref{prop_formalized}]
    
\textbf{Part (a).}
This is an immediate corollary of Lemma ~\ref{lem_formalized}. More precisely,  under the assumptions of Lemma~\ref{lem_formalized}, we have
$\|\hat u^\star(x(t),t)\|\ge \frac{c}{2(1-t)}$ for all $t$ sufficiently close to $1$ and therefore
the terminal kinetic energy diverges by \eqref{eq:app_energy_diverges_ep}.

\textbf{Part (b).}
Let $x:[0,1]\to\mathbb R^d$ be absolutely continuous with $x(1)=x^{(i)}$ for some $i$.
For any $t<1$, by Cauchy--Schwarz,
\begin{equation}
\|x^{(i)}-x(t)\|_2^2
=
\Bigg\|\int_t^1 \dot x(s)\,ds\Bigg\|_2^2
\le
(1-t)\int_t^1 \|\dot x(s)\|_2^2\,ds.
\label{eq:app_cs_energy_gap}
\end{equation}
Thus,
\begin{equation}
\int_t^1 \|\dot x(s)\|_2^2\,ds
\;\ge\;
\frac{\|x^{(i)}-x(t)\|_2^2}{1-t}.
\label{eq:app_energy_gap_lb} 
\end{equation}
This proves part (b).

%In particular, if there exist $\varepsilon>0$ and $c>0$ such that $\|x^{(i)}-x(t)\|_2\ge c$ for all $t\in[1-\varepsilon,1)$ (i.e., the trajectory does not approach the terminal point until time~$1$), then \eqref{eq:app_energy_gap_lb} implies
%\[
%\int_{1-\varepsilon}^1 \|\dot x(s)\|_2^2\,ds
%\ge
%\int_{1-\varepsilon}^1 \frac{c^2}{1-s}\,ds
%=
%\infty
%\quad\text{(improper integral)}.
%\]
%Therefore, to realize exact terminal matching with \emph{finite} kinetic energy, the terminal gap
%$\|x^{(i)}-x(t)\|$ must vanish sufficiently fast as $t\to 1$. Conversely, maintaining a non-vanishing gap forces a terminal-time blow-up in kinetic cost.

\end{proof}

%% file: sec/appendix-experiments-cam.tex
\newpage
\section{Experimental Setup Details}
\label{app:experimental-setup}

In this section, we provide  details on the experimental configurations for the findings presented in §\ref{sec:two-findings}.

\subsection{Finding 1: Semantic Quality Experiments}

\paragraph{Model and Dataset.}
We use the pretrained SiT-XL/2 flow matching model~\cite{ma2024sit}, a class-conditional transformer-based flow matching model trained on ImageNet-256 at $256 \times 256$ resolution.

\textbf{CLIP score} measures semantic alignment as $100 \times$ the maximum cosine similarity between normalized image features and the true-class text features.
\textbf{CLIP margin} measures semantic discriminability as the gap between the true-class similarity and the best competing-class similarity:
$
\text{Margin} = \text{Sim}_{\text{true}} - \max_{c \in \mathcal{C}_{\text{others}}} \text{Sim}(c),
$
where $\mathcal{C}_{\text{others}}$ denotes competing classes. Higher margins imply stronger class-specific semantics.

\paragraph{Sampling Configuration.}
\begin{itemize}[leftmargin=*,topsep=2pt,itemsep=1pt]
    \item \textbf{ODE solver}: Forward Euler integration with $\mathrm{NFE}=10$
    \item \textbf{CFG scales}: $\omega \in \{1.0, 1.5, 4.0\}$ interpolating between unconditional and class-conditional generation
    \item \textbf{Sample size}: 4,000 samples per CFG (12,000 in total)
    \item \textbf{Random seeds}: Each sample is generated from independent Gaussian noise $z_0 \sim \mathcal{N}(0, I)$ with distinct random seed
    \item \textbf{Class selection}: Uniformly sampled from 1,000 ImageNet classes
\end{itemize}

\paragraph{KPE Computation.}
For each trajectory $(z(t))_{t \in [0,1]}$, we compute kinetic path energy via discrete approximation:
\begin{equation}
E = \frac{1}{2} \sum_{i=0}^{\mathrm{NFE}-1} \|v_\theta(z(t_i), t_i)\|^2 \cdot \Delta t, \quad \Delta t = 1/\mathrm{NFE}.
\end{equation}

\paragraph{Energy Stratification.}
We partition samples into three groups based on KPE percentiles:
\begin{itemize}[leftmargin=*,topsep=2pt,itemsep=1pt]
    \item \textbf{Low energy}: 0--33\% percentile ($n=1,333$ per energy group, 4,000 in total)
    \item \textbf{Mid energy}: 33--67\% percentile ($n=1,333$ per energy group, 4,000 in total)
    \item \textbf{High energy}: 67--100\% percentile ($n=1,333$ per energy group, in 4,000 total)
\end{itemize}

\paragraph{CLIP Evaluation.}
\begin{itemize}[leftmargin=*,topsep=2pt,itemsep=1pt]
    \item \textbf{Model}: CLIP ViT-L/14 with frozen weights
    \item \textbf{CLIP Score}: $\text{Score} = 100 \times \max_{c \in \mathcal{C}} \text{CosineSim}(\text{Embed}_{\text{img}}, \text{Embed}_{\text{text}}(c))$ where text prompt is ``a photo of a [class]''
    \item \textbf{CLIP Margin}: $\text{Margin} = \text{Sim}_{\text{true}} - \max_{c \in \mathcal{C} \setminus \{\text{true}\}} \text{Sim}(c)$ measuring discriminability
\end{itemize}

\paragraph{Statistical Testing.}
Independent two-sample $t$-tests comparing low vs. high energy groups. Bonferroni correction applied for 6 comparisons (2 metrics $\times$ 3 CFG scales): corrected $\alpha = 0.05/6 \approx 0.008$. Cohen's $d$ for effect size.

%\newpage
\section{Detailed Synthetic Dataset Specifications}
\label{app:synthetic-kpe-density}

In this section, we provide detailed specifications for the synthetic 2D datasets used to validate the KPE-density relationship under controlled conditions (§\ref{subsec:finding2}).

\subsection{Dataset Descriptions}
\label{app:synthetic-kpe-density-descriptions}
We design three  synthetic 2D datasets with explicitly controlled density stratification:

\paragraph{1. Dense Core + Sparse Ring (\texttt{dense\_sparse}).}
\begin{itemize}[leftmargin=*,topsep=2pt,itemsep=1pt]
    \item \textbf{Dense core}: 60\% of samples from $\mathcal{N}(\mathbf{0}, \sigma_{\text{core}}^2 I)$ with $\sigma_{\text{core}} = 0.15$
    \item \textbf{Sparse ring}: 40\% of samples uniformly distributed on annulus with radius $r \in [2.3, 2.7]$, perturbed by $\mathcal{N}(0, \sigma_{\text{ring}}^2 I)$ with $\sigma_{\text{ring}} = 0.5$
    \item \textbf{Density ratio}: Core density $\approx 15\times$ higher than ring density
\end{itemize}

\paragraph{2. Multiscale Clusters (\texttt{multiscale\_clusters}).}
\begin{itemize}[leftmargin=*,topsep=2pt,itemsep=1pt]
    \item \textbf{Sparse center}: 20\% from $\mathcal{N}(\mathbf{0}, 0.6^2 I)$
    \item \textbf{Dense peripheral clusters}: 20\% each from $\mathcal{N}(\mathbf{c}_i, 0.08^2 I)$ for $i \in \{1,2,3,4\}$
    \item \textbf{Cluster centers}: $\mathbf{c}_1 = (2, 0)$, $\mathbf{c}_2 = (0, 2)$, $\mathbf{c}_3 = (-2, 0)$, $\mathbf{c}_4 = (0, -2)$
    \item \textbf{Density ratio}: Peripheral clusters $\approx 50\times$ denser than center
\end{itemize}

\paragraph{3. Sandwich (\texttt{sandwich}).}
\begin{itemize}[leftmargin=*,topsep=2pt,itemsep=1pt]
    \item \textbf{Dense middle band}: 60\% from uniform $x \in [-3, 3]$, $y \in [-0.3, 0.3]$ plus $\mathcal{N}(0, 0.1^2 I)$
    \item \textbf{Sparse top band}: 20\% from uniform $x \in [-3, 3]$, $y \in [1.5, 2.5]$ plus $\mathcal{N}(0, 0.3^2 I)$
    \item \textbf{Sparse bottom band}: 20\% from uniform $x \in [-3, 3]$, $y \in [-2.5, -1.5]$ plus $\mathcal{N}(0, 0.3^2 I)$
    \item \textbf{Density ratio}: Middle band $\approx 10\times$ denser than outer bands
\end{itemize}

% \paragraph{4. Radial Gradient (\texttt{radial\_gradient}).}
% \begin{itemize}[leftmargin=*,topsep=2pt,itemsep=1pt]
%     \item \textbf{Four concentric zones} with decreasing density from center to periphery
%     \item \textbf{Zone 1} (innermost, 40\%): $\mathcal{N}(\mathbf{0}, 0.3^2 I)$
%     \item \textbf{Zone 2} (30\%): Uniform on annulus $r \in [0.8, 1.2]$ plus $\mathcal{N}(0, 0.15^2 I)$
%     \item \textbf{Zone 3} (20\%): Uniform on annulus $r \in [1.6, 2.0]$ plus $\mathcal{N}(0, 0.2^2 I)$
%     \item \textbf{Zone 4} (outermost, 10\%): Uniform on annulus $r \in [2.5, 3.0]$ plus $\mathcal{N}(0, 0.25^2 I)$
% \end{itemize}

\subsection{Training Details}

For each synthetic dataset, we train a standard flow matching model with the following configuration:

\paragraph{Model Architecture.}
\begin{itemize}[leftmargin=*,topsep=2pt,itemsep=1pt]
    \item \textbf{Network}: 4-layer MLP with hidden dimensions [128, 256, 256, 128]
    \item \textbf{Activation}: SiLU (Swish) activation functions
    \item \textbf{Input}: Concatenation of $[z, t]$ where $z \in \mathbb{R}^2$ and $t \in [0,1]$
    \item \textbf{Output}: Velocity field $v_\theta(z,t) \in \mathbb{R}^2$
    \item \textbf{Time encoding}: Sinusoidal positional encoding for $t$ (16 dimensions)
\end{itemize}

\paragraph{Training Hyperparameters.}
\begin{itemize}[leftmargin=*,topsep=2pt,itemsep=1pt]
    \item \textbf{Optimizer}: AdamW with learning rate $3 \times 10^{-4}$, weight decay $10^{-4}$
    \item \textbf{Batch size}: 256
    \item \textbf{Training steps}: 50,000 iterations
    \item \textbf{Loss}: Standard flow matching loss $\mathcal{L} = \mathbb{E}_{t,x_0,x_1}\|\|v_\theta(z_t, t) - (x_1 - z_t)/(1-t)\|\|^2$
    \item \textbf{Training data}: $N = 1{,}000$ samples per dataset
\end{itemize}

\paragraph{Sampling and Evaluation.}
\begin{itemize}[leftmargin=*,topsep=2pt,itemsep=1pt]
    \item \textbf{ODE solver}: Forward Euler with $\mathrm{NFE} = 100$
    \item \textbf{Test samples}: $M = 500$ trajectories per dataset
    \item \textbf{Density estimation}: Ground-truth KDE with Gaussian kernel, bandwidth $h = 0.1$
    \item \textbf{KPE computation}: $E = \frac{1}{2} \sum_{i=0}^{\mathrm{NFE}-1} \|v_\theta(z(t_i), t_i)\|^2 \cdot \Delta t$
\end{itemize}

\newpage
\section{KPE Stability Across Discretization Choices}
\label{app:kpe-stability}

In this section, we verify that the per-sample KPE diagnostic is stable across solver and NFE choices. 

We fix 200 initial noise vectors (seed $=42$) and a single CelebA checkpoint (training step $30{,}000$, in the memorization regime), then re-integrate each trajectory under six ODE configurations: solvers $\in\{$Euler, Heun$\}$ and $\mathrm{NFE} \in\{50,100,200\}$. For each configuration we compute the per-sample KPE and report the Spearman rank correlation between pairs of configurations.

Table~\ref{tab:kpe_bin_consistency} shows that all $\binom{6}{2}=15$ pairwise Spearman rank correlations are $\ge 0.992$. The most adversarial pair (Euler $50$ vs.\ Heun $200$) still yields $\rho=0.992$. Hence the KPE-based ranking of samples is essentially invariant to the discretization choice, and the qualitative groupings (low/mid/high-KPE bins) used throughout the paper transfer across solvers and NFE.

\begin{table}[h]
  \centering
  \footnotesize
  \caption{\textbf{Pairwise Spearman $\rho$ of per-sample KPE rankings across six ODE configurations (CelebA, 200 fixed noise vectors).} All pairwise $\rho \ge 0.992$, confirming that the KPE diagnostic is essentially insensitive to the choice of solver and NFE.}
  \label{tab:kpe_bin_consistency}
  \setlength{\tabcolsep}{8pt}
  \renewcommand{\arraystretch}{1.1}
  \begin{tabular*}{0.9\textwidth}{@{\extracolsep{\fill}}lcccccc@{}}
  \toprule
   & Euler 50 & Euler 100 & Euler 200 & Heun 50 & Heun 100 & Heun 200 \\
  \midrule
  Euler 50  & 1.0000 & 0.9976 & 0.9949 & 0.9926 & 0.9924 & 0.9921 \\
  Euler 100 & 0.9976 & 1.0000 & 0.9991 & 0.9975 & 0.9976 & 0.9974 \\
  Euler 200 & 0.9949 & 0.9991 & 1.0000 & 0.9990 & 0.9992 & 0.9992 \\
  Heun 50   & 0.9926 & 0.9975 & 0.9990 & 1.0000 & 0.9998 & 0.9997 \\
  Heun 100  & 0.9924 & 0.9976 & 0.9992 & 0.9998 & 1.0000 & 0.9999 \\
  Heun 200  & 0.9921 & 0.9974 & 0.9992 & 0.9997 & 0.9999 & 1.0000 \\
  \bottomrule
  \end{tabular*}
\end{table}

\section{Sensitivity of the Local Support Estimator to $k$}
\label{app:knn-k-sensitivity}

Table~\ref{tab:feature_space_robustness} in the main text shows that the negative KPE--support correlation is robust across feature spaces. In this section, we complement that result with a sensitivity analysis of the $k$-NN neighborhood size $k$, the only free hyperparameter of the local support estimator used throughout \S\ref{subsec:finding2}.

We re-run the $k$-NN estimator on CIFAR-10 (descriptors feature space) for $k\in\{5,10,20,50,100\}$ at two extreme NFE settings ($\mathrm{NFE}=150$ and $\mathrm{NFE}=10$) and report Spearman $\rho$ and Cliff's $\delta$ between KPE and the resulting log-support. Table~\ref{tab:knn_k_sensitivity} shows that the negative correlation is stable across the entire range of $k$, with $\rho$ varying by at most $\pm 0.05$ around the $k=50$ value used in the main text and $|\delta|\ge 0.70$ throughout. Hence the qualitative findings of \S\ref{subsec:finding2} do not hinge on a particular choice of $k$.

\begin{table}[h]
  \centering
  \footnotesize
  \caption{\textbf{$k$-NN $k$ sensitivity (CIFAR-10, Descriptors).} Spearman $\rho$ and Cliff's $\delta$ between KPE and $k$-NN log-support for $k\in\{5,10,20,50,100\}$ at $\mathrm{NFE}\in\{10,150\}$. The negative KPE--support relation is stable across $k$; the $k=50$ row corresponds to the setting used in the main text.}
  \label{tab:knn_k_sensitivity}
  \setlength{\tabcolsep}{6pt}
  \renewcommand{\arraystretch}{1.0}
  \begin{tabular*}{0.55\textwidth}{@{\extracolsep{\fill}}ccccc@{}}
  \toprule
   & \multicolumn{2}{c}{$\mathrm{NFE}=150$} & \multicolumn{2}{c}{$\mathrm{NFE}=10$} \\
  \cmidrule(lr){2-3}\cmidrule(lr){4-5}
  $k$ & $\rho~\downarrow$ & $\delta~\downarrow$ & $\rho~\downarrow$ & $\delta~\downarrow$ \\
  \midrule
  5   & $-0.57$ & $-0.87$ & $-0.44$ & $-0.70$ \\
  10  & $-0.62$ & $-0.91$ & $-0.59$ & $-0.87$ \\
  20  & $-0.63$ & $-0.92$ & $-0.60$ & $-0.86$ \\
  \textbf{50}  & $\mathbf{-0.65}$ & $\mathbf{-0.93}$ & $\mathbf{-0.54}$ & $\mathbf{-0.83}$ \\
  100 & $-0.63$ & $-0.91$ & $-0.56$ & $-0.83$ \\
  \bottomrule
  \end{tabular*}
\end{table}

%% file: sec/appendix-kts-stability-cam.tex
\section{Stability Bound for Kinetic Trajectory Shaping}
\label{app:kts-stability}

In this section, we provide a stability bound to make precise the sense in which Kinetic Trajectory Shaping (KTS) induces only a bounded shift in the generated samples, despite modifying the underlying ODE dynamics.

\subsection{Setup: KTS as time-dependent velocity rescaling}
\label{app:kts-stability-setup}

Recall the neural FM's  ODE sampler and its KTS-modified counterpart:
\begin{equation}
\label{eq:kts-base-ode}
  \dot x(t) = v_\theta(x(t), t),
  \qquad
  \dot{\tilde{x}}(t) = \eta(t)\, v_\theta(\tilde{x}(t), t),
  \qquad
  x(0)=\tilde{x}(0)=x_0 \sim \mathcal{N}(0, I),
\end{equation}
with the gain $\eta(t) > 0$ given in Eq.~\eqref{eq:eta_schedule}:
\begin{equation*}
  \eta(t) =
  \begin{cases}
    1 + \alpha_0\,(1 - t/\tau_{\text{split}}), & t < \tau_{\text{split}}, \\[2pt]
    1 - \beta_0\,\big(\exp(k(t-\tau_{\text{split}}))-1\big), & t \ge \tau_{\text{split}}.
  \end{cases}
\end{equation*}
Because $\eta(t)>0$ for the hyperparameters considered, the KTS's ODE preserves the direction of the velocity field; the only change is a time-dependent rescaling of its magnitude.  We first collect two observations that will be useful later.

\paragraph{(1) Small perturbation regime.}
For the balanced setting $(\alpha_0, \beta_0)=(0.01, 0.01)$ used in our experiments, with $\tau_{\text{split}}=0.6$ and $k=3$:
\begin{itemize}[leftmargin=*, topsep=2pt, itemsep=1pt]
  \item In the launch phase, $|\eta(t)-1| = \alpha(t) \le \alpha_0 = 0.01$ for $t \in [0, \tau_{\text{split}})$, and $\alpha(\tau_{\text{split}})=0$.
  \item In the soft landing phase, $|\eta(t)-1| = \beta(t)$ with maximum
  $\beta(1) = \beta_0\big(\exp(k(1-\tau_{\text{split}})) - 1\big) = 0.01\big(\exp(1.2) - 1\big) \approx 0.022$.
\end{itemize}

The deviation $|\eta(t)-1|$ thus stays below $0.025$ over the entire trajectory. In particular, the gain schedule remains bounded and controlled.
%does not introduce any $1/(1-t)$-type terminal singularity.
%The deviation $|\eta(t)-1|$ thus stays below $0.025$ over the entire trajectory and is by construction bounded away from the $1/(1-t)$ singularity that drives memorization.

\paragraph{(2) Direction preservation.}
Because the KTS gain only rescales the velocity by a positive scalar, 
%the KTS field $\eta(t)v_\theta$ and the base field $v_\theta$ share zeros and instantaneous directions at each fixed time. Thus 
KTS preserves direction-based structure of the learned field while modifying the speed at which trajectories move along it.
%Because the KTS gain only rescales, the KTS field $\eta(t)\,v_\theta$ and the base field $v_\theta$ share zeros and stationary directions. Any structural property of $v_\theta$ that does not depend on the magnitude of the velocity (e.g.\ the geometry of attractors, the direction of late-time pull-back toward training atoms in EFM) is preserved up to a time reparameterization.

\subsection{A Gr\"onwall-type stability bound}
\label{app:kts-stability-gronwall}

We now formalize how the small magnitude deviation $|\eta(t)-1|$ propagates to a bound on the terminal sample shift $\|\tilde{x}(1) - x(1)\|$.

\begin{assumption}[Uniform Lipschitz velocity]
\label{ass:lipschitz-v}
There exists a constant $L \ge 0$ such that, for all $t \in [0,1]$ and all $x, y \in \mathbb{R}^d$,
\begin{equation}
\label{eq:lipschitz-v}
  \| v_\theta(x, t) - v_\theta(y, t) \| \;\le\; L\, \| x - y \|.
\end{equation}
\end{assumption}

This is a standard sufficient assumption for well-posedness of ODE samplers in diffusion and flow-based models. We do \emph{not} require Lipschitz continuity in $t$, and the uniform spatial Lipschitz assumption above could be relaxed to a local Lipschitz condition at the cost of additional technicality.
%This is the standard assumption used to obtain well-posedness of ODE samplers for diffusion and flow matching models. We do \emph{not} require Lipschitzness in $t$, and the above uniform spatial Lipschitz assumption could also be relaxed to a local Lipschitz version at a cost of higher technicality.

\begin{proposition}[Stability bound for KTS]
\label{prop:kts-stability}
Let $x(t)$ and $\tilde{x}(t)$ be the base and KTS-modified trajectories in Eq.~\eqref{eq:kts-base-ode}, with the same initial condition $x(0)=\tilde{x}(0)$. Under Assumption~\ref{ass:lipschitz-v}, for every $t \in [0,1]$,
\begin{equation}
\label{eq:kts-stability-bound}
  \| \tilde{x}(t) - x(t) \|
  \;\le\;
  \exp\!\left( L \int_0^t |\eta(s)|\, ds \right)
  \int_0^t |\eta(s) - 1|\, \| v_\theta(x(s), s) \|\, ds.
\end{equation}
In particular, the terminal shift satisfies
\begin{equation}
\label{eq:kts-stability-terminal}
  \| \tilde{x}(1) - x(1) \|
  \;\le\;
  \exp\!\left( L \int_0^1 |\eta(s)|\, ds \right)
  \int_0^1 |\eta(s) - 1|\, \| v_\theta(x(s), s) \|\, ds.
\end{equation}
\end{proposition}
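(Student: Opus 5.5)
Set $e(t) := \tilde x(t) - x(t)$. We work with the integral forms of the two ODEs in Eq.~\eqref{eq:kts-base-ode}, which share the initial condition:
\[
e(t) = \int_0^t \big[\eta(s)\,v_\theta(\tilde x(s),s) - v_\theta(x(s),s)\big]\,ds .
\]
We add and subtract $\eta(s)\,v_\theta(x(s),s)$ inside the integrand. This splits it into a \emph{propagation} term $\eta(s)\big[v_\theta(\tilde x(s),s) - v_\theta(x(s),s)\big]$ and a \emph{forcing} term $(\eta(s)-1)\,v_\theta(x(s),s)$.

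The forcing term depends only on the base trajectory. We therefore treat it as a known source and never need to control the KTS trajectory's velocity magnitude. The propagation term is bounded using Assumption~\ref{ass:lipschitz-v}:
\[
\|\eta(s)[v_\theta(\tilde x(s),s)-v_\theta(x(s),s)]\| \le L\,|\eta(s)|\,\|e(s)\|.
\]
Taking norms and applying the triangle inequality gives the integral inequality
\[
\|e(t)\| \le a(t) + \int_0^t L|\eta(s)|\,\|e(s)\|\,ds,
\qquad
a(t) := \int_0^t |\eta(s)-1|\,\|v_\theta(x(s),s)\|\,ds .
\]

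Next we apply the integral form of Gr\"onwall's inequality with the nonnegative kernel $\kappa(s) = L|\eta(s)|$. The forcing $a(t)$ is nondecreasing in $t$, so the version for nondecreasing forcing gives
\[
\|e(t)\| \le a(t)\exp\!\Big(\int_0^t L|\eta(s)|\,ds\Big).
\]
This is exactly \eqref{eq:kts-stability-bound}, and setting $t=1$ yields \eqref{eq:kts-stability-terminal}.

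If one prefers to avoid quoting that version, we give a direct argument. Define $\phi(t) = \int_0^t \kappa\|e\|$. Then $\phi' \le \kappa(a+\phi)$, so $\big(\phi\, e^{-K(t)}\big)' \le \kappa\, a\, e^{-K}$, where $K(t) = \int_0^t \kappa$. Integrating and using that $a$ is nondecreasing gives $\phi(t) \le a(t)\big(e^{K(t)} - 1\big)$. Substituting back into the integral inequality gives the claim.

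The main technical point is regularity rather than the estimate itself. $\eta$ is piecewise continuous, with a jump at $\tau_{\text{split}}$, and $v_\theta$ is only assumed measurable in $t$. So the trajectories must be understood as absolutely continuous (Carath\'eodory) solutions, and we need $s \mapsto \|e(s)\|$ and $s \mapsto \|v_\theta(x(s),s)\|$ to be integrable so that $a$ and $\phi$ are well defined and the differentiation step holds almost everywhere.

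We handle this by working only with the integral formulation and integrating the differential inequality for $\phi$ in the a.e. sense. The function $\phi$ is absolutely continuous because $\kappa$ is bounded ($\eta$ is bounded on $[0,1]$) and $\|e\|$ is continuous. Existence and uniqueness of both trajectories on $[0,1]$ follow from the uniform spatial Lipschitz bound together with this boundedness of $\eta$. We state this as the standing interpretation and otherwise take finiteness of $a(1)$ as given; if it is infinite, the bound is trivially true.
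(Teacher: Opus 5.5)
Your proof is correct and follows the paper's argument. You add and subtract $\eta(s)\,v_\theta(x(s),s)$ to split the error dynamics into a Lipschitz propagation term bounded by $L|\eta|\,\|e\|$ and a forcing term $|\eta-1|\,\|v_\theta(x(s),s)\|$ that depends only on the base trajectory, then close with Gr\"onwall. The only difference is cosmetic: you use the integral inequality with the nondecreasing-forcing form of Gr\"onwall, whereas the paper differentiates $\|\Delta(t)\|$ almost everywhere and applies the differential form, and both yield the same bound.
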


\begin{proof}
Let $\Delta(t) := \tilde{x}(t) - x(t)$, so $\Delta(0) = 0$. From Eq.~\eqref{eq:kts-base-ode},
\begin{align*}
  \dot{\Delta}(t)
  &= \eta(t)\, v_\theta(\tilde{x}(t), t) - v_\theta(x(t), t) \\
  &= \eta(t)\,\big[v_\theta(\tilde{x}(t), t) - v_\theta(x(t), t)\big]
     + \big(\eta(t) - 1\big)\, v_\theta(x(t), t).
\end{align*}
Taking norms and applying the triangle inequality together with Assumption~\ref{ass:lipschitz-v},
\begin{equation}
\label{eq:kts-stability-pre-gronwall}
  \frac{d}{dt}\,\|\Delta(t)\|
  \;\le\;
  \|\dot\Delta(t)\|
  \;\le\;
  L\,|\eta(t)|\,\|\Delta(t)\|
  + |\eta(t)-1|\,\|v_\theta(x(t),t)\|,
\end{equation}
where the left inequality follows from the standard fact that $\tfrac{d}{dt}\|\Delta(t)\| \le \|\dot\Delta(t)\|$ holds almost everywhere whenever $\Delta(\cdot)$ is absolutely continuous.

Eq.~\eqref{eq:kts-stability-pre-gronwall} is a scalar linear differential inequality of the form $\tfrac{d}{dt} y(t) \le a(t)\, y(t) + b(t)$ with $a(t) = L|\eta(t)|$ and $b(t) = |\eta(t)-1|\,\|v_\theta(x(t),t)\|$. The standard Gr\"onwall inequality then yields, for $y(0) = 0$,
\begin{equation*}
  y(t)
  \;\le\;
  \int_0^t b(s)\, \exp\!\Big(\int_s^t a(r)\, dr\Big)\, ds
  \;\le\;
  \exp\!\Big(\int_0^t a(r)\, dr\Big) \int_0^t b(s)\, ds,
\end{equation*}
which is exactly Eq.~\eqref{eq:kts-stability-bound}.
\end{proof}

\begin{remark}[Interpreting the bound]
\label{rem:kts-stability-interpretation}
Eq.~\eqref{eq:kts-stability-terminal} factorizes the terminal sample shift into:
\begin{itemize}[leftmargin=*, topsep=2pt, itemsep=1pt]
  \item a \emph{Lipschitz amplification factor} $\exp(L\int_0^1 |\eta(s)|\,ds)$, intrinsic to the base FM model;
  \item a \emph{design-controlled deviation} $\int_0^1 |\eta(s)-1|\, \|v_\theta(x(s),s)\|\, ds$, which we can make small by choosing $\alpha_0, \beta_0 \ll 1$.
\end{itemize}
Because $|\eta(s)|\le 1+\max\{|\alpha(s)|,|\beta(s)|\}\le 1.025$ in our balanced setting,  the amplification factor is close to that of the base FM ODE. The leading perturbation enters through the second integral, which scales linearly with $\alpha_0$ and $\beta_0$ for fixed base trajectory.
%the amplification factor is essentially the same as for the base FM ODE; the entire effect of KTS is concentrated in the second integral, which scales linearly in $\alpha_0$ and $\beta_0$. 
Thus KTS induces a sample shift that vanishes as the hyperparameters tend to zero, and is bounded for the small values used in practice.
\end{remark}

\subsection{Empirical confirmation}
\label{app:kts-stability-empirical}

Proposition~\ref{prop:kts-stability} is a worst-case bound. In practice, the shift is far smaller and does not appear to substantially distort the generated distribution.
%in practice the shift is far smaller and, crucially, does not distort the generated distribution. 
Two observations support this:
\paragraph{Magnitude rescaling stays below $2.5\%$.}
Under the balanced setting, $|\eta(t)-1|$ never exceeds $\approx 0.025$ along any trajectory. 
This bounded, dimensionless gain perturbation is qualitatively different from the unbounded terminal amplification in the closed-form EFM solution (cf.\ \S\ref{sec:kpe_paradox}), so KTS operates in a small perturbation regime relative to the base neural FM sampler.

%This is several orders of magnitude smaller than the velocity magnitudes encountered near the terminal singularity of the closed-form EFM solution (cf.\ \S\ref{sec:kpe_paradox}), so the KTS gain operates entirely in the small-perturbation regime where Proposition~\ref{prop:kts-stability} is informative.

\paragraph{Precision/Recall on ImageNet-256 are essentially unchanged.}
If the KTS-induced shift caused substantive distributional change, it would show up in Precision/Recall, which separately track sample quality (precision) and coverage (recall). Table~\ref{tab:performance_comparison_imagenet} (main text) reports:
\begin{itemize}[leftmargin=*, topsep=2pt, itemsep=1pt]
  \item FM baseline: Precision $0.728$, Recall $0.655$;
  \item Balanced KTS ($\alpha_0=\beta_0=0.01$): Precision $0.729$, Recall $0.653$;
  \item Quality-focused KTS ($\alpha_0=0.05, \beta_0=0$): Precision $0.731$, Recall $0.630$;
  \item Coverage-focused KTS ($\alpha_0=0, \beta_0=0.05$): Precision $0.721$, Recall $0.657$.
\end{itemize}
The balanced setting moves both metrics by at most $0.003$, consistent with a small, distributionally benign perturbation. The directional asymmetry between quality-focused and coverage-focused KTS is also consistent with the intended interpretation of $\alpha_0$ (boost early kinetic effort $\to$ slightly higher precision) and $\beta_0$ (damp late kinetic spikes $\to$ slightly higher recall by mitigating terminal collapse toward training atoms).

%The balanced setting moves both metrics by at most $0.003$, consistent with a small, distributionally benign perturbation. The directional asymmetry between quality-focused and coverage-focused KTS additionally confirms the intended interpretation of $\alpha_0$ (boost early kinetic effort $\to$ slightly higher precision) and $\beta_0$ (damp late kinetic spikes $\to$ slightly higher recall by mitigating terminal collapse onto training atoms).

\subsection{Comparison with inference-time noise injection}
\label{app:kts-vs-noise}

We compare KTS against the simplest training-free alternative, i.e., additive Gaussian noise injected into the velocity at each ODE step, under the same trained model and sampler. Table~\ref{tab:kts_vs_noise} shows that noise injection only marginally reduces $F_{\text{mem}}$ and worsens FID, while balanced KTS attains both the best FID and the lowest $F_{\text{mem}}$, confirming that the gain comes from energy-aware structure rather than generic stochastic perturbation.

\begin{table}[H]
  \centering
  \footnotesize
  \caption{\textbf{KTS vs.\ inference-time noise injection (CelebA $32\times32$, $\mathrm{NFE}=100$, 10K samples).} All methods use the same trained FM model; only the sampler is modified.}
  \label{tab:kts_vs_noise}
  \setlength{\tabcolsep}{6pt}
  \renewcommand{\arraystretch}{1.0}
  \begin{tabular}{lcc}
  \toprule
  Method & FID@10k $\downarrow$ & $F_{\text{mem}}$ (\%) $\downarrow$ \\
  \midrule
  FM baseline (Euler ODE)              & 16.68 & 37.34 \\
  ${}+$ noise injection ($\sigma=0.005$) & 17.48 & 35.31 \\
  ${}+$ noise injection ($\sigma=0.01$)  & 37.33 & 33.25 \\
  \textbf{KTS} ($\alpha_0{=}\beta_0{=}0.01$) & \textbf{14.35} & \textbf{31.22} \\
  \bottomrule
  \end{tabular}
\end{table}

%% file: sec/appendix-figures.tex
\section{Additional Visualizations: Toy 2D Generation and Dynamics}
In this section, we provide additional visualization results on the 2D synthetic datasets.

\label{app:toy2d_generation_dynamics}
\FloatBarrier

\begin{figure}[htbp]
  \centering
  \label{fig:toy_generation_comparison_appendix}
  \setlength{\tabcolsep}{2.0pt}
  \renewcommand{\arraystretch}{0.9}
  \footnotesize
  \begin{tabular}{@{}ccc@{}}
    \textbf{Real Data Distribution} & \textbf{Vanilla FM (NN) Samples} & \textbf{Empirical FM (Closed-form) Samples} \\
    \includegraphics[width=0.32\linewidth]{figs/toy-KPE-denisty/dense_sparse/real_data_distribution.pdf}
    & \includegraphics[width=0.32\linewidth]{figs/toy-KPE-denisty/dense_sparse/vanilla_fm_generation.pdf}
    & \includegraphics[width=0.32\linewidth]{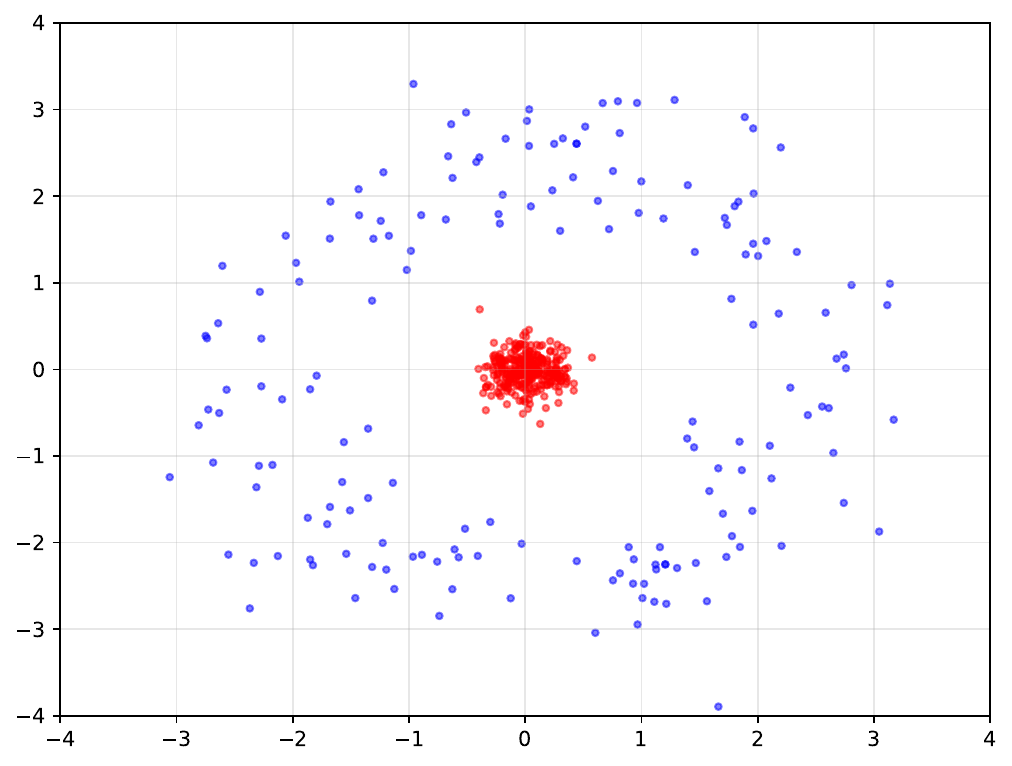}
    \\[-0.4ex]
    \multicolumn{3}{c}{\small\texttt{dense\_sparse}}
    \\[0.2ex]
    \includegraphics[width=0.32\linewidth]{figs/toy-KPE-denisty/multiscale_clusters/real_data_distribution.pdf}
    & \includegraphics[width=0.32\linewidth]{figs/toy-KPE-denisty/multiscale_clusters/vanilla_fm_generation.pdf}
    & \includegraphics[width=0.32\linewidth]{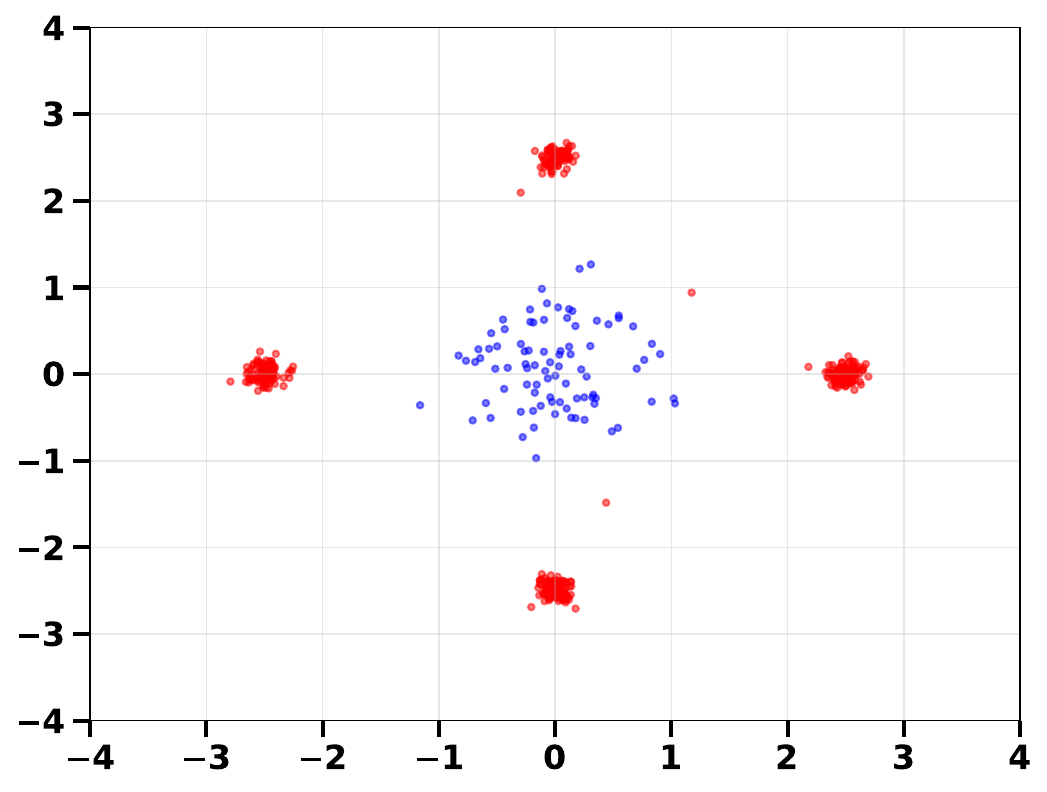}
    \\[-0.4ex]
    \multicolumn{3}{c}{\small\texttt{multiscale\_clusters}}
    \\[0.2ex]
    \includegraphics[width=0.32\linewidth]{figs/toy-KPE-denisty/sandwich/real_data_distribution.pdf}
    & \includegraphics[width=0.32\linewidth]{figs/toy-KPE-denisty/sandwich/vanilla_fm_generation.pdf}
    & \includegraphics[width=0.32\linewidth]{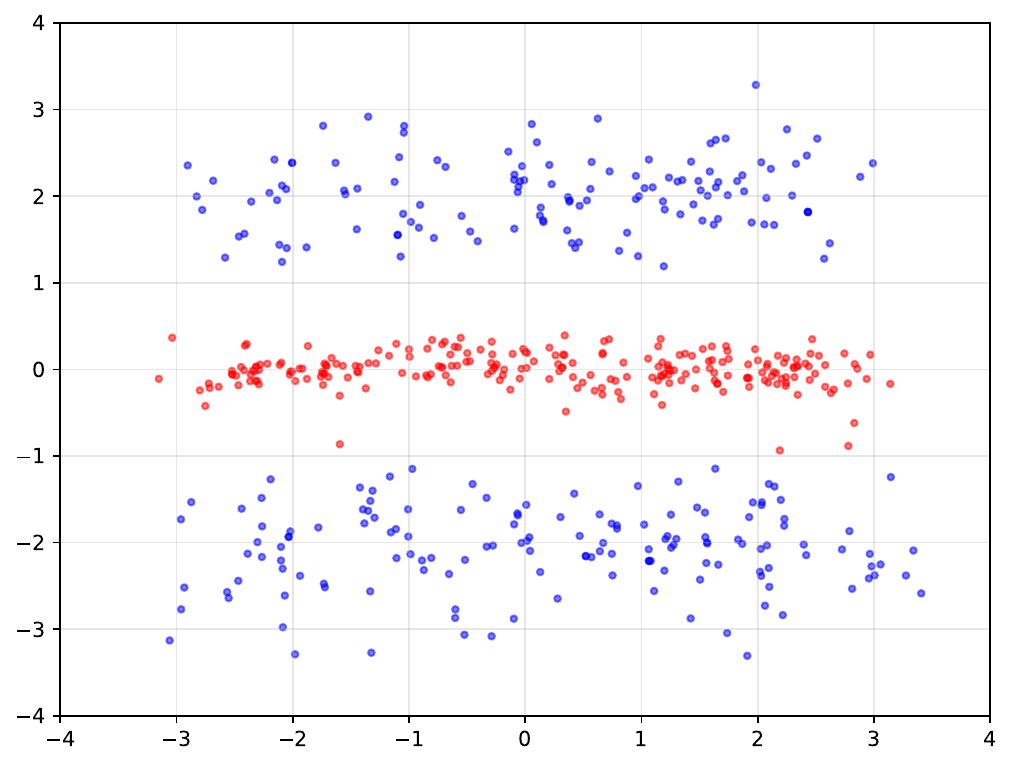}
    \\[-0.4ex]
    \multicolumn{3}{c}{\small\texttt{sandwich}}
  \end{tabular}
  \caption{\textbf{Toy 2D generations: Real vs.\ Vanilla FM vs.\ Empirical FM.} For each dataset (row), we compare the target distribution (left) with samples generated by a neural vanilla FM (middle) and the empirical closed-form FM solution (right), using the same bridge family.}
\end{figure}

\begin{figure}[tbp]
  \centering
  \label{fig:toy_dynamics_appendix}
  \setlength{\tabcolsep}{2.0pt}
  \renewcommand{\arraystretch}{0.9}
  \footnotesize
  \begin{tabular}{@{}cc@{}}
    \textbf{Trajectories} & \textbf{Velocity fields} \\
    \includegraphics[width=0.48\linewidth]{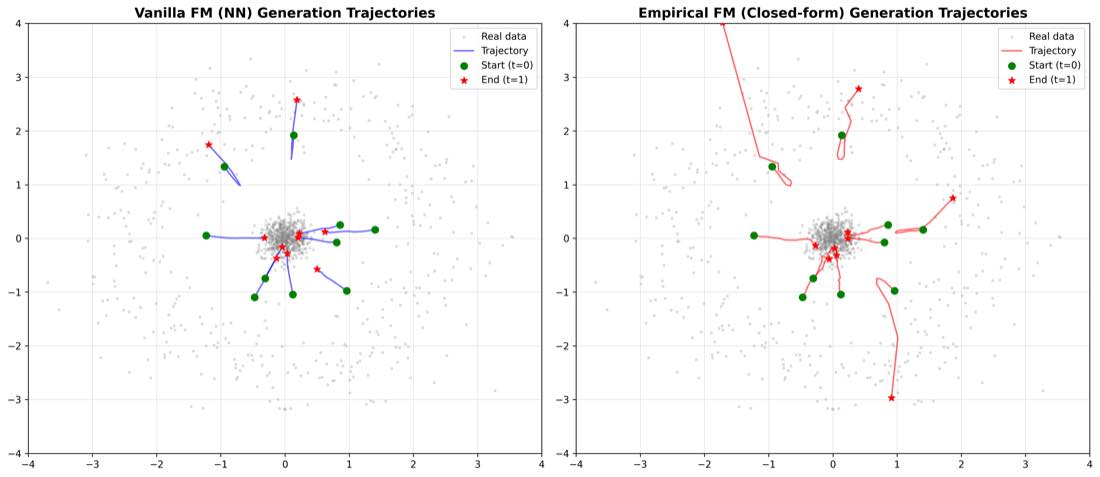}
    & \includegraphics[width=0.48\linewidth]{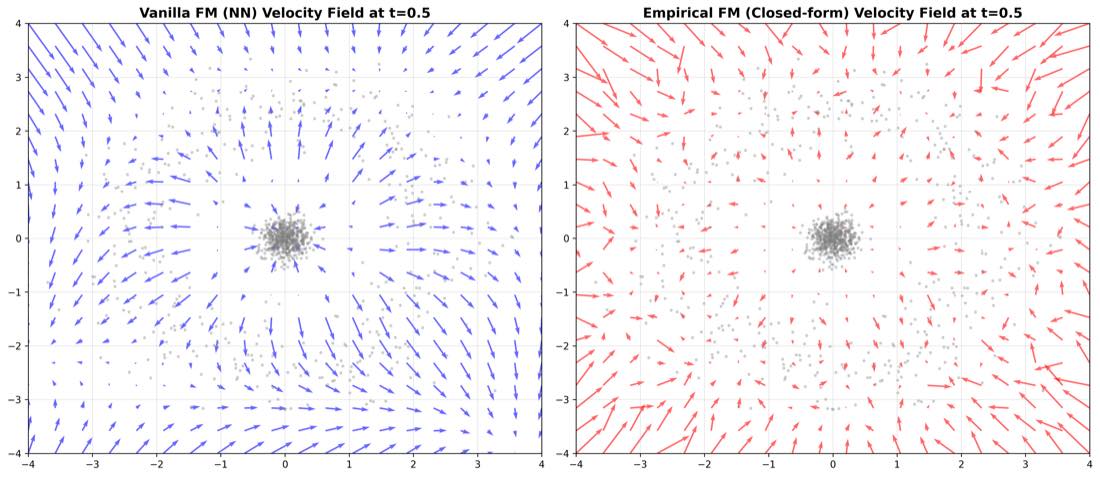}
    \\[-0.4ex]
    \multicolumn{2}{c}{\small\texttt{dense\_sparse}}
    \\[0.2ex]
    \includegraphics[width=0.48\linewidth]{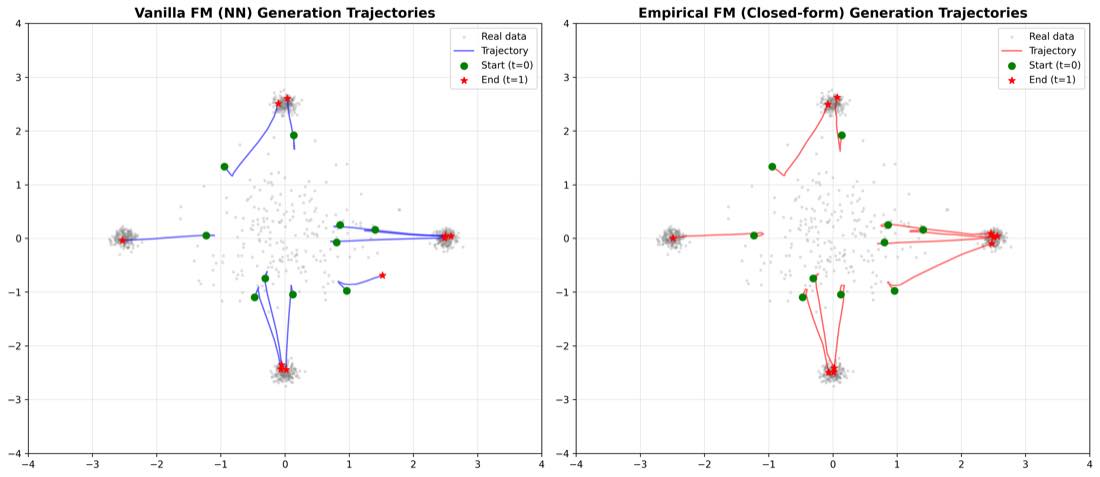}
    & \includegraphics[width=0.48\linewidth]{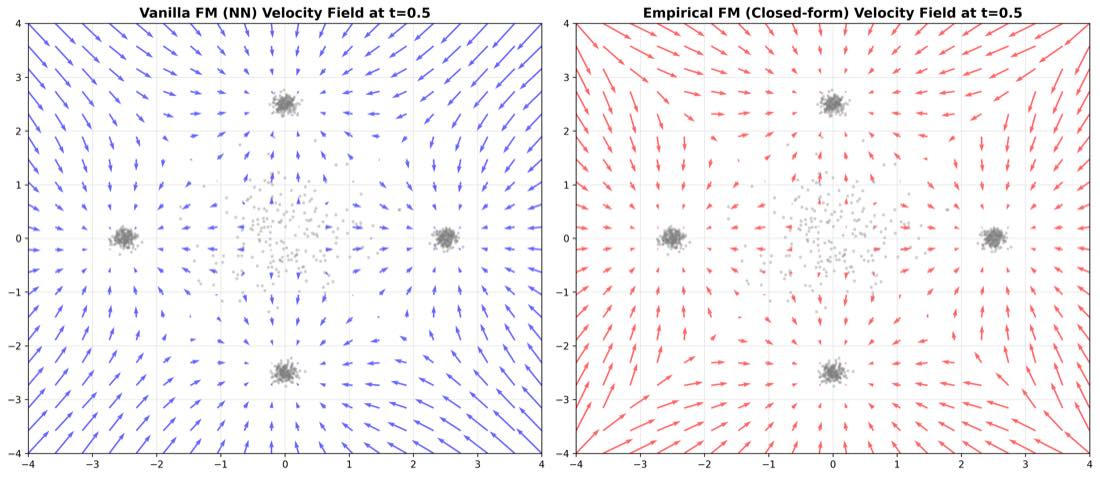}
    \\[-0.4ex]
    \multicolumn{2}{c}{\small\texttt{multiscale\_clusters}}
    \\[0.2ex]
    \includegraphics[width=0.48\linewidth]{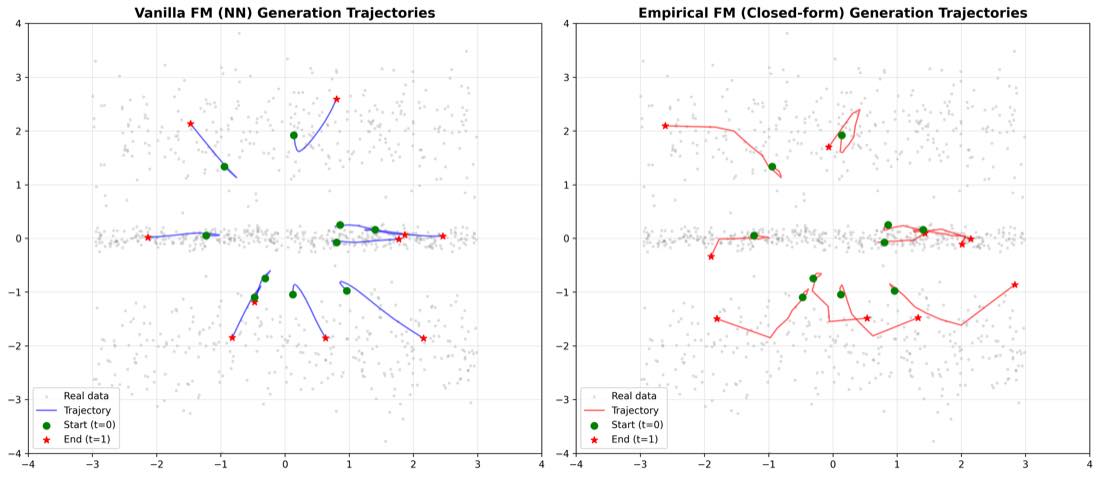}
    & \includegraphics[width=0.48\linewidth]{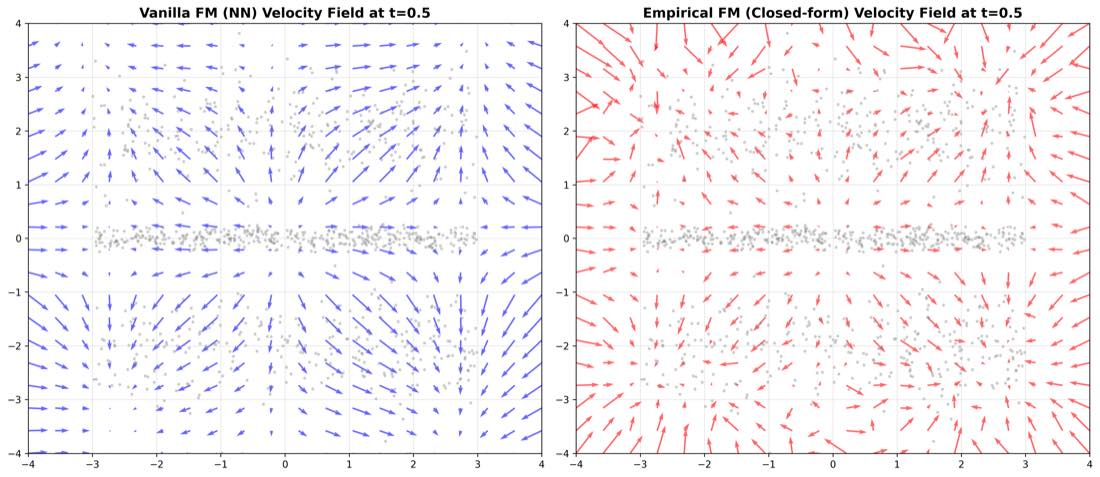}
    \\[-0.4ex]
    \multicolumn{2}{c}{\small\texttt{sandwich}}
  \end{tabular}
  \caption{\textbf{Toy 2D dynamics: trajectories and velocity fields.} For each dataset (row), we visualize sampled trajectories under the learned/closed-form flows (left) and the corresponding velocity field structure (right). These dynamics complement the power/energy plots in the main text by showing \emph{where} and \emph{how} the flows move mass over time.}
\end{figure}

%% file: sec/appendix-vis.tex
\newpage
\section{Additional Visualizations of KPE vs. Semantic Strength}
\label{app:appendix-vis}

In this section, we provide qualitative visual comparisons across diverse ImageNet-256 classes to demonstrate the consistent semantic quality differences between high-energy and low-energy trajectories.

\begin{figure}[htbp]
    \vspace{-3mm}
    \centering
    \includegraphics[width=0.8\textwidth]{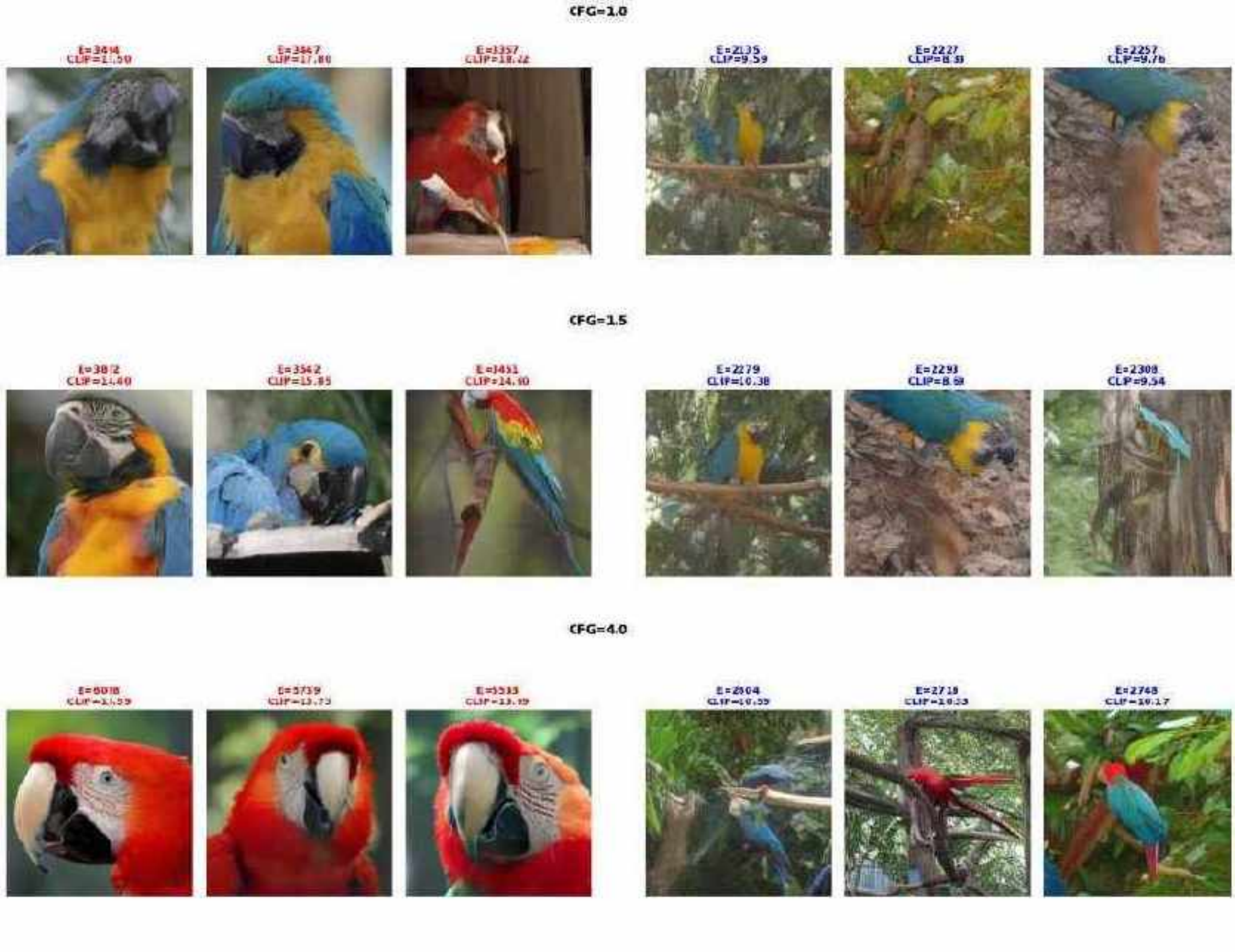}
    \caption{\textbf{Macaw} (ImageNet-256): High-KPE (left) vs. low-KPE (right) across CFG scales 1.0, 1.5, 4.0. Higher KPE yields richer semantic details, vibrant colors, and sharper textures.}
    \label{fig:semantic_vis_macaw}
\end{figure}

\begin{figure}[htbp]
    \vspace{-3mm}
    \centering
    \includegraphics[width=0.8\textwidth]{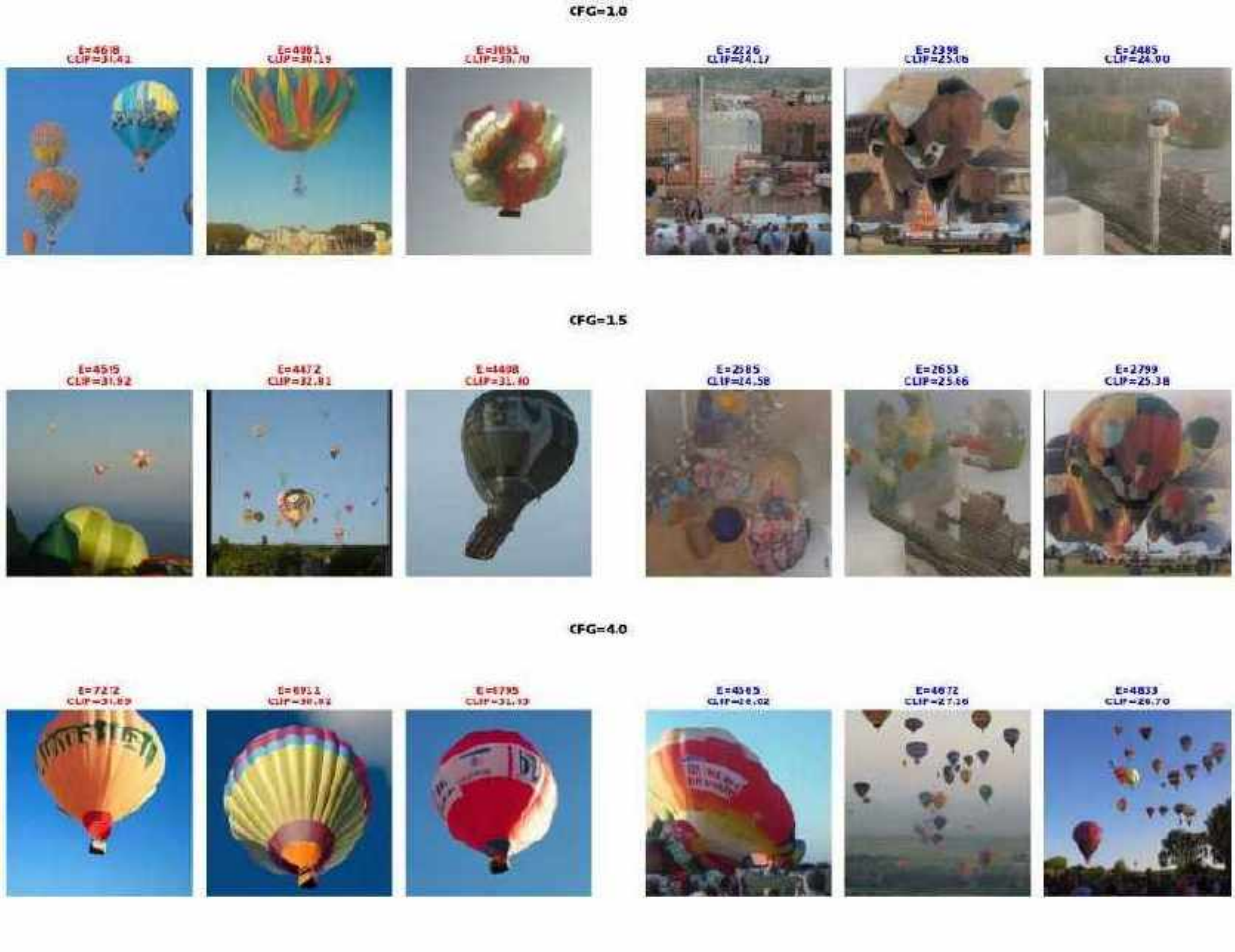}
    \caption{\textbf{Hot Air Balloon} (ImageNet-256): High-KPE (left) vs. low-KPE (right) across CFG scales 1.0, 1.5, 4.0. Higher KPE shows clearer structures and better color saturation.}
    \label{fig:semantic_vis_balloon}
\end{figure}

\begin{figure}[htbp]
    \centering
    \includegraphics[width=0.8\textwidth]{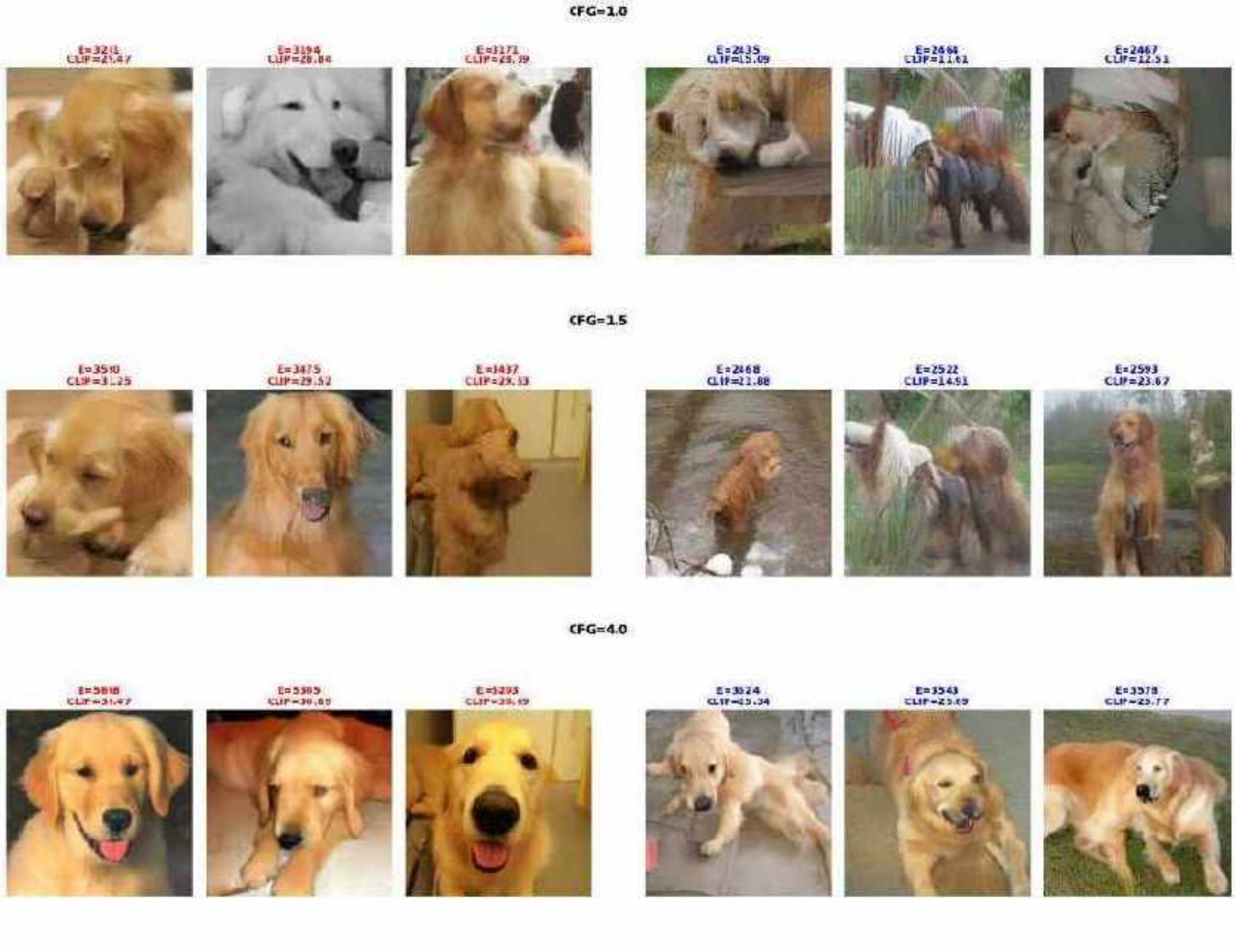}
    \caption{\textbf{Golden Retriever} (ImageNet-256): High-KPE (left) vs. low-KPE (right) across CFG scales 1.0, 1.5, 4.0. Higher KPE produces finer textures and clearer facial features.}
    \label{fig:semantic_vis_golden_retriever}
\end{figure}

\begin{figure}[htbp]
    \centering
    \includegraphics[width=0.8\textwidth]{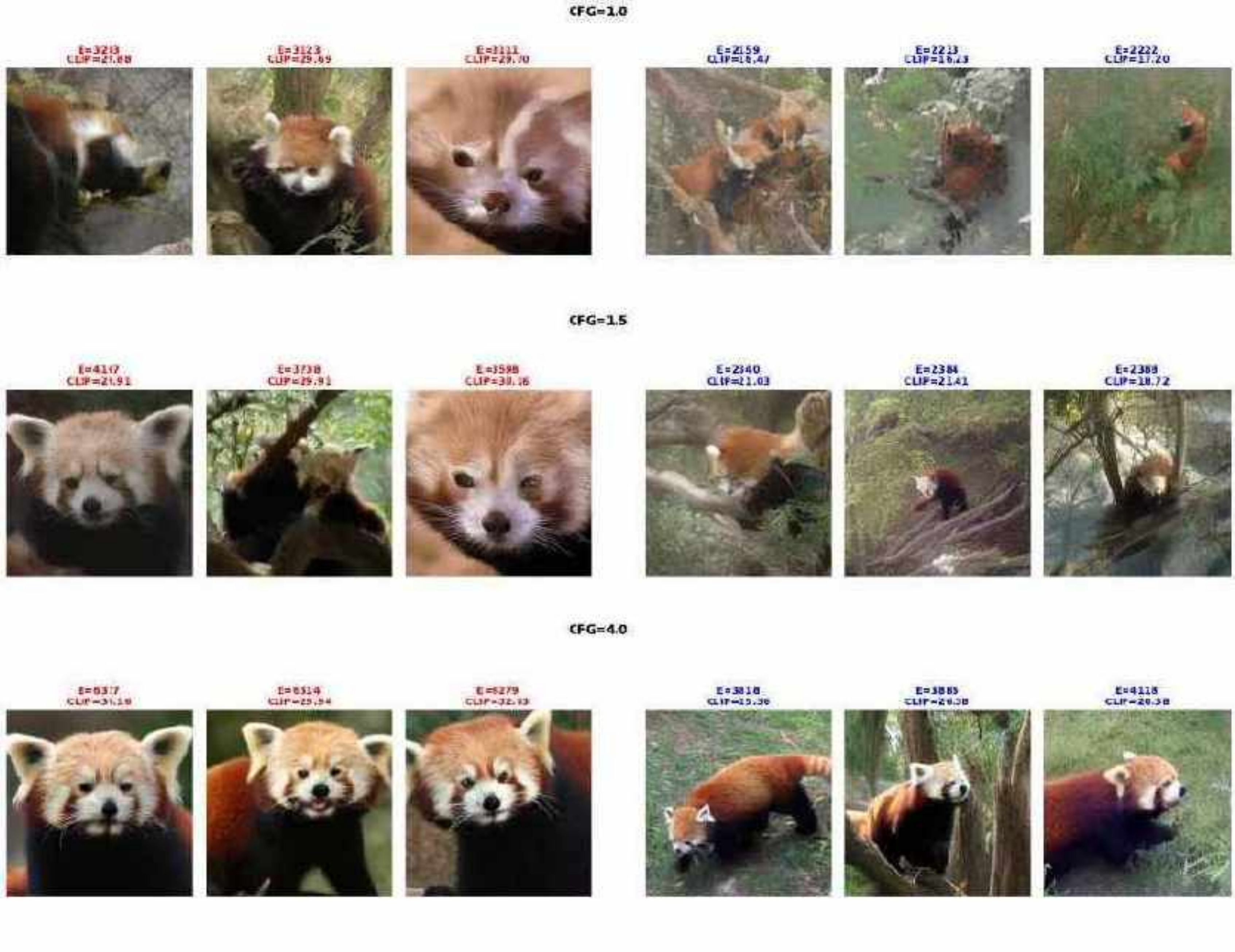}
    \caption{\textbf{African Elephant} (ImageNet-256): High-KPE (left) vs. low-KPE (right) across CFG scales 1.0, 1.5, 4.0. Higher KPE shows more defined features and better skin texture.}
    \label{fig:semantic_vis_african_elephant}
\end{figure}

\begin{figure}[htbp]
    \centering
    \includegraphics[width=0.8\textwidth]{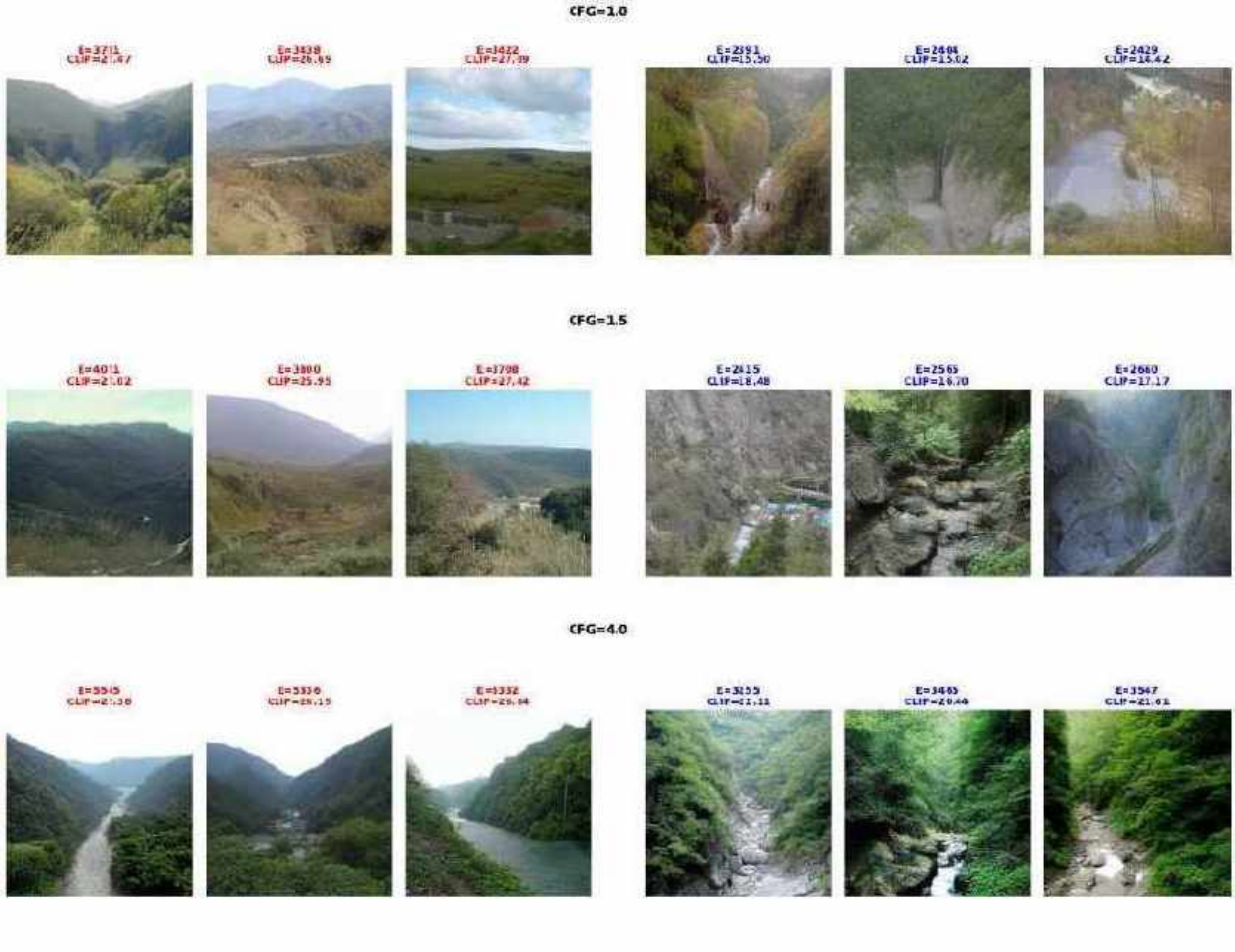}
    \caption{\textbf{Valley} (ImageNet-256): High-KPE (left) vs. low-KPE (right) across CFG scales 1.0, 1.5, 4.0. Higher KPE generates more detailed terrain and better depth perception.}
    \label{fig:semantic_vis_valley}
\end{figure}

\begin{figure}[htbp]
    \centering
    \includegraphics[width=0.8\textwidth]{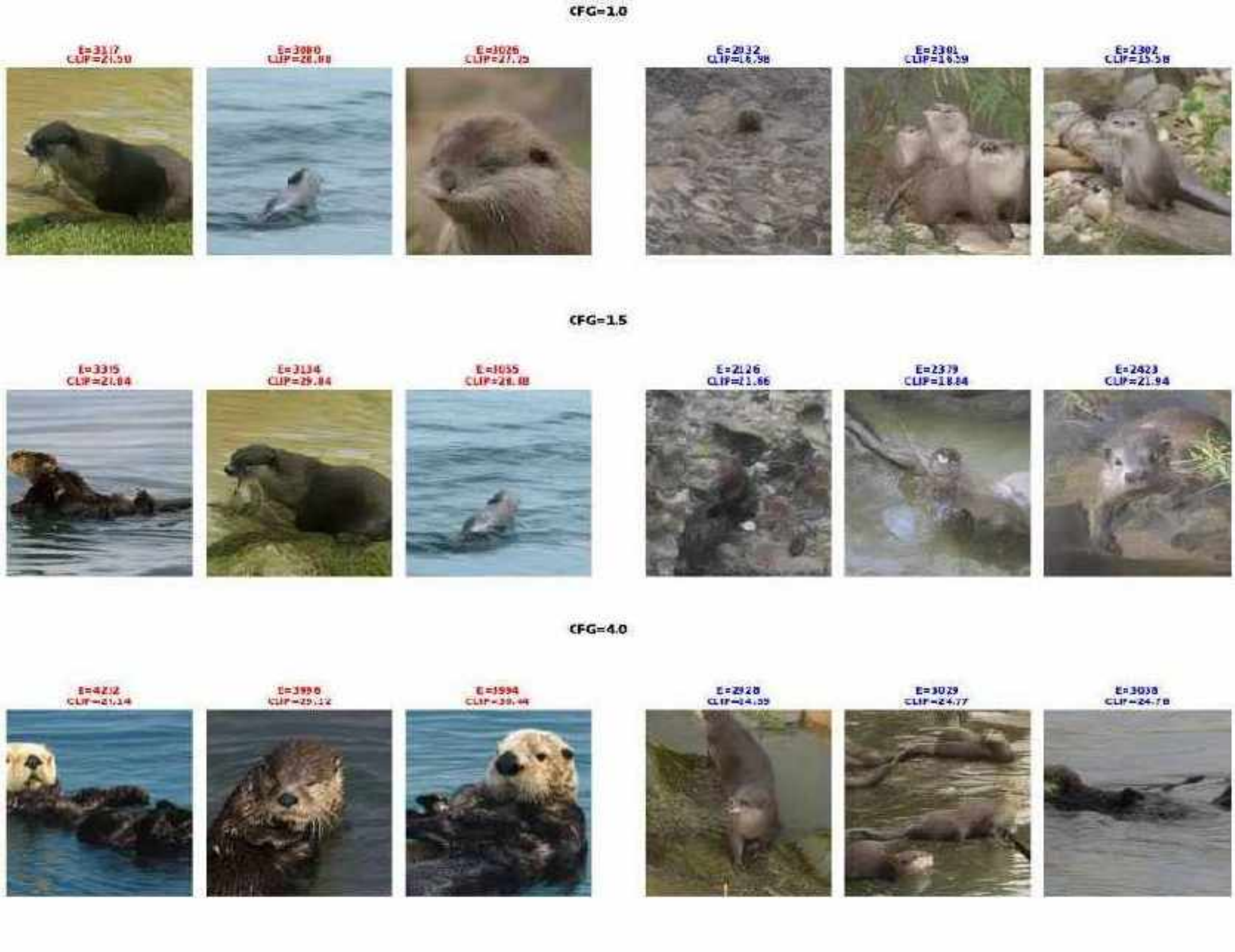}
    \caption{\textbf{Otter} (ImageNet-256): High-KPE (left) vs. low-KPE (right) across CFG scales 1.0, 1.5, 4.0. Higher KPE shows sharper outlines and more realistic details.}
    \label{fig:semantic_vis_otter}
\end{figure}